\definecolor{darkred}{rgb}{0.6,0.1,0.1}
\definecolor{darkgreen}{rgb}{0.1,0.6,0.1}
\definecolor{darkblue}{rgb}{0.1,0.1,0.6}
\def\dd{\mathrm{d}}
\def\eps{\varepsilon}
\def\lp{\left(}
\def\rp{\right)}
\def\la{\left|}
\def\ra{\right|}
\def\ls{\left[}
\def\rs{\right]}
\def\bbP{\mathbb{P}}
\def\bbR{\mathbb{R}}
\def\rmc{\mathrm{c}}
\def\Ck#1{\mathrm{C}^{#1}}
\def\Lp#1{\mathrm{L}^{#1}}
\def\Hk#1{\mathrm{H}^{#1}}
\def\tr{\mathrm{tr}}
\def\oo#1{\stackrel{\circ}{#1}}
\newcommand{\joinR}{\hspace{-.1em}}
\newcommand{\RomanI}{\mbox{I}}
\newcommand{\RomanII}{\mbox{I\joinR I}}
\newcommand{\RomanIII}{\mbox{I\joinR I\joinR I}}
\newcommand{\RomanIV}{\mbox{I\joinR V}}
\newcommand{\RomanV}{\mbox{V}}
\newcommand{\RomanVI}{\mbox{V\joinR I}}
\begin{document}

\title{Large Data Limits of Laplace Learning for Gaussian Measure Data in Infinite Dimensions}
\author{Zhengang Zhong\thanks{Department of Statistics, University of Warwick, Coventry, CV4 7AL, UK}, \quad  Yury Korolev\thanks{Department of Mathematical Sciences, University of Bath, BA2 7AY, UK}, \quad  Matthew Thorpe$^*$}
\date{}
\maketitle

\begin{abstract}
Laplace learning is a semi-supervised method, a solution for finding missing labels from a partially labeled dataset utilizing the geometry given by the unlabeled data points.
The method minimizes a Dirichlet energy defined on a (discrete) graph constructed from the full dataset.
In finite dimensions the asymptotics in the large (unlabeled) data limit are well understood with convergence from the graph setting to a continuum Sobolev semi-norm weighted by the Lebesgue density of the data-generating measure.
The lack of the Lebesgue measure on infinite-dimensional spaces requires rethinking the analysis if the data aren't finite-dimensional.
In this paper we make a first step in this direction by analyzing the setting when the data are generated by a Gaussian measure on a Hilbert space and proving pointwise convergence of the graph Dirichlet energy. 
\end{abstract}

\keywords{semi-supervised learning, Laplace learning, asymptotic consistency, discrete-to-continuum limit, Gaussian measures}

\subjclass{49J55, 49J45, 35J20, 62G20, 65N12, 60G15}

\section{Introduction}

Semi-supervised learning is a widely used technique across data science and engineering, leveraging both labeled and large amounts of unlabeled data to enhance model performance when labeled data is limited or expensive. Its goal is to design models that effectively assign labels to unlabeled data by exploiting the structure within the large set of unlabeled points. Laplace learning~\cite{zhou2005regularization,LapRef} (also known as Harmonic Extensions and Laplacian regularization) is one such technique that minimizes a Dirichlet energy defined on a graph. 

In many cases, such as for biomedical images \cite{bruggner2014automated, dudoit2002comparison}, shapes \cite{allaire2021shape}, and control policies \cite{troltzsch2010optimal}, data is infinite-dimensional. Motivated by these applications, we study the asymptotic behavior of Laplace learning and extend the pointwise convergence results of \cite{el2016asymptotic} to infinite-dimensional Gaussian measure data. 

In the semi-supervised learning setting, we typically assume that we are provided $N$ pairs of labeled data $\left\{\left(x_i, \ell_i\right): i=1, \ldots, N, x_i \in X, y_i \in \mathbb{R}\right\}$, where $x_i$ are the feature vectors and $y_i$ are the labels, and $n-N$ unlabeled feature vectors $\{x_i\}_{i=N+1}^n$. 
We assume that the feature vectors $x_i$ are drawn independently from a data-generating probability measure $\mu$. In this setting, a graph is constructed as a discrete proxy for the geometry and support of the measure $\mu$, using samples $x_i$. On the graph, the vertices correspond to the data points, and edges are assigned weights that quantify their proximity, typically defined through a kernel function at scale $\varepsilon>0$: 
$$ W_{i j}=\eta_{\varepsilon}\left(\left\|x_i-x_j\right\|\right), \quad \text { where } \eta_{\varepsilon}(r)= \eta(r / \varepsilon). $$

Building on this graph-based representation of data, we seek a labeling function $u$ that both fits the given labels and assigns labels to the unlabeled data in a smooth manner. To this end, we aim to solve a variational problem that promotes the smoothness of $u$ with respect the data structure encoded by the constructed graph: 
\begin{equation} \label{eq:Into:Energy}
\begin{aligned}
    \inf_u &  \sum_{i, j=1}^n W_{i j}\left|u\left(x_i\right)-u\left(x_j\right)\right|^p\\ 
    \text{s.t.} & \quad u\left(x_i\right)=y_i \text { for all } i=1, \ldots, N.
\end{aligned}
\end{equation}

While this formulation is defined for finitely many data points and a fixed, strictly positive kernel scale $\varepsilon$, it is of fundamental interest to analyze the behavior of the solution $u$ as the number of data points $n \rightarrow \infty$ and the kernel scale $\varepsilon \rightarrow 0$. This asymptotic limit, commonly referred to as the continuum, has been studied extensively from several perspectives, including pointwise convergence \cite{alberti1998non, coifman2006diffusion, el2016asymptotic, gine2006empirical, hein2006uniform, hein2005graphs, nadler2009semi, singer2006graph, ting2011analysis}, spectral convergence \cite{dunlop2020large, trillos2018variational, von2008consistency} and Gamma-convergence \cite{garcia2016continuum, slepcev2019analysis}.

In the finite-dimensional case, $X=\bbR^d,$ the energy in~\eqref{eq:Into:Energy} should be rescaled by $\frac{1}{n^2\eps^{d+p}}$, i.e. the energy of interest is
\[ \frac{1}{n^2\eps^{d+p}} \sum_{i,j=1}^n W_{ij} |u(x_i) - u(x_j)|^p. \]
Under suitable conditions, the results cited above imply convergence (in a pointwise, spectral and variational sense) to the continuum problem
\begin{align*}
    \inf _u & \int\|\nabla u(x)\|_2^p \rho^2(x) \, \dd x\\
    \text{s.t.} & \quad u\left(x_i\right)=y_i \text { for all } i=1, \ldots, N
\end{align*}
where $\mu$ has density $\rho$.
In the infinite-dimensional setting this continuum energy is no longer well defined since there is no Lebesgue measure and therefore the density $\rho = \frac{\dd \mu}{\dd x}$ does not exist. As a consequence, the finite-dimensional continuum formulation cannot be extended naively to infinite dimensions, and a reformulation of the problem is required.

In this work, $X$ will be an infinite-dimensional Hilbert space.
To avoid the problem of squared densities that come from the continuum limit of~\eqref{eq:Into:Energy} we consider the normalized objective function defined by 
$$
\mathcal{E}_{\eps,n}(u)= \frac{1}{n^2 \varepsilon^2} \sum_{i=1}^{n}\frac{\sum_{j=1}^{n} W_{ij}|u(x_i)-u(x_j)|^2}{d_i},
$$
where $d_{i} = \frac{1}{n} \sum_{j=1}^{n} \eta_{\varepsilon}\left(\left\|x_i-x_j\right\|_{X^\alpha}\right)$, $\eta_{\varepsilon}(r)= \eta(r / \varepsilon)$ and $\|\cdot\|_{X^\alpha}$ is a re-weighted norm on $X$ (precise definitions in the following section; if $\alpha=0$ then $\|\cdot\|_{X^\alpha} = \|\cdot\|_{X}$, the norm on $X$). The main difference compared to the finite-dimensional objective function is that, instead of normalizing the kernel by $\varepsilon^d$ (which is not meaningful in our infinite-dimensional setting), we normalize the weighted differences among points using the average weights.

As noted above, a fundamental difficulty in extending finite-dimensional semi-supervised learning schemes to infinite-dimensional settings is the absence of a probability density function. This precludes a direct use of density-based methods commonly employed in finite dimensions. To overcome this difficulty, we restrict our attention to Gaussian measure data, which possess a rich analytical structure and admit tractable representations. In particular, Gaussian measures allow us to analyze the objective functional via its Karhunen–Lo\`eve expansion, representing the measure as a product of infinitely many one-dimensional Gaussian components.

This infinite-dimensional perspective, frequently adopted in Gaussian measure settings, arises in a wide range of applications where the random variable of interest is a function rather than a finite-dimensional parameter. These include Bayesian inverse problems~\cite{stuart2010inverse} and non-parametric statistics~\cite{gine2021mathematical}. This perspective has led to a growing interest in learning problems formulated directly on infinite-dimensional spaces. Within this framework, diffusion-based generative models in infinite dimensions have been studied in~\cite{pidstrigach2024infinite}, where existence and uniqueness of the reverse-time stochastic differential equation are established, while~\cite{bunker2025autoencoders} develops functional variational autoencoders that can be trained and evaluated across multiple discretization levels.

Closely related to these developments, operator learning provides a data-driven framework for approximating mappings between function spaces. Rather than learning finite-dimensional parameter-to-solution maps at a fixed resolution, operator learning aims to learn discretization-invariant approximations of solution operators arising in partial differential equations and inverse problems. Neural operator architectures, including DeepONets \cite{lanthaler2022error} and Fourier Neural Operators \cite{li2020fourier}, are specifically designed for this purpose. Recent work has established a unified algorithmic and analytical foundation for operator learning, including approximation, stability, and convergence results formulated directly in infinite-dimensional settings \cite{kovachki2024operator}.

Our objective of this work is to characterize the asymptotic behavior of $\mathcal{E}_{\eps,n}(u)$ as the number of unlabeled data increases to infinity and the connection radius goes to zero. Specifically, for  fixed $u:X\to \bbR$, we study the pointwise limit
$\lim _{\varepsilon \rightarrow 0} \lim _{n \rightarrow \infty} \mathcal{E}_{\eps,n}(u)$ . The analysis proceeds in two consecutive steps: first, the discrete-to-nonlocal continuum limit as $n \rightarrow \infty$, and then the nonlocal-to-local continuum as $\varepsilon \rightarrow \infty$. While the discrete and continuum problems are well understood in the finite-dimensional setting, to the best of our knowledge there has been no work on semi-supervised learning with data living in infinite dimensions.

The paper is organized as follows. In Section \ref{sec:Setting}  we give a precise description of the problem with the assumptions and state the main results. Section \ref{sec:Back} contains some background on Gaussian measures over Hilbert spaces. Section \ref{sec:proofs} contains the proofs of the main results given in
Section \ref{sec:Setting}. In Section \ref{sec:num_exp}, we illustrate the flexibility and  performance of our approach through numerical experiments, comparing it to the usual finite-dimensional scheme, with a particular emphasis on the effect of norm selection. We conclude with some remarks in Section~\ref{sec:Conclusions}.

\section{Setting and Main Results} \label{sec:Setting}

We start with an assumption on the feature space. 

\begin{assumptions} \label{ass:Setting:X} 
We denote by $X$ a separable Hilbert space equipped with the inner product $\langle \cdot, \cdot\rangle_X$ and norm $\|\cdot\|_X$.
Let $X^*$ be its topological dual
\end{assumptions}

\begin{remark}
Assuming $X$ is a Hilbert space allows us to understand the Cameron-Martin space defined on the Gaussian measure in terms of the Hilbert space \cite[Page 116]{bogachev1998gaussian}, which will be explained in Section~\ref{subsec:Back:Gauss}. 
Additionally, the Hilbert space assumption implies that the space is reflexive, which allows us to understand the mean as an element of $X$, instead of the general bi-dual space.
\end{remark}

We consider a Gaussian probability measure $\mu$ on $X$. 
Here, a Gaussian measure on a Hilbert space $X$ is understood as a Borel measure such that $\mu \circ f^{-1}$ is a Gaussian on $\mathbb{R}^{1}$ for any $f \in X^{*}$ (see also Definition~\ref{def:Back:Gauss:Gauss}).
We assume that a sequence of unlabeled data $\{x_i\}_{i=1}^\infty$ are generated by i.i.d. by the measure $\mu$. 
The empirical measure induced by data points is given by $\mu_n := \frac{1}{n}\sum_{i=1}^{n}\delta_{x_i}$. 

\begin{assumptions} \label{ass:Setting:mu} 
The feature vectors $\{x_i\}_{i=1}^\infty$ are i.i.d. sample of a Gaussian measure $\mu$ with mean $m_\mu\in X$ and covariance operator $C_\mu:X\to X$. 
There exists an orthonormal eigenbasis $\{e_k\}_{k=1}^\infty$ of $X$, i.e. $\{e_k\}_{k=1}^\infty$ forms an orthonormal basis of $X$ and there exists $\{\lambda_i\}_{i=1}^\infty$ such that $C_\mu e_i=\lambda_i e_i$.
We assume that $\sum_{i=1}^\infty \lambda_i < \infty$ (i.e. $C_\mu$ is trace-class).
\end{assumptions}

Since the covariance operator is (by definition) positive semi-definite then all eigenvalues are non-negative, i.e. $\lambda_i\geq 0$.
Without loss of generality we will assume that $\lambda_1\geq \lambda_2\geq ...$.
With only a marginal loss of generality (see the discussion in Section~\ref{subsec:Back:Gauss} we will also assume that $\lambda_i>0$ for all $i\in\bbN$.

We use the spectrum of $C_\mu$ to define the fractional norm $\|\cdot\|_{X^\gamma}$ and fractional inner product $\langle\cdot,\cdot,\rangle_{X^\gamma}$: for any $\gamma\in\bbR$
\begin{equation}\label{eq:frac-norm}
    \langle x, y\rangle_{X^\gamma}: = \sum_{i=1}^\infty \lambda^\gamma \langle x,e_i\rangle_X \langle y,e_i\rangle_X \qquad \qquad \text{and} \qquad \qquad \|x\|_{X^\gamma} := \sqrt{\langle x,x\rangle_{X^\gamma}}.
\end{equation}
Of course $\langle\cdot,\cdot\rangle_{X^0} = \langle \cdot,\cdot\rangle_X$ and the associated spaces and norms are also equal.
We also have $\|x\|_{X^\gamma} = \sqrt{\sum_{i=1}^\infty \lambda_i^\gamma \langle x,e_i\rangle_X^2}$, which implies for $\gamma>0$,
\[ \|x\|_{X^\gamma} \leq \lambda_1^{\frac{\gamma}{2}} \|x\|_X \qquad \qquad \text{and} \qquad \qquad \|x\|_X \leq \lambda_1^{\frac{\gamma}{2}} \|x\|_{X^{-\gamma}}. \]
The fractional space $X^\gamma$ is defined as the completion (under the $\|\cdot\|_{X^\gamma}$ norm) of $X$, i.e. $X^\gamma = \overline{\lb x\in X\, :\, \|x\|_{X^\gamma}<\infty\rb}^{\|\cdot\|_{X^\gamma}}$.
For $\gamma>0$, $X\subset X^\gamma$ and for $\gamma<0$, $X^\gamma\subset X$.

We define $G_n := (\Omega_n, W_n)$ to be the graph with vertices $\Omega_n = \{ x_i:i=1,\dots,n\}$ and edge weights $W_n = \{W_{ij} \}_{i,j=1}^{n}$. 
For notational simplicity we will say there is no edge between $x_i$ and $x_j$ if $W_{ij} = 0$.
We consider the following edge weights 
\begin{equation} \label{eq:Setting:weight_function}
W_{ij}=\eta_{\varepsilon}\left(\left\|x_i-x_j\right\|_{X^\alpha}\right)
\end{equation}
where $\eta_{\varepsilon}:= \eta(\cdot / \varepsilon)$ with $\varepsilon$ characterizing the connection radius.
The form of the edge weights allow us to choose the norm (or more precisely the weighting $\alpha$).
At this stage $\alpha\in \bbR$, and in particular can be positive or negative.
We use the explicit form for $\eta$ given in the next assumption.

\begin{assumptions} \label{ass:Setting:eta} 
The kernel function $\eta:[0, \infty) \rightarrow \mathbb{R}$ is defined by $\eta(t) = e^{-t^2}$.
\end{assumptions}

The general regression problem of interest for semi-supervised learning problems we consider is to find an estimator $u: X \rightarrow \mathbb{R}$ which, under some constraints, minimizes the objective function 
$$
\mathcal{E}_{\eps,n}(u)= \frac{1}{n^2 \varepsilon^2} \sum_{i=1}^{n}\frac{\sum_{j=1}^{n} W_{ij}|u(x_i)-u(x_j)|^2}{d_i},
$$
where $d_{i} = \frac{1}{n} \sum_{j=1}^{n} W_{ij}$. 
We are interested in investigating the asymptotic behavior when the number of data points, $n$, goes to infinity and the connection radius, $\eps$, goes to zero.
Namely
$\lim _{\varepsilon \rightarrow 0} \lim _{n \rightarrow \infty} \mathcal{E}_{\eps,n}(u)$ for fixed $u:X\to \bbR$. 

Our results will show the interplay between the choice of norm in the weights and the regularity of $u$.
We will consider functions that are twice Fr\'echet differentiable on $X^\beta$. 
For a function $u:X^\beta\to \bbR$ to be Fr\'echet differentiable at $x\in X^\beta$ means there exists $D_{X^\beta}u(x)\in X^\beta$ and $R_{1,x}:X^\beta\to \bbR$ such that
\[ u(x+y) - u(x) - \langle D_{X^\beta}u(x),y\rangle_{X^\beta} = R_{1,x}(y) \]
and $\lim_{\|y\|_{X^\beta}\to 0} \frac{|R_{1,x}(y)|}{\|y\|_{X^\beta}} = 0$.
For a function $u:X^\beta\to \bbR$ to be twice Fr\'echet differentiable at $x\in X^\beta$ means $u$ is Fr\'echet differentiable and for every $y\in X^\beta$ there exists $D_{X^\beta}^2u(x;y)\in X^\beta$ and $R_{2,x,y}:X^\beta\to \bbR$ such that
\[ \langle D_{X^\beta} u(x+z),y\rangle_{X^\beta} - \langle D_{X^\beta} u(x),y\rangle_{X^\beta} - \langle D_{X^\beta}^2u(x;y),z\rangle_{X^\beta} = R_{2,x,y}(z) \]
and $\lim_{\|z\|_{X^\beta}\to 0} \frac{|R_{2,x,y}(z)|}{\|z\|_{X^\beta}} = 0$ where the convergence is uniform over bounded $y$.
If $u$ is 
\begin{enumerate}
\item twice Fr\'echet differentiable in $X^\beta$ for all $x\in X^\beta$;
\item $h\mapsto D_{X^\beta}^2u(x;h)$ is linear for every $x\in X^\beta$;
\item $x\mapsto D_{X^\beta}^2u(x;h)$ is continuous; and
\item $\sup_{x\in X^\beta} \sup_{h\in X^\beta} \frac{\|D_{X^\beta}^2 u(x;h)\|_{X^\beta}}{\|h\|_{X^\beta}}<+\infty$
\end{enumerate}
then we write $u\in\Ck{2}(X^\beta)$.

We will use Taylor's theorem for Fr\'echet differentiable functions (in fact one only needs the function to be twice Gateaux differentiable):
\[ u(x+h) - u(x) = \langle D_{X^\beta} u(x), h\rangle_{X^\beta} + \int_0^1 (1-t) \langle D_{X^\beta}^2 u(x+th;h),h\rangle_{X^\beta} \, \dd t. \]

We are now in a position to state our main result.

\begin{theorem}[Pointwise Consistency]
\label{thm:Setting:PointCons}
Suppose Assumptions~\ref{ass:Setting:X} - \ref{ass:Setting:eta} are satisfied.
Let $\alpha\in (-1,+\infty)$ and $\beta\in\bbR$.
Assume $u\in\Ck{2}(X^\beta)$, $\sum_{i=1}^\infty \lambda_i^{\beta+\frac12-\frac{\alpha}{2}}<+\infty$, $\sum_{i=1}^\infty\lambda_i^{\alpha+1}<+\infty$, $\sum_{i=1}^\infty\lambda_i^{2(\beta+1)}<+\infty$ and $m_\mu\in X^\alpha\cap X^\beta$.
Then we have 

\[ \lim_{\eps \to 0} \lim_{n \to \infty} \frac{1}{n^2 \eps^2} \sum_{i=1}^{n}\frac{\sum_{j=1}^{n} W_{ij}|u(x_i)-u(x_j)|^2}{d_i} = \frac12 \int_X  \left \| D_{X^\beta} u(x)\right\|_{X^{2\beta-\alpha}} ^2 \, \dd \mu(x) \]
almost surely.
\end{theorem}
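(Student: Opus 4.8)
The plan is to separate the double limit as the introduction suggests: first pass to a nonlocal continuum energy as $n\to\infty$ with $\eps$ fixed, then analyse the $\eps\to0$ asymptotics of that deterministic functional. Writing the objective through the empirical measure $\mu_n=\frac1n\sum_i\delta_{x_i}$,
\[ \mathcal{E}_{\eps,n}(u) = \frac{1}{\eps^2}\int_X \frac{\int_X \eta_\eps(\|x-y\|_{X^\alpha})|u(x)-u(y)|^2\,\dd\mu_n(y)}{\int_X \eta_\eps(\|x-z\|_{X^\alpha})\,\dd\mu_n(z)}\,\dd\mu_n(x), \]
the natural candidate for the inner limit is the same expression with $\mu_n$ replaced by $\mu$, namely $\mathcal{E}_\eps(u) := \frac1{\eps^2}\int_X d_\eps(x)^{-1}\int_X\eta_\eps(\|x-y\|_{X^\alpha})|u(x)-u(y)|^2\,\dd\mu(y)\,\dd\mu(x)$, where $d_\eps(x):=\int_X\eta_\eps(\|x-y\|_{X^\alpha})\,\dd\mu(y)$.

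For the first step I would establish $\lim_{n\to\infty}\mathcal{E}_{\eps,n}(u)=\mathcal{E}_\eps(u)$ almost surely by a law of large numbers. Since $\eta_\eps$ is bounded, the degree-two $V$-statistic in the numerator and the degree-one average $d_i$ in the denominator each converge by the strong law; the only genuine care is the ratio. Here I would first record that $d_\eps(x)>0$ for $\mu$-a.e.\ $x$: in the Karhunen--Lo\`eve coordinates $a_i=\langle x,e_i\rangle_X$, $m_i=\langle m_\mu,e_i\rangle_X$ one computes $d_\eps(x)=\prod_i c_i(\eps)\exp\big(-\tfrac{\lambda_i^\alpha}{2\lambda_i^{\alpha+1}+\eps^2}(a_i-m_i)^2\big)$ with $c_i(\eps)=(\eps^2/(2\lambda_i^{\alpha+1}+\eps^2))^{1/2}$, and the assumptions $\alpha>-1$, $\sum_i\lambda_i^{\alpha+1}<\infty$ together with $x-m_\mu\in X^\alpha$ (which holds $\mu$-a.s.\ because $\mathbb{E}_\mu\|x-m_\mu\|_{X^\alpha}^2=\sum_i\lambda_i^{\alpha+1}$) make this product strictly positive. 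Integrability of the numerator against $\mu\otimes\mu$ follows from the at-most-quadratic growth of $u$ implied by $u\in\Ck2(X^\beta)$ and the finiteness of Gaussian moments, so the strong law applies and the ratios may be combined.

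The heart is the second step. Fix $x$ and set $h=y-x$, $h_i=\langle h,e_i\rangle_X$, $b_i=\langle D_{X^\beta}u(x),e_i\rangle_X$. The choice $\eta(t)=e^{-t^2}$ is what makes the analysis tractable: since $\|x-y\|_{X^\alpha}^2=\sum_i\lambda_i^\alpha h_i^2$, the weight factorizes as $\eta_\eps(\|x-y\|_{X^\alpha})=\prod_i\exp(-\lambda_i^\alpha h_i^2/\eps^2)$, and against the product Gaussian $\dd\mu$ every $y$-integral splits into one-dimensional Gaussian integrals. Completing the square coordinatewise shows that the probability measure $\hat\nu_\eps$ obtained by normalising $\eta_\eps(\|x-\cdot\|_{X^\alpha})\,\dd\mu$ is again a product Gaussian, under which $h_i$ has variance $\sigma_i^2=\lambda_i\eps^2/(2\lambda_i^{\alpha+1}+\eps^2)$ and mean $\bar h_i=(m_i-a_i)\eps^2/(2\lambda_i^{\alpha+1}+\eps^2)$. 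Taylor expanding $|u(x)-u(y)|^2=\langle D_{X^\beta}u(x),h\rangle_{X^\beta}^2+2\langle D_{X^\beta}u(x),h\rangle_{X^\beta}R+R^2$ with $|R|\le\tfrac12(\sup\|D_{X^\beta}^2u\|)\|h\|_{X^\beta}^2$, the leading term integrates exactly:
\[ \frac{1}{\eps^2}\mathbb{E}_{\hat\nu_\eps}\big[\langle D_{X^\beta}u(x),h\rangle_{X^\beta}^2\big] = \frac{1}{\eps^2}\Big(\sum_i\lambda_i^\beta b_i\bar h_i\Big)^2 + \sum_i\lambda_i^{2\beta}b_i^2\,\frac{\lambda_i}{2\lambda_i^{\alpha+1}+\eps^2}. \]
The mean contribution is $O(\eps^2)$ and vanishes, while the variance contribution converges, as $\eps\to0$, to $\tfrac12\sum_i\lambda_i^{2\beta-\alpha}b_i^2=\tfrac12\|D_{X^\beta}u(x)\|_{X^{2\beta-\alpha}}^2$, exactly the claimed pointwise density.

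The main obstacle is controlling the two steps that are not exact: showing that the Taylor remainder terms $2\langle D_{X^\beta}u(x),h\rangle_{X^\beta}R$ and $R^2$ are negligible after division by $\eps^2$, and then interchanging $\lim_{\eps\to0}$ with the outer integral $\int_X(\cdot)\,\dd\mu(x)$. Both reduce to bounding, uniformly in small $\eps$, moments of $h$ under $\hat\nu_\eps$ in the $X^\beta$ norm, such as $\eps^{-2}\mathbb{E}_{\hat\nu_\eps}[\,\|h\|_{X^\beta}^2\,|\langle D_{X^\beta}u(x),h\rangle_{X^\beta}|\,]$ and $\eps^{-2}\mathbb{E}_{\hat\nu_\eps}\|h\|_{X^\beta}^4$, and these are precisely the quantities whose convergence is guaranteed by the eigenvalue-summability hypotheses $\sum_i\lambda_i^{\beta+\frac12-\frac\alpha2}<\infty$ and $\sum_i\lambda_i^{2(\beta+1)}<\infty$, while $m_\mu\in X^\alpha\cap X^\beta$ keeps the mean shifts $\bar h_i$ summable against $b_i$. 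The delicacy is that the weight concentrates in the $X^\alpha$ geometry whereas the smoothness of $u$ is measured in $X^\beta$, so the interplay of $\alpha$, $\beta$ and the decay of $\lambda_i$ must be tracked carefully to produce a single $\mu$-integrable dominating function; with that in hand, dominated convergence delivers the stated limit almost surely.
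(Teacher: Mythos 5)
Your overall strategy is exactly the paper's: split the double limit into a discrete-to-nonlocal step ($n\to\infty$ at fixed $\eps$) and a nonlocal-to-local step ($\eps\to0$), use the Gaussian--Gaussian conjugacy to identify the normalised tilted measure $\hat\nu_\eps$ as a product Gaussian with variances $\lambda_i\eps^2/(2\lambda_i^{\alpha+1}+\eps^2)$ and mean shifts $\eps^2(m_i-a_i)/(2\lambda_i^{\alpha+1}+\eps^2)$, Taylor-expand $u$, compute the leading quadratic term exactly, and kill the remainder via the fourth moment $\eps^{-2}\mathbb{E}_{\hat\nu_\eps}\|h\|_{X^\beta}^4$. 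Your second step reproduces the paper's Lemma~\ref{lem:Back:Tech:Conj}, Lemma~\ref{lem:proofs:nonlocal:4thmoment}, Lemma~\ref{lem:proofs:nonlocal:nonloc_der} and Proposition~\ref{prop:proofs:nonlocal:final} in all essentials, including the identification of the limit density $\tfrac12\|D_{X^\beta}u(x)\|_{X^{2\beta-\alpha}}^2$ and the observation that the mean-shift contribution vanishes (though note that your pointwise ``$O(\eps^2)$'' claim needs the integrated Cauchy--Schwarz bound of the paper, since $\sum_i\lambda_i^{\beta}|b_i||m_i-a_i|/\lambda_i^{\alpha+1}$ need not converge pointwise).

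The genuine gap is in your first step. The quantity $\frac1n\sum_{i=1}^n N_n(x_i)/D_n(x_i)$ is \emph{not} an empirical average of i.i.d.\ terms, nor a single $V$-statistic: each summand is a ratio of two empirical averages evaluated at a sample point, so the strong law of large numbers does not apply directly, and ``the ratios may be combined'' hides the real difficulty. One needs control of $|N_n(x_i)-N(x_i)|$ and $|D_n(x_i)-D(x_i)|$ \emph{simultaneously for all} $i=1,\dots,n$, together with a quantitative lower bound on the denominators holding uniformly over $i$ with high probability; almost-everywhere positivity of $d_\eps$ is not enough because $\inf_x d_\eps(x)=0$ in general. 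The paper resolves this with Bernstein's inequality, a union bound over $i$, an explicit truncation on the event $\{\min(D_n(x_i),D(x_i))<\tau\}$ whose contribution is bounded by $\mu(D(X)<2\tau)\to0$, and Borel--Cantelli with $\tau=\tau_n\to0$ (Theorem~\ref{thm:proofs:LDC:LDB} and Corollary~\ref{cor:proofs:LDC:LDC}); this machinery also explains why that step is stated under $u\in\Lp{\infty}(X)$ rather than the quadratic-growth bound you invoke, since the concentration constants are expressed in terms of $\|u\|_{\Lp{\infty}}$. As written, your step 1 would not go through without importing some such uniform concentration argument.
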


The proof is an application of Corollary~\ref{cor:proofs:LDC:LDC} and Theorem~\ref{thm:proofs:nonlocal:nonlocal}.

\begin{remark}
If $y\in X^{\gamma_1}$ then, since $\|y\|_{X^{\gamma_1+\gamma_2}} \leq \lambda_1^{\frac{\gamma_2}{2}} \|y\|_{X^{\gamma_1}}$ for all $\gamma_2\geq 0$, $y\in X^{\gamma_2}$.
Hence, $D_{X^\beta}u(x)\in X^\beta$ (by assumption) implies that $\|D_{X^\beta}u(x)\|_{X^{2\beta-\alpha}}<+\infty$ if $\beta>\alpha$ (which is implied by $\beta> \frac{1+3\alpha}{2}$ and $\alpha>-1$). 
In particular, the right hand side of Theorem~\ref{thm:Setting:PointCons} is well-defined.
\end{remark}

\begin{remark}
We note that the condition $\sum_{i=1}^\infty \lambda_i^{\beta+\frac12-\frac{\alpha}{2}}$ hold for $\alpha=0$ and $\beta = 0$ if $\sum_{i=1}^\infty \sqrt{\lambda_i}<+\infty$. 
That is, we can use the original (non-weighted) norm on $X$ if the eigenvalues of the covariance decay sufficiently fast.
\end{remark}

\section{Background Material} \label{sec:Back}

In an effort to make this paper more self-contained, we briefly recall infinite-dimensional Gaussian distributions in separable Hilbert spaces and derive a few technical results that will be useful in Section~\ref{sec:proofs}.

\subsection{Gaussian Distributions}
\label{subsec:Back:Gauss}

Let $X$ be a separable Hilbert space and denote $f_\ell(x):= \langle x, \ell\rangle_X$ for $\ell \in X$.
We denote the corresponding $\sigma$-field by $\mathcal{E}(X)$.

\begin{definition}[Gaussian measures on separable Hilbert space] \label{def:Back:Gauss:Gauss}
Let $X$ be a separable Hilbert space. A probability measure $\mu$ on $X$ (or more precisely on $\mathcal{E}(X)$) is called Gaussian if, for any $\ell \in X$, the induced measure $\mu \circ f_\ell^{-1}$ on $\mathbb{R}^1$ is Gaussian. The measure $\mu$ is called centered if all the measures $\mu \circ f_\ell^{-1}$, $\ell \in X$, are centered.
\end{definition}

In the following we define the mean and covariance operator for the Gaussian measure $\mu$.

\begin{definition}
Let $X$ be a separable Hilbert space and let $\mu$ be a Gaussian measure on $X$. 
Then $m_\mu \in X$ defined by 
$$ m_\mu := \int_{X}  x \, \dd \mu(x) $$
is called the mean of $\mu$.
For a centered Gaussian measure $\mu$, we define $C_{\mu}: X \rightarrow  X $ by
$$ C_{\mu}:= \int_{X} \langle \cdot,x \rangle_X  x   \, \dd \mu(x), $$ 
to be the covariance operator of $\mu$. 
\end{definition}

Following the standard notation in Euclidean spaces we will write $\cN(m_\mu,C_\mu)$ for the Gaussian measure with mean $m_\mu$ and covariance $C_\mu$.

One can check that $C_\mu$ is a bounded linear operator and moreover $\tr\,C_\mu = \int_X \|x\|_X^2 \, \dd \mu(x) <+\infty$, so $C_\mu$ is trace-class and therefore compact (see also~\cite[Proposition 4.17]{hairer2009introduction}).
In particular, there exists an orthonormal basis of eigenfunctions $\{e_k\}_{k=1}^\infty$ with eigenvalues $\{\lambda_i\}_{i=1}^\infty$ such that $C_\mu e_i = \lambda_i e_i$.
Since $\lambda_i = \langle C_\mu e_i,e_i\rangle_X = \int_X \langle e_i,x\rangle_X^2 \, \dd \mu(x) \geq 0$ we have that the eigenvalues are all non-negative.
If we let $T:X\to \bbR$ be defined by $T(x) = \langle x,e\rangle_X$ (the projection onto the eigenfunction $e_i$) then we have $T_{\#}\mu = \cN(T(m_\mu),\lambda_i)$, the Gaussian distribution on $\bbR$ with mean $T(m_\mu)\in \bbR$ and variance $\lambda_i$.
If any $\lambda_i = 0$ there is no ``randomness in the direction $e_i$''.
In particular, if only finitely many $\lambda_i$ are non-zero then the problem is finite-dimensional.
Since we wish to concentrate on the infinite-dimensional case we will assume that all $\lambda_i>0$.
Hence, $C_\mu$ is invertible.

We are now ready to introduce the Cameron-Martin space, which roughly speaking defines directions where the shifted Gaussian measures remain absolutely continuous with respect to the original one.

\begin{definition}[Cameron-Martin space]
Let $X$ be a separable Hilbert space and $\mu$ be a Gaussian measure with covariance operator $C_\mu$. 
The Cameron-Martin space $\mathcal{H}_\mu$ of $\mu$ is the completion of the linear subspace $\stackrel{\circ}{\mathcal{H}}_\mu \subset X$ defined by
$$
\stackrel{\circ}{\mathcal{H}}_\mu=\left\{h \in X: \exists h^* \in X \text { with } \langle C_\mu h^*, \ell\rangle =\langle \ell, h\rangle \forall \ell \in X\right\},
$$
under the norm $\|h\|_\mu^2=\langle h, h\rangle_\mu = \langle h, C_\mu^{-1} h\rangle_X$.
The corresponding inner product is defined by $\langle h,g\rangle_\mu = \langle h, C_\mu^{-1} g\rangle_X = \langle C_\mu^{-\frac12} h, C_\mu^{-\frac12} g\rangle_X$.
\end{definition}

\begin{remark} 
If $\{e_k\}_{k=1}^n$ are an orthonormal basis of $X$ then it is straightforward to check that $\oo{e}_i = \frac{1}{\sqrt{\lambda_i}} e_i$ is an orthonormal basis of $\oo{\cH}_\mu$.
Indeed:
\[ \delta_{ij} = \langle e_i,e_j \rangle_X = \frac{1}{\sqrt{\lambda_i\lambda_j}} \langle C_\mu^{\frac12} e_i, C_\mu^{\frac12} e_j\rangle_X = \frac{1}{\sqrt{\lambda_i\lambda_j}} \langle e_i, e_j\rangle_\mu = \langle \oo{e}_i, \oo{e}_j \rangle_\mu. \]
\end{remark}

\begin{remark}
We can associate the Cameron-Martin space with the fractional space $X^{-1}$.
If we consider the inner products, then
\begin{align*}
\langle x,y\rangle_\mu & = \langle x,C_\mu^{-1}y\rangle_X \\
 & = \sum_{i=1}^\infty \langle x,e_i\rangle_X \langle C^{-1}_\mu y,e_i\rangle_X \\
 & = \sum_{i=1}^\infty \langle x,e_i\rangle_X \langle y,C_\mu^{-1} e_i\rangle_X \\
 & = \sum_{i=1}^\infty \lambda_i^{-1} \langle x,e_i\rangle_X \langle y, e_i\rangle_X \\
 & = \langle x,y\rangle_{X^{-1}}.
\end{align*}
So $X^{-1} = \oo{\cH}_\mu$.
\end{remark}

The next lemma gives a practical way to compute integrals with respect to Gaussian measures.

\begin{lemma}
\label{lem:Back:Gauss:KL_expansion}
Suppose Assumption~\ref{ass:Setting:X} holds.
Let $\gamma = \cN(m_\mu,C_\gamma)$ be a Gaussian measure on $X$ and let $\{(\lambda_i,e_i)\}_{i=1}^\infty$ be the eigenpairs of $C_\mu$. 
Let $(\Omega,\cF,P)$ be a probability space and $\xi:\Omega\to \bbR^\infty$ be the random variable such that $\xi(\omega)_i$ (the projection onto the $i$th coordinate) are iid standard Gaussian distributed on $\bbR$.
Then,  
\[ \int_X f(x) \,\dd\mu(x)=\int_{\Omega} f\left(\sum_{i=1}^{\infty} \sqrt{\lambda_i}\xi(\omega)_i e_i + m_\mu\right) \, \dd P(\omega) \]
for any measurable $f:X\to \bbR$.
\end{lemma}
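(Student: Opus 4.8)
The plan is to recognize the right-hand side as an integral against the pushforward of $P$ under the \emph{synthesis map}
\[ \Phi(\omega) := m_\mu + \sum_{i=1}^\infty \sqrt{\lambda_i}\, \xi(\omega)_i\, e_i, \]
and then to prove that this pushforward is exactly $\mu = \cN(m_\mu,C_\mu)$. Once $\Phi_\# P = \mu$ is established, the claimed identity is just the abstract change-of-variables formula $\int_X f \,\dd\mu = \int_X f\,\dd(\Phi_\# P) = \int_\Omega f\circ\Phi\,\dd P$, valid for any measurable $f$ (first for indicators, then by the standard monotone-class / approximation argument for general measurable $f$). So the whole content is the representation $\Phi_\# P = \mu$, i.e. the Karhunen--Lo\`eve theorem.

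First I would check that the series defining $\Phi$ converges in $X$ for $P$-a.e. $\omega$, so that $\Phi$ is a well-defined $X$-valued random variable. Writing $S_N := \sum_{i=1}^N \sqrt{\lambda_i}\,\xi_i e_i$ and using that the $\xi_i$ are independent with $\mathbb{E}[\xi_i]=0$, $\mathbb{E}[\xi_i^2]=1$, orthonormality of $\{e_i\}$ gives
\[ \mathbb{E}\,\|S_N - S_M\|_X^2 = \sum_{i=M+1}^N \lambda_i \xrightarrow[M,N\to\infty]{} 0 \]
because $C_\mu$ is trace-class, so $\sum_i\lambda_i<\infty$. Hence $(S_N)$ is Cauchy in $\Lp{2}(\Omega;X)$ and converges there. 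For almost-sure convergence I would note that $(S_N)$ is an $\Lp{2}$-bounded $X$-valued martingale with respect to the filtration generated by $(\xi_1,\dots,\xi_N)$ (each new term has conditional mean zero and is independent of the past); since $X$ is a Hilbert space the vector-valued martingale convergence theorem yields convergence $P$-a.s. (equivalently one may invoke the It\^o--Nisio theorem for sums of independent symmetric summands). This makes $\Phi$ well defined, with $\mathbb{E}\|\Phi-m_\mu\|_X^2=\sum_i\lambda_i<\infty$ by $\Lp{2}$ convergence.

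Next I would identify the law of $\Phi$ through its characteristic functional, using that a Borel probability measure $\nu$ on a separable Hilbert space is uniquely determined by $\ell\mapsto \int_X e^{i\langle x,\ell\rangle_X}\,\dd\nu(x)$, $\ell\in X$ (Riesz identifies $X^*$ with $X$). Fix $\ell\in X$. Since $\langle S_N,\ell\rangle_X\to \langle \Phi-m_\mu,\ell\rangle_X$ $P$-a.s. and $|e^{i\langle S_N,\ell\rangle_X}|=1$, dominated convergence gives
\[ \mathbb{E}\,e^{i\langle \Phi,\ell\rangle_X} = e^{i\langle m_\mu,\ell\rangle_X}\lim_{N\to\infty}\prod_{i=1}^N \mathbb{E}\, e^{i\sqrt{\lambda_i}\,\xi_i\langle e_i,\ell\rangle_X}, \]
where the product factorizes by independence. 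Each factor is the characteristic function of a standard Gaussian at $\sqrt{\lambda_i}\langle e_i,\ell\rangle_X$, namely $\exp\!\big(-\tfrac12\lambda_i\langle e_i,\ell\rangle_X^2\big)$, and since $C_\mu\ell=\sum_i\lambda_i\langle \ell,e_i\rangle_X e_i$ we have $\sum_i \lambda_i\langle e_i,\ell\rangle_X^2 = \langle C_\mu\ell,\ell\rangle_X$. Therefore
\[ \mathbb{E}\,e^{i\langle\Phi,\ell\rangle_X} = \exp\!\Big(i\langle m_\mu,\ell\rangle_X - \tfrac12\langle C_\mu\ell,\ell\rangle_X\Big), \]
which is precisely the characteristic functional of $\cN(m_\mu,C_\mu)=\mu$. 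By uniqueness, $\Phi_\# P=\mu$, and the change-of-variables step above completes the proof.

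The main obstacle I anticipate is the almost-sure convergence of the infinite series in $X$: the $\Lp{2}$ bound from trace-class-ness is immediate, but upgrading to $P$-a.s. convergence in the Hilbert space requires either the vector-valued martingale convergence theorem or the It\^o--Nisio theorem, and one must ensure that $\Phi$ is genuinely a measurable $X$-valued map so that $f\circ\Phi$ is measurable and the change of variables is legitimate. The remaining computations---the $\Lp{2}$-Cauchy estimate, the factorization of the characteristic functional, and the matching of mean and covariance---are routine.
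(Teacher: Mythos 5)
Your proof is correct, but it takes a genuinely different route from the paper. The paper's proof is short and outsources the hard part: it observes that $\oo{e}_i = \lambda_i^{-1/2} e_i$ is an orthonormal basis of the Cameron--Martin space $\oo{\cH}_\mu$ and then cites Bogachev's series representation theorem \cite[Theorem 3.5.1, Eq.\ (3.5.4)]{bogachev1998gaussian} for centered Gaussian measures, handling the mean by the translation $T(x)=x-m_\mu$ and the pushforward $T_{\#}\mu = \cN(0,C_\mu)$. You instead prove the Karhunen--Lo\`eve representation from scratch: the $\Lp{2}(\Omega;X)$-Cauchy estimate from trace-class-ness, the upgrade to $P$-a.s.\ convergence via Hilbert-space-valued martingale convergence (or It\^o--Nisio), and identification of the law $\Phi_\# P = \cN(m_\mu,C_\mu)$ through the characteristic functional, followed by the abstract change of variables. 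All of these steps are sound (the only implicit ingredients are that a Borel probability measure on a separable Hilbert space is determined by its characteristic functional, and that the a.s.\ limit of measurable $X$-valued maps is measurable, both standard). What your approach buys is self-containedness and transparency about exactly which hypotheses are used (only $\sum_i \lambda_i < \infty$ and orthonormality of $\{e_i\}$); what the paper's approach buys is brevity and a direct link to the Cameron--Martin framework it sets up elsewhere in Section~\ref{subsec:Back:Gauss}. You also correctly read the statement's $\gamma=\cN(m_\mu,C_\gamma)$ as a typo for $\mu=\cN(m_\mu,C_\mu)$, which is how the paper itself uses the lemma.
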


We can also formally write
\[ \int_X f(x) \,\dd\mu(x) = \int_{\mathbb{R}^{\infty}} f\left(\sum_{i=1}^{\infty} \sqrt{\lambda_i}\xi_i e_i + m_\mu\right) \ls \prod_{i=1}^{\infty} \rho(\xi_i) \rs \, \dd \xi_1 \, \dd \xi_2 \dots \]
where $\rho$ is the standard Gaussian density on $\bbR$.

\begin{proof}[Proof of Lemma~\ref{lem:Back:Gauss:KL_expansion}.]
First we assume that $m_\mu=0$.
Let $\oo{e}_i=\frac{1}{\sqrt{\lambda_i}} e_i$, then $\{\oo{e}_i\}_{i=1}^\infty$ is an orthonormal basis for $\oo{\cH}_\mu$.
By~\cite[Theorem 3.5.1]{bogachev1998gaussian} (more specifically~\cite[Eq. (3.5.4)]{bogachev1998gaussian}) we have
\[ \int_X f(x) \,\dd\mu(x)=\int_{\Omega} f\left(\sum_{i=1}^{\infty} \xi(\omega)_i \oo{e}_i\right) \, \dd P(\omega) = \int_{\Omega} f\left(\sum_{i=1}^{\infty} \sqrt{\lambda_i}\xi(\omega)_i e_i\right) \, \dd P(\omega). \]

To deal with the general case ($m_\mu\neq 0$) we define $T(x) = x-m_\mu$ so that $T_{\#}\mu = \cN(0,C_\mu)$ and we apply the first part to $T_{\#}\mu$.
In particular,
\begin{align*}
\int_X f(x) \, \dd \mu(x) & = \int_X f(T^{-1}(x)) \, \dd T_{\#}\mu(x) \\
 & = \int_\Omega f(x+m_\mu) \, \dd T_{\#}\mu(x) \\
 & = \int_\Omega f\l \sum_{i=1}^\infty \sqrt{\lambda_i} \xi(\omega)_i e_i + m_\mu\r \, \dd P(\omega). \qedhere
\end{align*}
\end{proof}

We also check the regularity of samples in $\mu$.

\begin{proposition}
\label{prop:Back:Gauss:mu_support}
Under Assumptions~\ref{ass:Setting:X} and~\ref{ass:Setting:mu} we have $\supp(\mu) \subset X^\gamma$ for any $\gamma$ such that $\sum_{i=1}^\infty \lambda_i^{2(\gamma+1)}<+\infty$ and $m_\mu\in X^\gamma$. 
\end{proposition}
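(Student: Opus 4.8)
The plan is to reduce the set inclusion to a statement about where samples concentrate: it suffices to show that $\mu$-almost every $x$ satisfies $\|x\|_{X^\gamma}<+\infty$, i.e. that $X^\gamma$ carries full measure. To produce a generic sample I would invoke the Karhunen--Lo\`eve representation of Lemma~\ref{lem:Back:Gauss:KL_expansion}, writing $x = m_\mu + \sum_{i=1}^\infty \sqrt{\lambda_i}\,\xi_i e_i$ with $\{\xi_i\}$ i.i.d.\ standard Gaussian, so that $\langle x,e_i\rangle_X = \langle m_\mu,e_i\rangle_X + \sqrt{\lambda_i}\,\xi_i$. Substituting into the definition~\eqref{eq:frac-norm} of the fractional norm turns the problem into the almost-sure finiteness of the random series of non-negative terms
\[ \|x\|_{X^\gamma}^2 = \sum_{i=1}^\infty \lambda_i^\gamma\big(\langle m_\mu,e_i\rangle_X + \sqrt{\lambda_i}\,\xi_i\big)^2. \]

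Next I would expand the square and treat the three resulting series separately. The deterministic piece $\sum_i \lambda_i^\gamma\langle m_\mu,e_i\rangle_X^2 = \|m_\mu\|_{X^\gamma}^2$ is finite precisely by the hypothesis $m_\mu\in X^\gamma$. The genuinely random piece is $\sum_i \lambda_i^{\gamma+1}\xi_i^2$, which I would handle by centering: writing $\sum_i \lambda_i^{\gamma+1}\xi_i^2 = \sum_i \lambda_i^{\gamma+1} + \sum_i \lambda_i^{\gamma+1}(\xi_i^2-1)$, the fluctuation series is a sum of independent, mean-zero variables whose variances sum to $2\sum_i \lambda_i^{2(\gamma+1)}$, so Kolmogorov's convergence theorem gives almost-sure convergence exactly under the stated hypothesis $\sum_i \lambda_i^{2(\gamma+1)}<+\infty$; equivalently, Tonelli gives $\mathbb{E}\|x\|_{X^\gamma}^2 = \|m_\mu\|_{X^\gamma}^2 + \sum_i \lambda_i^{\gamma+1}$, and controlling the level $\sum_i \lambda_i^{\gamma+1}$ together with this variance bound pins down the size of the series. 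The cross term $2\sum_i \lambda_i^{\gamma+\frac12}\langle m_\mu,e_i\rangle_X\,\xi_i$ is a sum of independent mean-zero Gaussians, and Cauchy--Schwarz bounds its variance series $\sum_i \lambda_i^{2\gamma+1}\langle m_\mu,e_i\rangle_X^2$ by $\sup_i \lambda_i^{\gamma+1}$ times $\|m_\mu\|_{X^\gamma}^2$ (note the hypothesis forces $\gamma>-1$, so $\lambda_i^{\gamma+1}$ is bounded), whence it too converges almost surely.

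Finally I would collect these almost-sure convergences on a single full-measure event and conclude $\|x\|_{X^\gamma}<+\infty$ for $\mu$-almost every $x$, which is the measure-theoretic content of $\supp(\mu)\subset X^\gamma$. The step I expect to be the main obstacle is the random quadratic series $\sum_i \lambda_i^{\gamma+1}\xi_i^2$: its control is not a single expectation bound but an almost-sure convergence statement, so the essential technical input is a Kolmogorov-type criterion for weighted $\chi^2$ series, for which the variance-summability hypothesis is tailored. A secondary point warranting care is that ``support'' here must be read as the almost-sure carrier $X^\gamma$ rather than the topological support in $X$; since $X^\gamma$ is a dense, non-closed subspace for $\gamma<0$, the honest conclusion is $\mu(X^\gamma)=1$, and one should also verify that $X^\gamma$ is Borel so that this probability is well defined.
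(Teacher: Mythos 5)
Your proposal matches the paper's proof in all essentials: both use the Karhunen--Lo\`eve expansion to reduce the claim to the almost-sure finiteness of the weighted series $\sum_{i}\lambda_i^{\gamma+1}\xi_i^2$ plus the deterministic term $\|m_\mu\|_{X^\gamma}^2$ (finite since $m_\mu\in X^\gamma$), and both invoke a Kolmogorov-type convergence criterion under the hypothesis $\sum_i\lambda_i^{2(\gamma+1)}<+\infty$. The only differences are cosmetic: the paper sidesteps your cross term entirely via the bound $(a+b)^2\le 2a^2+2b^2$, and it does not remark on the support-versus-full-measure-set distinction you raise (a fair point, though not one the paper addresses either).
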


\begin{proof}
We can (by the Karhunen-Lo\'eve Theorem) write $x\sim \mu$ by $x=\sum_{i=1}^\infty \sqrt{\lambda_i} \xi_i e_i + m_\mu$ where $\xi_i\iid \cN(0,1)$ are i.i.d. $\bbR$-valued random variables (see also Lemma~\ref{lem:Back:Gauss:KL_expansion}).
Then,
\[ \|x\|_{X^\gamma}^2 = \sum_{i=1}^\infty \lambda_i^\gamma \langle x,e_i\rangle_X^2 = \sum_{i=1}^\infty \lambda_i^\gamma \l \sqrt{\lambda_i} \xi_i + m_i\r^2 \leq 2\sum_{i=1}^\infty \lambda_i^{\gamma+1}\xi_i^2 + 2 \underbrace{\sum_{i=1}^\infty \lambda_i^\gamma m_i^2}_{=\|m_\mu\|_{X^\gamma}^2} \]
where $m_i = \langle m_\mu,e_i\rangle_X$.
By~\cite[Theorem 2.5.6]{durrett19} $\sum_{i=1}^\infty \lambda_i^{\gamma+1} \xi_i^2 < +\infty$ almost surely if $\sum_{i=1}^\infty \lambda_i^{2(\gamma+1)}<+\infty$.
\end{proof}

\subsection{Technical Preliminary Results} \label{subsec:Back:Tech}

In this section we include some background preliminary results that will be useful in the sequel.
We start with a ``conjugacy'' relationship between a Gaussian measure $\mu$ with a Gaussian kernel $\eta$.

\begin{lemma} \label{lem:Back:Tech:Conj}
Let Assumptions~\ref{ass:Setting:X} and~\ref{ass:Setting:eta} hold.
Assume $\mu=\cN(m_\mu,C_\mu)$ and $\{(\lambda_i,e_i)\}_{i=1}^\infty$ are the eigenpairs of $C_\mu$.
For all $\alpha\in\bbR$, $\eps >0$, $x\in X^\alpha$ and measurable $f:X\to \bbR$, if $m_\mu\in X^\alpha$ and $\sum_{i=1}^\infty \lambda_i^{\alpha+1}<+\infty$ then we have 
\[ \frac{\int_X \eta\left(\frac{\|y - x\|_{X^\alpha}}{\varepsilon}\right) f\lp y-x\rp \, \dd\mu(y)}{ \int_X \eta\left(\frac{\|y -x\|_{X^\alpha}}{\varepsilon}\right)\,\dd\mu(y) }= \int_X f(y-x)\,\dd\mu_{x,\varepsilon}(y), \]
where 
$\mu_{x,\eps} = \cN(m_{x,\eps},C_{x,\eps})$
with $m_i = \langle m_\mu,e_i\rangle_X$, $x_i = \langle x,e_i\rangle_X$, and
\begin{align}
m_{x,\eps} & = \sum_{i=1}^\infty \frac{\eps^2 m_i + 2\lambda_i^{\alpha+1} x_i}{\eps^2+2\lambda_i^{\alpha+1}}e_i , \label{eq:Back:Tech:mxeps} \\
C_{x,\eps} & = \l C_\mu^{-1} + 2\eps^{-2} C_\mu^\alpha\r^{-1}. \label{eq:Back:Tech:Cxeps}
\end{align}
\end{lemma}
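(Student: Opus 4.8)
The plan is to prove the equivalent statement that the normalized weighted measure
\[ \dd\nu(y) := \frac{\eta\lp \|y-x\|_{X^\alpha}/\eps \rp \, \dd\mu(y)}{\int_X \eta\lp \|y-x\|_{X^\alpha}/\eps \rp \, \dd\mu(y)} \]
coincides with $\mu_{x,\eps} = \cN(m_{x,\eps},C_{x,\eps})$ as probability measures on $X$; the asserted identity is then immediate upon taking the integrand $y\mapsto f(y-x)$ on both sides. The entire computation rests on the fact that everything diagonalizes in the eigenbasis $\{e_i\}$. Writing $u_i = \langle y,e_i\rangle_X$, $x_i = \langle x,e_i\rangle_X$ and $m_i = \langle m_\mu,e_i\rangle_X$, Assumption~\ref{ass:Setting:eta} gives
\[ \eta\lp \frac{\|y-x\|_{X^\alpha}}{\eps} \rp = \exp\lp -\frac{1}{\eps^2}\sum_{i=1}^\infty \lambda_i^\alpha (u_i - x_i)^2 \rp = \prod_{i=1}^\infty \exp\lp -\frac{\lambda_i^\alpha}{\eps^2}(u_i-x_i)^2 \rp, \]
so the Gaussian kernel factorizes over coordinates. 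Using the Karhunen--Lo\`eve representation of Lemma~\ref{lem:Back:Gauss:KL_expansion}, under which the $u_i$ are independent with $u_i\sim\cN(m_i,\lambda_i)$, both the numerator and the denominator split into products of one-dimensional Gaussian integrals.

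In each coordinate I would then apply elementary Gaussian conjugacy: the one-dimensional density proportional to $\exp(-\tfrac{\lambda_i^\alpha}{\eps^2}(u_i-x_i)^2)$ times the $\cN(m_i,\lambda_i)$ density is again Gaussian, and completing the square identifies its precision as $\lambda_i^{-1}+2\eps^{-2}\lambda_i^\alpha$ and its mean as the precision-weighted average of $m_i$ and $x_i$. This reproduces exactly the eigenvalue $(\lambda_i^{-1}+2\eps^{-2}\lambda_i^\alpha)^{-1} = \eps^2\lambda_i/(\eps^2+2\lambda_i^{\alpha+1})$ of $C_{x,\eps}$ in~\eqref{eq:Back:Tech:Cxeps} and the $i$-th coordinate $(\eps^2 m_i + 2\lambda_i^{\alpha+1}x_i)/(\eps^2+2\lambda_i^{\alpha+1})$ of $m_{x,\eps}$ in~\eqref{eq:Back:Tech:mxeps}. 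The explicit one-dimensional integral also produces the $i$-th normalization factor, of the form $(1+2\eps^{-2}\lambda_i^{\alpha+1})^{-1/2}\exp\lp -\lambda_i^\alpha(m_i-x_i)^2/(\eps^2+2\lambda_i^{\alpha+1}) \rp$.

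Before assembling these factors into an honest measure on $X$, I would check that the hypotheses do their job. The denominator is the infinite product of the normalization factors, and it converges to a strictly positive number precisely because $\sum_i \log(1+2\eps^{-2}\lambda_i^{\alpha+1})<\infty$ (equivalently $\sum_i \lambda_i^{\alpha+1}<\infty$) and $\sum_i \lambda_i^\alpha(m_i-x_i)^2/(\eps^2+2\lambda_i^{\alpha+1}) \leq \eps^{-2}\|m_\mu-x\|_{X^\alpha}^2<\infty$, the latter using $m_\mu\in X^\alpha$ together with $x\in X^\alpha$. I would also verify that $\mu_{x,\eps}$ is a genuine Gaussian measure on $X$: the eigenvalues of $C_{x,\eps}$ satisfy $\eps^2\lambda_i/(\eps^2+2\lambda_i^{\alpha+1})\leq\lambda_i$, so $\tr\,C_{x,\eps}\leq\tr\,C_\mu<\infty$ and $C_{x,\eps}$ is trace-class; and $m_{x,\eps}\in X$ because $(m_{x,\eps})_i$ is a convex combination of $m_i$ and $x_i$ dominated by a summable sequence (controlled by $\|m_\mu\|_X^2$ and $\eps^{-2}\lambda_1\|x\|_{X^\alpha}^2$).

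The routine one-dimensional completing-the-square step is harmless; the part that needs genuine care is the passage to infinitely many coordinates. The cleanest way to make this rigorous is to prove the identity first for the finite truncations onto $\mathrm{span}(e_1,\dots,e_N)$, where all measures are finite-dimensional Gaussians and no convergence issue arises, and then to pass to the limit $N\to\infty$: the convergence of the normalization products established above, together with a dominated-convergence argument (the kernel is bounded by $1$), identifies the finite-dimensional marginals of $\nu$ with those of $\mu_{x,\eps}$. Since a Borel measure on a separable Hilbert space is determined by its finite-dimensional cylinder distributions, this yields $\nu=\mu_{x,\eps}$ and hence the lemma. I expect this infinite-dimensional bookkeeping, rather than any single calculation, to be the main obstacle.
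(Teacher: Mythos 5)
Your proposal is correct and follows essentially the same route as the paper: the paper also reduces to cylindrical (finite-dimensional) test functions, performs the coordinatewise Gaussian completing-the-square to read off the same mean and covariance, and verifies positivity and finiteness of the infinite normalization product using exactly the two bounds you identify, namely $\sum_i \lambda_i^{\alpha}(m_i-x_i)^2/(2\lambda_i^{\alpha+1}+\eps^2)\leq \eps^{-2}\|m_\mu-x\|_{X^\alpha}^2$ and $\sum_i \lambda_i^{\alpha+1}<\infty$. The only cosmetic difference is that the paper phrases the extension from cylinder functions to general measurable $f$ via approximation by step functions over cylindrical sets rather than by invoking determination of a Borel measure by its cylinder distributions, and it obtains the denominator by the clean trick of setting $f\equiv 1$ in the numerator computation.
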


It's straightforward to check that $C_{x,\eps}$ has the same eigenfunctions $e_i$ as $C_\mu$ and eigenvalues $\frac{\eps^2\lambda_i}{\eps^2+2\lambda_i^{\alpha+1}}$, i.e. $C_{x,\eps} e_i = \frac{\eps^2\lambda_i}{\eps^2+2\lambda_i^{\alpha+1}} e_i$.

\begin{proof}[Proof of Lemma~\ref{lem:Back:Tech:Conj}.]
Let $f:X\to \bbR$ be a indicator function over a cylindrical set.
That is, there exists $n\in\bbN$, a continuous linear map $\varphi:X\to\bbR^d$ and a measurable function set $A\subset\bbR^d$ such that $f(x)=\one_{\{\varphi(x)\in A\}}$.
If we can prove the result for $f$ of this form, then by linearity we have that the result holds for all step functions over cylindrical sets.
Since the $\sigma$-algebra generated by cylindrical sets coincides with the $\sigma$-algebra generated by open sets (e.g.~\cite[Lemma 4.4]{eldredge2016analysis}) 
then we can approximate all measurable functions by a sequence of step functions over cylindrical sets.
Hence the result holds for all measurable functions.
Without loss of generality (by choosing the right basis in $\bbR^d)$ we can assume that $\varphi(e_i) = (0,\dots,0,\underset{i\text{th position}}{1},0,\dots,0)$ for $i\in\{1,\dots, d\}$ and $\varphi(e_i) = 0$ otherwise. 
For ease of notation we let $\tilde{f}(z)=\one_{\{z\in A\}}$.

Let $N_{\varepsilon}(x):= \int_X \eta\left(\frac{\|y - x\|_{X^\alpha}}{\varepsilon}\right) f(y-x) \, \dd\mu(y)$ and $D_{\varepsilon}(x):= \int_X \eta\left(\frac{\|y -x\|_{X^\alpha}}{\varepsilon}\right)\,\dd\mu(y)$.
We apply Lemma~\ref{lem:Back:Gauss:KL_expansion} 
to $N_{\varepsilon}(x)$: 
\begin{align*}
N_\eps(x) & = \int_X e^{-\frac{1}{\eps^2} \sum_{i=1}^\infty \lambda_i^\alpha (y_i-x_i)^2} f(y-x) \, \dd \mu(x) \\
 & = \int_\Omega e^{ -\frac{1}{\eps^2} \sum_{i=1}^\infty \lambda_i^\alpha \l \sqrt{\lambda_i} \xi(\omega)_i + m_i - x_i\r^2} \tilde{f}\l \sum_{i=1}^\infty \sqrt{\lambda_i}\xi(\omega)_i \varphi(e_i) + \varphi(m_\mu) - \varphi(x)\r \, \dd P(\omega) \\
 & = \int_\Omega e^{-\frac{1}{\eps^2} \sum_{i=1}^\infty \lambda_i^\alpha \l \sqrt{\lambda_i}\xi(\omega)_i + m_i-x_i\r^2} \tilde{f}\l \sum_{i=1}^d \l\sqrt{\lambda_i}\xi(\omega)_i + m_i - x_i\r\varphi(e_i) \r \, \dd P(\omega) \\
 & = \frac{1}{(2\pi)^{\frac{d}{2}}} \int_{\bbR^d} e^{-\frac{1}{\eps^2} \sum_{i=1}^d \lambda_i^\alpha\l\sqrt{\lambda_i} \xi_i + m_i-x_i\r^2 - \frac12 \sum_{i=1}^d \xi_i^2} \tilde{f}\l \sum_{i=1}^d \l\sqrt{\lambda_i}\xi_i + m_i - x_i\r\varphi(e_i) \r \, \dd \xi_{1:d} \\
 & \qquad \,\, \times \ls \prod_{i=d+1}^\infty \frac{1}{\sqrt{2\pi}} \int_\bbR e^{-\frac{\lambda_i^\alpha}{\eps^2} \l\sqrt{\lambda_i} \xi_i + m_i-x_i\r^2 - \frac{\xi_i^2}{2}} \, \dd \xi_i \rs \\
 & = \frac{\exp\l-\sum_{i=1}^d \frac{\lambda_i^\alpha(m_i-x_i)^2}{2\lambda_i^{\alpha+1}+\eps^2}\r}{(2\pi)^{\frac{d}{2}}} \int_{\bbR^d} \exp\l-\frac{1}{2} \sum_{i=1}^d \frac{2\lambda_i^{\alpha+1}+\eps^2}{\eps^2} \l \xi_i + \frac{2\lambda_i^{\alpha+\frac12}(m_i-x_i)}{2\lambda_i^{\alpha+1}+\eps^2} \r^2\r \\
 & \qquad \,\, \times \tilde{f}\l \sum_{i=1}^d \l\sqrt{\lambda_i}\xi_i + m_i - x_i\r\varphi(e_i) \r \, \dd \xi_{1:d} \\
 & \qquad \,\, \times \ls \prod_{i=d+1}^\infty \frac{\exp\l-\frac{\lambda_i^\alpha(m_i-x_i)^2}{2\lambda_i^{\alpha+1}+\eps^2}\r}{\sqrt{2\pi}} \int_\bbR \exp\l-\frac{2\lambda_i^{\alpha+1}+\eps^2}{2\eps^2} \l \xi_i + \frac{2\lambda_i^{\alpha+\frac12}(m_i-x_i)}{2\lambda_i^{\alpha+1}+\eps^2}\r^2\r \, \dd \xi_i \rs \\
 & = \frac{1}{(2\pi)^{\frac{d}{2}}}  \int_{\bbR^d} \exp\l-\frac{1}{2} \sum_{i=1}^d \frac{2\lambda_i^{\alpha+1}+\eps^2}{\eps^2} \l \xi_i + \frac{2\lambda_i^{\alpha+\frac12}(m_i-x_i)}{2\lambda_i^{\alpha+1}+\eps^2} \r^2\r \\
 & \qquad \,\, \times \tilde{f}\l \sum_{i=1}^d \l\sqrt{\lambda_i}\xi_i + m_i - x_i\r\varphi(e_i) \r \, \dd \xi_{1:d} \\
 & \qquad \,\, \times \exp\l-\sum_{i=1}^\infty \frac{\lambda_i^\alpha(m_i-x_i)^2}{2\lambda_i^{\alpha+1}+\eps^2}\r \ls \prod_{i=d+1}^\infty \frac{\eps}{\sqrt{2\lambda_i^{\alpha+1}+\eps^2}}\rs \\
 & = \frac{1}{(2\pi)^{\frac{d}{2}}} \ls \prod_{i=1}^d \frac{\eps}{\sqrt{2\lambda_i^{\alpha+1}+\eps^2}} \rs \int_{\bbR^d} \exp\l-\frac{1}{2} \sum_{i=1}^d \xi_i^2\r \\
 & \qquad \,\, \times \tilde{f}\l \sum_{i=1}^d \l\sqrt{\frac{\eps^2\lambda_i}{2\lambda_i^{\alpha+1}+\eps^2}}\xi_i + \frac{\eps^2(m_i-x_i)}{2\lambda_i^{\alpha+1}+\eps^2}\r\varphi(e_i) \r \, \dd \xi_{1:d} \\
 & \qquad \,\, \times \exp\l-\sum_{i=1}^\infty \frac{\lambda_i^\alpha(m_i-x_i)^2}{2\lambda_i^{\alpha+1}+\eps^2}\r \ls \prod_{i=d+1}^\infty \frac{\eps}{\sqrt{2\lambda_i^{\alpha+1}+\eps^2}}\rs \\
 & = \exp\l-\sum_{i=1}^\infty \frac{\lambda_i^\alpha(m_i-x_i)^2}{2\lambda_i^{\alpha+1}+\eps^2}\r \ls \prod_{i=1}^\infty \frac{\eps}{\sqrt{2\lambda_i^{\alpha+1}+\eps^2}}\rs \\
 & \qquad \,\, \times \int_\Omega f\l \sum_{i=1}^\infty \l\sqrt{\frac{\eps^2\lambda_i}{2\lambda_i^{\alpha+1}+\eps^2}}\xi(\omega)_i + \frac{\eps^2(m_i-x_i)}{2\lambda_i^{\alpha+1}+\eps^2}\r e_i \r \, \dd P(\omega) \\
 & = \exp\l-\sum_{i=1}^\infty \frac{\lambda_i^\alpha(m_i-x_i)^2}{2\lambda_i^{\alpha+1}+\eps^2}\r \ls \prod_{i=1}^\infty \frac{\eps}{\sqrt{2\lambda_i^{\alpha+1}+\eps^2}}\rs \\
 & \qquad \,\, \times \int_\Omega f\l \sum_{i=1}^\infty \l\sqrt{\frac{\eps^2\lambda_i}{2\lambda_i^{\alpha+1}+\eps^2}}\xi(\omega)_i + \frac{\eps^2 m_i+2\lambda_i^{\alpha+1} x_i}{2\lambda_i^{\alpha+1}+\eps^2}\r e_i - x \r \, \dd P(\omega) \\
 & = \exp\l-\sum_{i=1}^\infty \frac{\lambda_i^\alpha(m_i-x_i)^2}{2\lambda_i^{\alpha+1}+\eps^2}\r \ls \prod_{i=1}^\infty \frac{\eps}{\sqrt{2\lambda_i^{\alpha+1}+\eps^2}}\rs \int_X f(y-x) \, \dd \mu_{x,\eps}(y)
\end{align*}
where $m_i = \langle m_\mu,e_i\rangle_X$ and $x_i = \langle x,e_i\rangle_X$. 

Now, choosing $f\equiv 1$ we can easily infer
\[ N_\eps(x) = e^{-\sum_{i=1}^\infty \frac{\lambda_i^\alpha(m_i-x_i)^2}{2\lambda_i^{\alpha+1}+\eps^2}} \ls \prod_{i=1}^\infty \frac{\eps}{\sqrt{2\lambda_i^{\alpha+1}+\eps^2}}\rs. \] 

We are left to check that 
\[ \text{(a)}\, e^{-\sum_{i=1}^\infty \frac{\lambda_i^\alpha(m_i-x_i)^2}{2\lambda_i^{\alpha+1}+\eps^2}} \in(0,+\infty) \qquad \text{and} \qquad \text{(b)}\, \prod_{i=1}^\infty \frac{\eps}{\sqrt{2\lambda_i^{\alpha+1}+\eps^2}} \in(0,+\infty). \]

For (a),
\[ 0 \leq \sum_{i=1}^\infty \frac{\lambda_i^\alpha(m_i-x_i)^2}{2\lambda_i^{\alpha+1}+\eps^2} \leq \frac{1}{\eps^2} \|m_\mu - x\|_{X^\alpha}^2. \]
So,
\[ 0< e^{-\frac{1}{\eps^2} \|x-m\|_{X^\alpha}^2} \leq e^{-\sum_{i=1}^\infty \frac{\lambda_i^\alpha(m_i-x_i)^2}{2\lambda_i^{\alpha+1}+\eps^2}} \leq 1. \]

For (b), we have
\begin{align*}
0 & > \sum_{i=1}^\infty \log \frac{\eps}{\sqrt{2\lambda_i^{\alpha+1}+\eps^2}} \\
 & = \sum_{i=1}^\infty \l \log \eps - \frac12 \log\l 2\lambda_i^{\alpha+1}+\eps^2\r \r \\
 & \geq \sum_{i=1}^\infty \l \log \eps - \frac12 \l \log \eps^2 + \frac{2\lambda_i^{\alpha+1}}{\eps^2}\r\r \\
 & = - \frac{1}{\eps^2} \sum_{i=1}^\infty \lambda_i.
\end{align*}
So, 
\[ 1\geq \prod_{i=1}^\infty \frac{\eps}{\sqrt{2\lambda_i^{\alpha+1}+\eps^2}} \geq e^{-\frac{1}{\eps^2} \sum_{i=1}^\infty \lambda_i^{\alpha+1}} > 0. \qedhere \]
\end{proof}

We can apply the above theorem to a special case that we will use in the sequel.

\begin{lemma}
Let Assumptions~\ref{ass:Setting:X} and~\ref{ass:Setting:eta} hold.
Assume $\mu=\cN(m_\mu,C_\mu)$ and $\{(\lambda_i,e_i)\}_{i=1}^\infty$ are the eigenpairs of $C_\mu$.
For all $\alpha\in(-1,+\infty)$, $\eps >0$ and $\beta\in\bbR$, if $m_\mu\in X^\alpha$ and $\sum_{i=1}^\infty \lambda_i^{2(\alpha+1)}<+\infty$ then
\[ \int_X\frac{\int_X \eta\left(\frac{\|y - x\|_{X^\alpha}}{\varepsilon}\right) \lp\frac{\| y-x\|_{X^\beta}}{\eps}\rp^2 \, \dd\mu(y)}{ \int_X \eta\left(\frac{\|y -x\|_{X^\alpha}}{\varepsilon}\right)\,\dd\mu(y) }\,\dd\mu(x) = \sum_{i=1}^{\infty}\frac{2\lambda_i^{\beta+1} (\varepsilon^2 + \lambda_i^{\alpha+1})}{(\varepsilon^2 + 2 \lambda_i^{\alpha+1})^2}. \]
\end{lemma}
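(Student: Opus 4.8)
The plan is to recognize the inner ratio as a direct instance of the conjugacy identity in Lemma~\ref{lem:Back:Tech:Conj}, reducing the whole expression to a second-moment computation for the conjugate Gaussian $\mu_{x,\eps}$ followed by one further Gaussian integration over $x\sim\mu$. First I would apply Lemma~\ref{lem:Back:Tech:Conj} with the measurable choice $f(z) = \eps^{-2}\|z\|_{X^\beta}^2$, which rewrites the inner quotient as
\[ \frac{\int_X \eta\lp\frac{\|y-x\|_{X^\alpha}}{\eps}\rp \lp\frac{\|y-x\|_{X^\beta}}{\eps}\rp^2\dd\mu(y)}{\int_X\eta\lp\frac{\|y-x\|_{X^\alpha}}{\eps}\rp\dd\mu(y)} = \frac{1}{\eps^2}\int_X \|y-x\|_{X^\beta}^2\,\dd\mu_{x,\eps}(y), \]
where $\mu_{x,\eps} = \cN(m_{x,\eps},C_{x,\eps})$ is the conjugate Gaussian with mean~\eqref{eq:Back:Tech:mxeps} and covariance~\eqref{eq:Back:Tech:Cxeps}; the hypotheses $m_\mu\in X^\alpha$ and summability of the $\lambda_i^{\alpha+1}$ are exactly what makes this lemma applicable.

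Next I would compute the inner integral by expanding $\|\cdot\|_{X^\beta}$ in the eigenbasis $\{e_i\}$ via~\eqref{eq:frac-norm}. Since $C_{x,\eps}e_i = \frac{\eps^2\lambda_i}{\eps^2+2\lambda_i^{\alpha+1}}e_i$, under $\mu_{x,\eps}$ the coordinate $\langle y-x,e_i\rangle_X$ is a one-dimensional Gaussian with variance $\sigma_i := \frac{\eps^2\lambda_i}{\eps^2+2\lambda_i^{\alpha+1}}$ and mean $(m_{x,\eps})_i - x_i$, which from~\eqref{eq:Back:Tech:mxeps} collapses to $(m_{x,\eps})_i - x_i = \frac{\eps^2(m_i-x_i)}{\eps^2+2\lambda_i^{\alpha+1}}$. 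Applying the coordinatewise identity $\mathbb{E}[Z^2]=\mathrm{Var}(Z)+(\mathbb{E}Z)^2$ and interchanging sum and integral by Tonelli (all terms nonnegative) gives
\[ \int_X \|y-x\|_{X^\beta}^2\,\dd\mu_{x,\eps}(y) = \sum_{i=1}^\infty \lambda_i^\beta\lp\sigma_i + \frac{\eps^4(m_i-x_i)^2}{(\eps^2+2\lambda_i^{\alpha+1})^2}\rp. \]

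Finally I would integrate over $x\sim\mu$. Writing $x_i = \sqrt{\lambda_i}\xi_i + m_i$ through Lemma~\ref{lem:Back:Gauss:KL_expansion} yields $(m_i-x_i)^2 = \lambda_i\xi_i^2$, so $\int_X (m_i-x_i)^2\,\dd\mu(x) = \lambda_i$. Interchanging the sum and the $x$-integral by Tonelli, substituting $\sigma_i$, and dividing by $\eps^2$ produces
\[ \sum_{i=1}^\infty \lambda_i^\beta\lp\frac{\lambda_i}{\eps^2+2\lambda_i^{\alpha+1}} + \frac{\eps^2\lambda_i}{(\eps^2+2\lambda_i^{\alpha+1})^2}\rp, \]
and placing the two fractions over the common denominator $(\eps^2+2\lambda_i^{\alpha+1})^2$ gives numerator $\lambda_i(2\eps^2+2\lambda_i^{\alpha+1})$, which is precisely $\sum_i 2\lambda_i^{\beta+1}(\eps^2+\lambda_i^{\alpha+1})/(\eps^2+2\lambda_i^{\alpha+1})^2$, as claimed.

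The computation is routine Gaussian algebra, and since every integrand is nonnegative the repeated Fubini/Tonelli interchanges are legitimate and the identity holds in $[0,+\infty]$ irrespective of whether the final series converges. I expect the only genuinely delicate points to be verifying that the hypotheses of Lemma~\ref{lem:Back:Tech:Conj} are met under the stated assumptions (with $\alpha>-1$ and trace-class $C_\mu$), and tracking the cancellation that simplifies $(m_{x,\eps})_i - x_i$ to $\eps^2(m_i-x_i)/(\eps^2+2\lambda_i^{\alpha+1})$, since it is the interaction between this mean shift and the variance $\sigma_i$ that assembles the factor $2(\eps^2+\lambda_i^{\alpha+1})$ in the final numerator.
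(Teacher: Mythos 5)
Your proposal is correct and follows essentially the same route as the paper's proof: apply Lemma~\ref{lem:Back:Tech:Conj} to turn the ratio into a second moment under $\mu_{x,\eps}$, expand coordinatewise in the eigenbasis, use the cancellation $(m_{x,\eps})_i-x_i=\eps^2(m_i-x_i)/(\eps^2+2\lambda_i^{\alpha+1})$, and then integrate over $x\sim\mu$ via $\int (x_i-m_i)^2\,\dd\mu=\lambda_i$. The only cosmetic difference is that you package the per-coordinate Gaussian integral as $\mathbb{E}[Z^2]=\mathrm{Var}(Z)+(\mathbb{E}Z)^2$ where the paper writes it out against the standard normal density, and the paper additionally invokes Proposition~\ref{prop:Back:Gauss:mu_support} to justify restricting the outer integral to $X^\alpha$ before applying the conjugacy lemma.
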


\begin{proof}
Since $\sum_{i=1}^\infty \lambda_i^{2(\alpha+1)}\leq \lambda_1^{\alpha+1} \sum_{i=1}^\infty \lambda_i^{\alpha+1}$, then  $\sum_{i=1}^\infty \lambda_i^{2(\alpha+1)}<+\infty$.
So by Proposition~\ref{prop:Back:Gauss:mu_support}, $\supp(\mu)\subset X^\alpha$.
Hence,
\[ \int_X\frac{\int_X \eta\left(\frac{\|y - x\|_{X^\alpha}}{\varepsilon}\right) \lp\frac{\| y-x\|_{X^\beta}}{\eps}\rp^2 \, \dd\mu(y)}{ \int_X \eta\left(\frac{\|y -x\|_{X^\alpha}}{\varepsilon}\right)\,\dd\mu(y) }\,\dd\mu(x) = \int_{X^\alpha} \frac{\int_X \eta\left(\frac{\|y - x\|_{X^\alpha}}{\varepsilon}\right) \lp\frac{\| y-x\|_{X^\beta}}{\eps}\rp^2 \, \dd\mu(y)}{ \int_X \eta\left(\frac{\|y -x\|_{X^\alpha}}{\varepsilon}\right)\,\dd\mu(y) }\,\dd\mu(x). \]
By applying Lemma~\ref{lem:Back:Tech:Conj} to the inner integral,
\[ \int_X\frac{\int_X \eta\left(\frac{\|y - x\|_{X^\alpha}}{\varepsilon}\right) \lp\frac{\| y-x\|_{X^\beta}}{\eps}\rp^2 \, \dd\mu(y)}{ \int_X \eta\left(\frac{\|y -x\|_{X^\alpha}}{\varepsilon}\right)\,\dd\mu(y) }\,\dd\mu(x) = \frac{1}{\eps^2} \int_{X^\alpha} \int_X \| x-y\|_{X^\beta}^2 \, \dd \mu_{x,\eps}(y) \, \dd \mu(x) \]
where $\mu_{x,\eps} = \cN(m_{x,\eps}, C_{x,\eps})$ and $m_{x,\eps}$ and $C_{x,\eps}$ are given by~\eqref{eq:Back:Tech:mxeps} and~\eqref{eq:Back:Tech:Cxeps}.

By Lemma~\ref{lem:Back:Gauss:KL_expansion}
\begin{align*}
& \int_X\frac{\int_X \eta\left(\frac{\|y - x\|_X}{\varepsilon}\right) \lp\frac{\| y-x\|_X}{\eps}\rp^2 \, \dd\mu(y)}{ \int_X \eta\left(\frac{\|y -x\|_X}{\varepsilon}\right)\,\dd\mu(y) }\,\dd\mu(x) \\ 
& \qquad \qquad = \frac{1}{\eps^2} \int_{X^\alpha} \int_X \sum_{i=1}^\infty \lambda_i^\beta (x_i-y_i)^2 \, \dd \mu_{x,\eps}(y) \, \dd \mu(x) \\
& \qquad \qquad = \frac{1}{\eps^2} \int_{X^\alpha} \sum_{i=1}^\infty \int_\bbR \lambda_i^\beta \l x_i - \eps \sqrt{\frac{\lambda_i}{\eps^2+2\lambda_i^{\alpha+1}}}\xi_i - \frac{\eps^2 m_i+2\lambda_i^{\alpha+1}x_i}{\eps^2+2\lambda_i^{\alpha+1}} \r^2 \rho(\xi_i) \, \dd \xi_i \, \dd \mu(x) \\
& \qquad \qquad = \frac{1}{\eps^2} \int_{X^\alpha} \sum_{i=1}^\infty \int_\bbR \lambda_i^\beta \l \frac{\eps^2(x_i-m_i)}{\eps^2+2\lambda_i^{\alpha+1}} - \eps \sqrt{\frac{\lambda_i}{\eps^2+2\lambda_i^{\alpha+1}}}\xi_i \r^2 \rho(\xi_i) \, \dd \xi_i \, \dd \mu(x) \\
& \qquad \qquad = \int_{X^\alpha} \sum_{i=1}^\infty \frac{\lambda_i^\beta}{\eps^2+2\lambda_i^{\alpha+1}} \int_\bbR \l \frac{\eps(x_i-m_i)}{\sqrt{\eps^2+2\lambda_i^{\alpha+1}}} - \sqrt{\lambda_i}\xi_i \r^2 \rho(\xi_i) \, \dd \xi_i \, \dd \mu(x) \\
& \qquad \qquad = \int_{X^\alpha} \sum_{i=1}^\infty \frac{\lambda_i^\beta}{\eps^2+2\lambda_i^{\alpha+1}} \int_\bbR \l \frac{\eps^2(x_i-m_i)^2}{\eps^2+2\lambda_i^{\alpha+1}} + \lambda_i\xi_i^2 - \frac{2\eps\sqrt{\lambda_i}\xi_i (x_i-m_i)}{\sqrt{\eps^2+2\lambda_i^{\alpha+1}}} \r \rho(\xi_i) \, \dd \xi_i \, \dd \mu(x) \\
& \qquad \qquad = \sum_{i=1}^\infty \frac{\lambda_i^\beta}{\eps^2+2\lambda_i^{\alpha+1}} \int_{X^\alpha} \l \frac{\eps^2(x_i-m_i)^2}{\eps^2+2\lambda_i^{\alpha+1}} + \lambda_i \r \, \dd \mu(x) \\
& \qquad \qquad = \sum_{i=1}^\infty \frac{\lambda_i^\beta}{\eps^2+2\lambda_i^{\alpha+1}} \l \frac{\eps^2\lambda_i}{\eps^2+2\lambda_i^{\alpha+1}} + \lambda_i \r \\
& \qquad \qquad = \sum_{i=1}^\infty \frac{2\lambda_i^{1+\beta}(\eps^2+\lambda_i^{\alpha+1})}{(\eps^2+2\lambda_i^{\alpha+1})^2}
\end{align*}
where we write $x_i$, $y_i$ and $m_i$ for $\langle x,e_i\rangle_X$, $\langle y,e_i\rangle_X$ and $\langle m_\mu,e_i\rangle_X$, $\rho$ is the standard Gaussian density on $\bbR$, and we use the fact that the eigenvalues for $C_{x,\mu}$ are $\frac{\eps^2\lambda_i}{\eps^2+2\lambda_i^{\alpha+1}}$. 
\end{proof}

\section{Asymptotics of Discrete and Nonlocal Functionals} \label{sec:proofs}

In this section, we present our main theoretical results for the asymptotic consistency of Theorem \ref{thm:Setting:PointCons}. We study the asymptotic behavior in two consecutive steps: first, the discrete to nonlocal continuum limit as $n \rightarrow \infty$ in Section \ref{subsec:proofs:LDC}, and the nonlocal to local continuum as $\varepsilon \rightarrow \infty$ in Section \ref{subsec:proofs:nonlocal}.

\subsection{Large Data Consistency Results} \label{subsec:proofs:LDC}

In this section, we study the asymptotic behavior in the large-sample limit. The results of interest are derived from the high-probability bound presented in Theorem \ref{thm:proofs:LDC:LDB} below.

\begin{theorem}
\label{thm:proofs:LDC:LDB}
Assume $\eta$ is bounded, $\mu\in \cP(X)$ and $u\in\Lp{\infty}(X)$.
Then, for any $\delta_1,\delta_2, \delta_3, \delta_4,\tau>0$ we have
\begin{align*}
& \left|\frac{1}{n}
\sum_{i=1}^{n}\frac{\sum_{j=1}^{n} \eta_{\varepsilon}\left(\left\|x_i-x_j\right\|_{X^\alpha}\right)|u(x_i)-u(x_j)|^2}{ \sum_{j=1}^{n} \eta_{\varepsilon}\left(\left\|x_i-x_j\right\|_{X^\alpha}\right)} - \int_X \frac{\int_X \eta_\varepsilon\left(\|x - y\|_{X^\alpha}\right)|u(x) - u(y)|^2 \, \dd\mu(y)}{ \int_X \eta_\varepsilon\left(\|x - y\|_{X^\alpha}\right)\,\dd\mu(y) }\,\dd\mu(x) \right| \notag \\
& \qquad \qquad \leq  \frac{\delta_1 + 4\|u \|_{\Lp{\infty}}^2 \delta_2}{\tau} + 4\|u\|_{\Lp{\infty}}^2\mu(D(X) < 2\tau) + \delta_3 + \delta_4,
\end{align*}
with probability at least $1- C(n,\delta_1,\delta_2,\delta_3,\delta_4,\tau)$, where 
\begin{align*}
C(n,\delta_1,\delta_2,\delta_3,\delta_4,\tau) & = 2n\exp\l-\frac{3n\delta_1^2}{8\|\eta\|_{\Lp{\infty}}\|u\|_{\Lp{\infty}}^2 \l 12\|\eta\|_{\Lp{\infty}}\|u\|_{\Lp{\infty}}^2 + \delta_1\r}\r \\
 & \qquad + 2n\exp\l-\frac{3n\delta_2^2}{2\|\eta\|_{\Lp{\infty}} \l 3\|\eta\|_{\Lp{\infty}} + \delta_2\r}\r \\
 & \qquad + 2n\exp\l-\frac{3n\tau^2}{2\|\eta\|_{\Lp{\infty}} \l 3\|\eta\|_{\Lp{\infty}} + \tau\r}\r \\
 & \qquad + \exp\l-\frac{3n\delta_3^2}{8 \|u\|_{\Lp{\infty}}^2 \l 12 \|u\|_{\Lp{\infty}}^2 + \delta_3\r}\r \\
 & \qquad + 2\exp\l-\frac{3n\delta_4^2}{8\|u\|_{\Lp{\infty}}^2 \l 12\|u\|_{\Lp{\infty}}^2 + \delta_4\r}\r
\end{align*}
and
\[ D(x) =\int_X \eta_\eps\left(\|x - y\|_{X^\alpha}\right)\, \dd \mu(y). \]
\end{theorem}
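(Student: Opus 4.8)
The theorem is a concentration/high-probability bound. We want to show the empirical (graph-based) average

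$$\frac{1}{n}\sum_{i=1}^n \frac{\sum_{j=1}^n \eta_\eps(\|x_i-x_j\|_{X^\alpha})|u(x_i)-u(x_j)|^2}{\sum_{j=1}^n \eta_\eps(\|x_i-x_j\|_{X^\alpha})}$$

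is close to its population counterpart (the integral against $\mu$), with high probability. The structure is a ratio of two U-statistics (degenerate/non-degenerate sums over pairs), so the natural tool is Bernstein's inequality applied to the various sums, combined with a careful handling of the denominator, which can be small.

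**The structure of the bound.** Looking at the right-hand side, there are five exponential terms in $C$, each paired with an error parameter ($\delta_1, \delta_2, \delta_3, \delta_4, \tau$). The parameter $\tau$ and the term $\mu(D(X)<2\tau)$ signal a truncation argument: the denominator $\sum_j \eta_\eps(\|x_i-x_j\|_{X^\alpha})$ (or its population version $D(x)$) could be arbitrarily small, so we must separate the event where the denominator is well-bounded away from zero from its complement. The $4\|u\|_{L^\infty}^2 \mu(D(X)<2\tau)$ term is exactly the cost of the "bad" event where the population denominator is too small.

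**The structure and concentration terms.** Let me think about how each piece arises.

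For a fixed $i$, the inner quantity is a ratio

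$$\frac{\frac{1}{n}\sum_j \eta_\eps(\|x_i-x_j\|_{X^\alpha})|u(x_i)-u(x_j)|^2}{\frac{1}{n}\sum_j \eta_\eps(\|x_i-x_j\|_{X^\alpha})}.$$

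Conditional on $x_i$, both numerator and denominator are averages of i.i.d. bounded random variables (since $\eta$ is bounded and $u\in L^\infty$). So Bernstein gives concentration of each around its conditional expectation:
- numerator $\to N(x_i) := \int \eta_\eps(\|x_i-y\|)|u(x_i)-u(y)|^2\,d\mu(y)$,
- denominator $\to D(x_i)$.

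This suggests $\delta_1$ controls the numerator deviation, $\delta_2$ controls the denominator deviation. The factor $4\|u\|^2$ on $\delta_2$ makes sense because the numerator is bounded by $4\|u\|^2 D$.

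**My proof plan.**

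\begin{proof}[Proof sketch/plan]
The plan is to decompose the difference into (i) an inner concentration step that, for each fixed $i$, controls how the empirical ratio $\frac{\sum_j \eta_\eps(\|x_i-x_j\|)|u(x_i)-u(x_j)|^2}{\sum_j \eta_\eps(\|x_i-x_j\|)}$ deviates from the conditional-expectation ratio $\frac{N(x_i)}{D(x_i)}$, and (ii) an outer concentration step that controls how the empirical average $\frac1n\sum_i \frac{N(x_i)}{D(x_i)}$ deviates from its expectation $\int_X \frac{N(x)}{D(x)}\,d\mu(x)$, which is the target integral.

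First I would introduce the truncation at level $\tau$. Define the "good" set $\{D(x_i)\ge 2\tau\}$. On this set, and provided the empirical denominator $\hat D_i := \frac1n\sum_j \eta_\eps(\|x_i-x_j\|)$ concentrates within $\delta_2$ of $D(x_i)$ (and $\delta_2,\tau$ are chosen so that $\hat D_i \ge \tau$), we can safely divide. The union bound over $i=1,\dots,n$ for the denominator concentration and for the event $\hat D_i$ staying bounded below accounts for the $2n\exp(\cdots\delta_2\cdots)$ and $2n\exp(\cdots\tau\cdots)$ terms.

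For the inner ratio on the good set, I would use the algebraic identity for the difference of ratios,
$$\frac{\hat N_i}{\hat D_i} - \frac{N(x_i)}{D(x_i)} = \frac{\hat N_i - N(x_i)}{\hat D_i} + N(x_i)\Bigl(\frac{1}{\hat D_i}-\frac{1}{D(x_i)}\Bigr) = \frac{\hat N_i - N(x_i)}{\hat D_i} + \frac{N(x_i)(D(x_i)-\hat D_i)}{\hat D_i D(x_i)},$$
bound $\hat D_i\ge\tau$, $D(x_i)\ge 2\tau$, $N(x_i)\le 4\|u\|_{L^\infty}^2 D(x_i)$, and apply Bernstein to $\hat N_i - N(x_i)$ (giving the $\delta_1$ term and the first $2n\exp(\cdots\delta_1\cdots)$). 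This yields the $\frac{\delta_1 + 4\|u\|_{L^\infty}^2\delta_2}{\tau}$ contribution after averaging over $i$.

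On the "bad" set $\{D(x_i)<2\tau\}$, I cannot control the ratio by division, so I would crudely bound the empirical ratio by $4\|u\|_{L^\infty}^2$ (since each summand in the numerator is at most $4\|u\|_{L^\infty}^2$ times the corresponding denominator summand) and the population ratio by $4\|u\|_{L^\infty}^2$ as well. The contribution of this set to the average is then at most $4\|u\|_{L^\infty}^2 \cdot \frac1n\#\{i:D(x_i)<2\tau\}$. Concentrating this empirical frequency around $\mu(D(X)<2\tau)$ via Bernstein (a sum of i.i.d. indicators) gives the $\delta_4$ term and the final $2\exp(\cdots\delta_4\cdots)$, while the deterministic part gives the $4\|u\|_{L^\infty}^2\mu(D(X)<2\tau)$ term.

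Finally the outer step: having reduced the empirical quantity to $\frac1n\sum_i \frac{N(x_i)}{D(x_i)}$ (up to the errors above), I apply Bernstein once more to this i.i.d. sum of bounded terms (bounded by $4\|u\|_{L^\infty}^2$ on the good set) to obtain concentration around $\int_X \frac{N(x)}{D(x)}\,d\mu(x)$, producing the $\delta_3$ term and the $\exp(\cdots\delta_3\cdots)$.

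The main obstacle is the bookkeeping around the small-denominator event: the ratio is not a bounded, Lipschitz, or even well-defined quantity when $D$ is near zero, so the truncation at $\tau$ must be threaded consistently through both the empirical and population denominators, and the union bound over all $n$ indices (which produces the $2n$ prefactors rather than a single exponential) must be handled carefully so that the events "$\hat D_i\ge\tau$ for all $i$" and "each $\hat N_i,\hat D_i$ concentrate" hold simultaneously. Once the correct events are fixed and the ratio-difference identity is in place, each individual tail bound is a routine application of Bernstein's inequality with the stated variance proxies and ranges.
\end{proof}
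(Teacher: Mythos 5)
Your proposal follows essentially the same route as the paper's proof: the same decomposition into an inner concentration step (empirical ratio versus $N(x_i)/D(x_i)$, handled via the truncation event on the denominator, the ratio-difference identity, and Bernstein's inequality for numerator and denominator) and an outer concentration step for $\frac1n\sum_i N(x_i)/D(x_i)$ against its integral, with the small-denominator set controlled by the crude bound $4\|u\|_{\Lp{\infty}}^2$ and a further Bernstein bound on the empirical frequency of $\{D(x_i)<2\tau\}$. The only discrepancy is the inconsequential swap of the roles of $\delta_3$ and $\delta_4$ relative to the statement.
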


\begin{remark} \label{rem:proofs:LDC:thm2simp}
We can simplify the statement by assuming that $\delta_1=\delta_2=\tau^2$ and $\delta_3=\delta_4=\tau$ so that, for $0<\tau\leq 1$, 
\begin{align*}
& \left|\frac{1}{n}
\sum_{i=1}^{n}\frac{\sum_{j=1}^{n} \eta_{\varepsilon}\left(\left\|x_i-x_j\right\|_{X^\alpha}\right)|u(x_i)-u(x_j)|^2}{ \sum_{j=1}^{n} \eta_{\varepsilon}\left(\left\|x_i-x_j\right\|_{X^\alpha}\right)} - \int_X \frac{\int_X \eta_\varepsilon\left(\|x - y\|_{X^\alpha}\right)|u(x) - u(y)|^2 \, \dd\mu(y)}{ \int_X \eta_\varepsilon\left(\|x - y\|_{X^\alpha}\right)\,\dd\mu(y) }\,\dd\mu(x) \right| \\
& \qquad \qquad \leq \hat{C} \l\tau + \mu(D(X)<2\tau)\r
\end{align*}
with probability at least $1-Cne^{-cn\tau^4}$ where $\hat{C},C,c>0$ are independent of $n$, $\eps$ and $\tau$.
\end{remark}

\begin{proof}[Proof of Theorem~\ref{thm:proofs:LDC:LDB}.]
For simplicity, we introduce the following notation: 
\begin{align*}
N_n(x_i) & :=1/n\sum_{j=1}^{n} \eta_{\varepsilon}\left(\left\|x_i-x_j\right\|_{X^\alpha}\right)|u(x_i)-u(x_j)|^2, \\
D_n(x_i) & :=1/n\sum_{j=1}^{n} \eta_{\varepsilon}\left(\left\|x_i-x_j\right\|_{X^\alpha}\right), \\
N(x) & :=\int_X \eta_{\varepsilon}\left(\|x - y\|_{X^\alpha}\right)|u(x) - u(y)|^2 \, \dd \mu(y), \text{ and } \\
D(x) & :=\int_X \eta_\eps\left(\|x - y\|_{X^\alpha}\right)\, \dd \mu(y).
\end{align*}
In this notation the theorem states
\[ \la \frac{1}{n} \sum_{i=1}^n \frac{N_n(x_i)}{D_n(x_i)} - \int_X \frac{N(x)}{D(x)} \, \dd \mu(x) \ra \leq  \frac{\delta_1 + 4\|u \|_{\Lp{\infty}}^2 \delta_2}{\tau} + 4\|u\|_{\Lp{\infty}}^2\mu(D(X) < 2\tau) + \delta_3 + \delta_4 \]
with probability at least $1-C(n,\delta_1,\delta_2,\delta_3,\delta_4,\tau)$.

By the triangle inequality we have
\begin{equation} \label{eq:proofs:LDC:NonLocalDisCtsTriIneq}
\begin{aligned}
\la \frac{1}{n} \sum_{i=1}^n \frac{N_n(x_i)}{D_n(x_i)} - \int_X \frac{N(x)}{D(x)} \, \dd \mu(x) \ra & \leq \underbrace{\la \frac{1}{n} \sum_{i=1}^n \frac{N_n(x_i)}{D_n(x_i)} - \frac{1}{n} \sum_{i=1}^n \frac{N(x_i)}{D(x_i)} \ra}_{=:\RomanI_n} \\
 & \qquad \qquad + \underbrace{\la \frac{1}{n} \sum_{i=1}^n \frac{N(x_i)}{D(x_i)} - \int_X \frac{N(x)}{D(x)} \, \dd \mu(x) \ra}_{=:\RomanII_n}
\end{aligned}
\end{equation}

We first estimate $\RomanI_n$. 
For a fixed $x_i$ and a fixed $\tau > 0$, we can write 
\begin{equation} 
\begin{aligned}
\left| \frac{N_n(x_i)}{D_n(x_i) } - \frac{N(x_i) }{D(x_i) } \right| & = \one_{E_\tau} \left|\frac{N_n(x_i)}{D_n(x_i) } - \frac{N(x_i) }{D(x_i) } \right| + \one_{E_\tau^c} \left|\frac{N_n(x_i)}{D_n(x_i) } - \frac{N(x_i) }{D(x_i) } \right|
\end{aligned}
\end{equation}
where $E_\tau = \{ \min\left(D_n(x_i), D(x_i)\right) < \tau \}$.
This allows us to study the events with small and large denominators separately. 
When $E_\tau^c$ holds ($\min\left(D_n(x_i), D(x_i)\right) \ge \tau$) we have  
$$
\begin{aligned}
\one_{E_\tau^c} \left|\frac{N_n(x_i)}{D_n(x_i) } - \frac{N(x_i) }{D(x_i) } \right|
    & = \one_{E_\tau^c} \left|\frac{N_n(x_i)}{D_n(x_i) } -\frac{N(x_i)}{D_n(x_i) } +\frac{N(x_i)}{D_n(x_i) }  - \frac{N(x_i) }{D(x_i) } \right| \\
    & = \one_{E_\tau^c} \left|\frac{N_n(x_i)-N(x_i)}{D_n(x_i) } + \frac{N(x_i)(D(x_i) - D_n(x_i)) }{D_n(x_i) D(x_i)} \right|.
\end{aligned}
$$
Since $u$ is bounded, we have $N(x_i) \le 4\| u\|_{\Lp{\infty}}^2 D(x_i)$ (and $N_n(x_i) \le 4\| u\|_{\Lp{\infty}}^2 D_n(x_i)$), which implies 
$$
\begin{aligned}
\one_{E_\tau^c} \left|\frac{N_n(x_i)}{D_n(x_i) } - \frac{N(x_i) }{D(x_i) } \right|
 \le   \left|\frac{N_n(x_i)-N(x_i)}{\tau } \right| + \frac{4\|u \|_{\Lp{\infty}}^2\left|D(x_i) - D_n(x_i)\right| }{\tau}. 
\end{aligned}
$$
Additionally, as $|N_n(x_i)-N(x_i)| \le 4\|\eta\|_{\Lp{\infty}}\|u\|_{\Lp{\infty}}^2$ and $|D_n(x_i)-D(x_i)| \le \|\eta\|_{\Lp{\infty}}$ by Bernstein's inequality \cite[Proposition 2.10]{wainwright2019high}, we have
$$ \bbP\left( |N_n(x_i)-N(x_i)| \le \delta_1\right) \geq 1- 2\exp\left(-\frac{n\delta_1^2}{32\|\eta\|_{\Lp{\infty}}^2\|u\|_{\Lp{\infty}}^4+8/3\|\eta\|_{\Lp{\infty}}\|u\|_{\Lp{\infty}}^2 \delta_1}\right). $$ 
Similarly,
\begin{equation} \label{eq:proofs:LDC:DDnBernsteinBound}
\bbP\left( |D_n(x_i)-D(x_i)| \le \delta_2\right) \geq 1- 2\exp\left(-\frac{n\delta_2^2}{2\|\eta\|_{\Lp{\infty}}^2+2/3\|\eta\|_{\Lp{\infty}} \delta_2}\right).
\end{equation}
Hence,  
\begin{equation} \label{eq:proofs:LDC:LDBthm_est_1a}
\begin{aligned}
& \bbP\left(\one_{E_\tau^c} \left|\frac{N_n(x_i)}{D_n(x_i) } - \frac{N(x_i) }{D(x_i) } \right| \le \frac{\delta_1 + 4\|u \|_{\Lp{\infty}}^2 \delta_2}{\tau}\right) \\
& \qquad \qquad \geq \bbP\l \left|\frac{N_n(x_i)-N(x_i)}{\tau } \right| \leq \frac{\delta_1}{\tau} \text{ and } \frac{4\|u \|_{\Lp{\infty}}^2\left|D(x_i) - D_n(x_i)\right| }{\tau} \leq \frac{4\|u\|_{\Lp{\infty}}^2 \delta_2}{\tau} \r \\
& \qquad \qquad \geq \bbP\l \left|N_n(x_i)-N(x_i) \right| \leq \delta_1\r + \bbP\l \left|D(x_i) - D_n(x_i)\right| \leq  \delta_2 \r - 1 \\
& \qquad \qquad \geq 1 - 2\exp\left(-\frac{n\delta_1^2}{32\|\eta\|_{\Lp{\infty}}^2\|u\|_{\Lp{\infty}}^4+8/3\|\eta\|_{\Lp{\infty}}\|u\|_{\Lp{\infty}}^2 \delta_1}\right) \\
& \qquad \qquad \qquad \qquad - 2\exp\left(-\frac{n\delta_2^2}{2\|\eta\|_{\Lp{\infty}}^2+2/3\|\eta\|_{\Lp{\infty}} \delta_2}\right).
\end{aligned}
\end{equation}

Next, we have 
\[
\begin{aligned}
\one_{E_\tau} \left|\frac{N_n(x_i)}{D_n(x_i) } - \frac{N(x_i) }{D(x_i) } \right|  \le 4\one_{E_\tau}  \|u\|_{\Lp{\infty}}^2 \leq 4 \|u\|_{\Lp{\infty}}^2.
\end{aligned}
\]
If $D(x_i) <2\tau$ then
\[ \bbP\left( \one_{E_\tau} \left|\frac{N_n(x_i)}{D_n(x_i) } - \frac{N(x_i) }{D(x_i) } \right| \leq 4 \underbrace{\one_{\{D(x_i)<2\tau\}}}_{=1} \|u\|^2_{\Lp{\infty}}  \right) = 1. \]
If $D(x_i)\geq 2\tau$, then by~\eqref{eq:proofs:LDC:DDnBernsteinBound} with $\delta_2 = \tau$,
\[
\begin{aligned}
& \bbP\left( \one_{E_\tau} \left|\frac{N_n(x_i)}{D_n(x_i) } - \frac{N(x_i) }{D(x_i) } \right| \leq 4 \underbrace{\one_{\{D(x_i)<2\tau\}}}_{=0} \|u\|^2_{\Lp{\infty}}  \right) \\
& \qquad = \underbrace{\bbP\left( \one_{E_\tau} \left|\frac{N_n(x_i)}{D_n(x_i) } - \frac{N(x_i) }{D(x_i) } \right| = 0 \bigg| E_\tau \right)}_{\geq 0} \bbP(E_\tau) \\
& \qquad \qquad \qquad + \underbrace{\bbP\left( \one_{E_\tau} \left|\frac{N_n(x_i)}{D_n(x_i) } - \frac{N(x_i) }{D(x_i) } \right| = 0 \bigg| E_\tau^{\rmc} \right)}_{=1} \bbP(E_\tau^{\rmc}) \\
& \qquad \geq \bbP\left( \{ \min\left(D_n(x_i), D(x_i)\right) \geq \tau \} \right) \\
& \qquad = \bbP\left( \{ D_n(x_i)\geq \tau\}\right) \\
& \qquad \geq \bbP\left( \left| D_n(x_i) - D(x_i)\right| \leq \tau\right) \\
& \qquad \geq 1 - 2\exp\left(-\frac{n\tau^2}{2\|\eta\|_{\Lp{\infty}}^2 + \frac{2}{3} \|\eta\|_{\Lp{\infty}}\tau}\right).
\end{aligned}
\]
Hence,  
\begin{equation} \label{eq:proofs:LDC:LDBthm_est_1b}
\begin{aligned}
& \bbP\left(\one_{E_\tau} \left|\frac{N_n(x_i)}{D_n(x_i) } - \frac{N(x_i) }{D(x_i) } \right| \leq 4 \one_{\{D(x_i)<2\tau\}} \|u\|^2_{\Lp{\infty}}\right) \\
& \qquad \qquad \geq 1 - 2 \one_{\{D(x_i)>2\tau\}} \exp\left(-\frac{n\tau^2}{2\|\eta\|_{\Lp{\infty}}^2+2/3\|\eta\|_{\Lp{\infty}}\tau}\right) \\
& \qquad \qquad \geq 1 - 2\exp\left(-\frac{n\tau^2}{2\|\eta\|_{\Lp{\infty}}^2+2/3\|\eta\|_{\Lp{\infty}} \tau}\right).
\end{aligned}
\end{equation}

Putting \eqref{eq:proofs:LDC:LDBthm_est_1a} and \eqref{eq:proofs:LDC:LDBthm_est_1b} together, we have 
\begin{equation} \label{eq:proofs:LDC:LDBthm_upperbound}
\begin{aligned}
& \bbP\l \la\frac{N_n(x_i)}{D_n(x_i)} - \frac{N(x_i)}{D(x_i)}\ra \leq \frac{\delta_1 + 4\|u\|_{\Lp{\infty}}^2\delta_2}{\tau} + 4 \one_{\{D(x_i)<2\tau\}} \|u\|_{\Lp{\infty}}^2 \r \\
& \qquad \geq \bbP\Bigg( \one_{E_\tau}\la\frac{N_n(x_i)}{D_n(x_i)} - \frac{N(x_i)}{D(x_i)}\ra \leq 4 \one_{\{D(x_i)<2\tau\}} \|u\|_{\Lp{\infty}}^2 \\
& \qquad   \qquad \text{and } \one_{E_\tau^c}\la\frac{N_n(x_i)}{D_n(x_i)} - \frac{N(x_i)}{D(x_i)}\ra \leq \frac{\delta_1 + 4\|u\|_{\Lp{\infty}}^2\delta_2}{\tau} \Bigg) \\
& \qquad \geq \bbP\left(\one_{E_\tau^c} \left|\frac{N_n(x_i)}{D_n(x_i) } - \frac{N(x_i) }{D(x_i) } \right| \le \frac{\delta_1 + 4\|u \|_{\Lp{\infty}}^2 \delta_2}{\tau}\right) \\
& \qquad  \qquad + \bbP\left(\one_{E_\tau} \left|\frac{N_n(x_i)}{D_n(x_i) } - \frac{N(x_i) }{D(x_i) } \right| \leq 4 \one_{\{D(x_i)<2\tau\}} \|u\|^2_{\Lp{\infty}}\right) - 1 \\
& \qquad \geq 1 - 2\exp\left(-\frac{n\delta_1^2}{32\|\eta\|_{\Lp{\infty}}^2\|u\|_{\Lp{\infty}}^4+8/3\|\eta\|_{\Lp{\infty}}\|u\|_{\Lp{\infty}}^2 \delta_1}\right) \\
& \qquad \qquad - 2\exp\left(-\frac{n\delta_2^2}{2\|\eta\|_{\Lp{\infty}}^2+2/3\|\eta\|_{\Lp{\infty}} \delta_2}\right) - 2\exp\left(-\frac{n\tau^2}{2\|\eta\|_{\Lp{\infty}}^2+2/3\|\eta\|_{\Lp{\infty}} \tau}\right).
\end{aligned}
\end{equation}

Applying inequality \eqref{eq:proofs:LDC:LDBthm_upperbound} to all $i = 1, \dots, n$ we have 
\[ 
\begin{aligned}
& \bbP\left( \left|\frac{1}{n} \sum_{i=1}^n \left(\frac{N_n(x_i)}{D_n(x_i)} - \frac{N(x_i)}{D(x_i)} \right) \right| \leq \frac{\delta_1 + 4\|u\|_{\Lp{\infty}}^2\delta_2}{\tau} + \frac{4 \|u\|_{\Lp{\infty}}^2}{n} \sum_{i=1}^n \one_{\{D(x_i)<2\tau\}} \right) \\
& \qquad \geq 1 - n + \sum_{i=1}^n \bbP\left( \left|\frac{N_n(x_i)}{D_n(x_i)} - \frac{N(x_i)}{D(x_i)} \right| \leq \frac{\delta_1 + 4\|u\|_{\Lp{\infty}}^2\delta_2}{\tau} + 4 \|u\|_{\Lp{\infty}}^2 \one_{\{D(x_i)<2\tau\}} \right) \\
& \qquad \geq 1 - 2n\exp\left(-\frac{n\delta_1^2}{32\|\eta\|_{\Lp{\infty}}^2\|u\|_{\Lp{\infty}}^4+8/3\|\eta\|_{\Lp{\infty}}\|u\|_{\Lp{\infty}}^2 \delta_1}\right) \\
& \qquad \qquad \qquad - 2n\exp\left(-\frac{n\delta_2^2}{2\|\eta\|_{\Lp{\infty}}^2+2/3\|\eta\|_{\Lp{\infty}} \delta_2}\right) - 2n\exp\left(-\frac{n\tau^2}{2\|\eta\|_{\Lp{\infty}}^2+2/3\|\eta\|_{\Lp{\infty}} \tau}\right).
\end{aligned}
\]

Using the same one-sided Bernstein's inequality~(\cite[Proposition 2.14]{wainwright2019high}), we have
$$
\bbP\left(\frac{4\|u\|_{\Lp{\infty}}^2}{n} \sum_{i=1}^{n}\one_{\left\{D(x_i) < 2\tau\right\}} - 4\|u\|_{\Lp{\infty}}^2 \mu(D(X)<2\tau) \le \delta_3 \right) \geq 1 - \exp\left( -\frac{n\delta_3^2}{32 \|u\|^4_{\Lp{\infty}} + \frac{8}{3} \|u\|_{\Lp{\infty}}^2 \delta_3} \right).
$$
Hence, 
\begin{align}
& \bbP\left( \left|\frac{1}{n} \sum_{i=1}^n \left(\frac{N_n(x_i)}{D_n(x_i)} - \frac{N(x_i)}{D(x_i)} \right) \right| \leq \frac{\delta_1 + 4\|u\|_{\Lp{\infty}}^2\delta_2}{\tau} + 4 \|u\|_{\Lp{\infty}}^2 \mu(D(X)<2\tau) + \delta_3 \right) \notag \\
& \qquad \geq \bbP\left( \left|\frac{1}{n} \sum_{i=1}^n \left(\frac{N_n(x_i)}{D_n(x_i)} - \frac{N(x_i)}{D(x_i)} \right) \right| \leq \frac{\delta_1 + 4\|u\|_{\Lp{\infty}}^2\delta_2}{\tau} + \frac{4 \|u\|_{\Lp{\infty}}^2}{n} \sum_{i=1}^n \one_{\{D(x_i)<2\tau\}} \right) \notag \\
& \qquad \qquad \qquad + \bbP\left( \frac{4\|u\|_{\Lp{\infty}}^2}{n} \sum_{i=1}^{n}\one_{\left\{D(x_i) < 2\tau\right\}} - 4\|u\|_{\Lp{\infty}}^2 \mu(D(X)<2\tau) \le \delta_3 \right) - 1 \notag \\
& \qquad \geq 1 - 2n\exp\left(-\frac{n\delta_1^2}{32\|\eta\|_{\Lp{\infty}}^2\|u\|_{\Lp{\infty}}^4+8/3\|\eta\|_{\Lp{\infty}}\|u\|_{\Lp{\infty}}^2 \delta_1}\right) \notag \\
& \qquad \qquad \qquad - 2n\exp\left(-\frac{n\delta_2^2}{2\|\eta\|_{\Lp{\infty}}^2+2/3\|\eta\|_{\Lp{\infty}} \delta_2}\right) - 2n\exp\left(-\frac{n\tau^2}{2\|\eta\|_{\Lp{\infty}}^2+2/3\|\eta\|_{\Lp{\infty}} \tau}\right) \label{eq:proofs:LDC:In} \\
& \qquad \qquad \qquad -  \exp\left( -\frac{n\delta_3^2}{32 \|u\|^4_{\Lp{\infty}} + \frac{8}{3} \|u\|_{\Lp{\infty}}^2 \delta_3} \right). \notag
\end{align}

Next, we apply Bernstein's inequality to the second term, $\RomanII_n$, in~\eqref{eq:proofs:LDC:NonLocalDisCtsTriIneq}. 
Recalling $N(x)\leq 4\|u\|_{\Lp{\infty}}^2 D(x)$, we have
\begin{equation} \label{eq:proofs:LDC:IIn}
\bbP\l\la \frac{1}{n} \sum_{i=1}^n \frac{N(x_i)}{D(x_i)} - \int_X \frac{N(x)}{D(x)} \, \dd \mu(x) \ra \leq \delta_4 \r \geq 1 - 2\exp\l -\frac{n\delta_4^2}{32\|u\|_{\Lp{\infty}}^2 + 8/3\|u\|_{\Lp{\infty}}^2\delta_4}\r.
\end{equation}

Combining~\eqref{eq:proofs:LDC:In} and~\eqref{eq:proofs:LDC:IIn} concludes the proof.
\end{proof}

The $n\to\infty$ is derived as a simple consequence of the above theorem.

\begin{corollary}
\label{cor:proofs:LDC:LDC}
Let the assumptions in Theorem~\ref{thm:proofs:LDC:LDB} be satisfied.  
Then with probability $1$ we have
\begin{align*}
& \lim_{n\rightarrow \infty}\frac{1}{n}
\sum_{i=1}^{n}\frac{\sum_{j=1}^{n} \eta_{\varepsilon}\left(\left\|x_i-x_j\right\|_{X^\alpha}\right)|u(x_i)-u(x_j)|^2}{ \sum_{j=1}^{n} \eta_{\varepsilon}\left(\left\|x_i-x_j\right\|_{X^\alpha}\right)} \\
& \qquad \qquad = \int_X \frac{\int_X \eta_\varepsilon\left(\|x - y\|_{X^\alpha}\right)|u(x) - u(y)|^2 \, \dd\mu(y)}{ \int_X \eta_\varepsilon\left(\|x - y\|_{X^\alpha}\right)\,\dd\mu(y) }\,\dd\mu(x).
\end{align*}
\end{corollary}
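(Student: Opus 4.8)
The plan is to upgrade the high-probability estimate of Theorem~\ref{thm:proofs:LDC:LDB}, in the simplified form of Remark~\ref{rem:proofs:LDC:thm2simp}, to an almost sure statement using the Borel--Cantelli lemma, and only afterwards send the truncation threshold $\tau$ to zero. Throughout I fix $\eps>0$ and abbreviate
\[
\Delta_n := \left| \frac{1}{n}\sum_{i=1}^n \frac{N_n(x_i)}{D_n(x_i)} - \int_X \frac{N(x)}{D(x)}\,\dd\mu(x)\right|,
\qquad D(x) = \int_X \eta_\eps(\|x-y\|_{X^\alpha})\,\dd\mu(y),
\]
reusing the quantities $N_n, D_n, N, D$ from the proof of Theorem~\ref{thm:proofs:LDC:LDB}. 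The assumptions needed there ($\eta$ bounded, $\mu$ a probability measure, $u\in\Lp{\infty}(X)$) are in force by hypothesis.

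First I would fix $\tau\in(0,1]$ and invoke Remark~\ref{rem:proofs:LDC:thm2simp}, which supplies constants $\hat C, C, c>0$ (independent of $n$, $\eps$, $\tau$) with
\[
\bbP\left(\Delta_n > \hat C(\tau + \mu(D(X)<2\tau))\right)\le C n e^{-cn\tau^4}.
\]
Since $\sum_{n=1}^\infty Cne^{-cn\tau^4}<\infty$ for every fixed $\tau>0$, the Borel--Cantelli lemma then gives that, almost surely, the event on the left occurs for only finitely many $n$, and hence
\[
\limsup_{n\to\infty}\Delta_n \le \hat C\big(\tau + \mu(D(X)<2\tau)\big)
\]
almost surely. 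Next I would apply this along a countable sequence $\tau_k\downarrow 0$ with $\tau_k\in(0,1]$: a countable intersection of probability-one events again has probability one, so almost surely the bound holds for all $k$, yielding $\limsup_{n\to\infty}\Delta_n \le \inf_{k} \hat C\big(\tau_k + \mu(D(X)<2\tau_k)\big)$.

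Closing the argument requires showing this infimum is zero, which is where the infinite-dimensional structure enters. The kernel $\eta_\eps(r)=e^{-r^2/\eps^2}$ of Assumption~\ref{ass:Setting:eta} is strictly positive whenever $r<\infty$, and by Proposition~\ref{prop:Back:Gauss:mu_support} $\mu$-almost every sample lies in $X^\alpha$ (under the trace conditions on $\lambda_i$), so for $\mu$-a.e.\ $x$ the integrand $\eta_\eps(\|x-y\|_{X^\alpha})$ is positive on a set of full $\mu$-measure in $y$, forcing $D(x)>0$. Hence $\mu(\{D(X)=0\})=0$; since $D\ge 0$, the sets $\{D(X)<2\tau_k\}$ decrease to $\{D(X)=0\}$, and continuity from above of the finite measure $\mu$ gives $\mu(D(X)<2\tau_k)\to 0$. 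As $\tau_k\to0$ as well, the infimum vanishes and $\limsup_{n\to\infty}\Delta_n=0$ almost surely, which is the claim.

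The hard part will be precisely Step~3: verifying $\mu(\{D=0\})=0$, because in infinite dimensions $\|x-y\|_{X^\alpha}$ can be infinite on a large set and there is no Lebesgue reference measure to fall back on. The argument must lean on the strict positivity of the Gaussian kernel together with the regularity statement $\supp(\mu)\subset X^\alpha$, ensuring $\|x-y\|_{X^\alpha}<\infty$ for $\mu$-a.e.\ pair $(x,y)$; without this the error term $\mu(D(X)<2\tau)$ need not tend to zero and the iterated limit would fail to close.
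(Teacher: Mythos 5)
Your proposal is correct and follows essentially the same route as the paper: apply the simplified bound of Remark~\ref{rem:proofs:LDC:thm2simp}, use Borel--Cantelli to make it almost sure, and then kill the term $\mu(D(X)<2\tau)$ by positivity of the Gaussian kernel. The only (immaterial) difference is that the paper couples the threshold to $n$ by choosing $\tau_n=\sqrt[4]{3\log n/(cn)}$ in a single Borel--Cantelli application, whereas you fix $\tau$, apply Borel--Cantelli for each fixed $\tau$, and then intersect over a countable sequence $\tau_k\downarrow 0$; both are valid, and your closing discussion of why $\mu(\{D=0\})=0$ is in fact more careful than the paper's one-line justification.
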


\begin{proof}
Following the simplification in Remark~\ref{rem:proofs:LDC:thm2simp} we have
\begin{align*}
& \left|\frac{1}{n}\sum_{i=1}^{n}\frac{\sum_{j=1}^{n} \eta_{\varepsilon}\left(\left\|x_i-x_j\right\|_{X^\alpha}\right)|u(x_i)-u(x_j)|^2}{ \sum_{j=1}^{n} \eta_{\varepsilon}\left(\left\|x_i-x_j\right\|_{X^\alpha}\right)} - \int_X \frac{\int_X \eta_\varepsilon\left(\|x - y\|_{X^\alpha}\right)|u(x) - u(y)|^2 \, \dd\mu(y)}{ \int_X \eta_\varepsilon\left(\|x - y\|_{X^\alpha}\right)\,\dd\mu(y) }\,\dd\mu(x) \right| \\
& \qquad \qquad \qquad \qquad \geq \hat{C}\l \tau + \mu(D(X)<2\tau) \r 
\end{align*}
with probability at most $Cne^{-cn\tau^4}$.
We can bound the probability by
\[ Cne^{-cn\tau^4} \leq Ce^{\log n - cn\tau^4} = Ce^{-\log n\l \frac{cn\tau^4}{\log n} - 1\r}. \]
We choose $\tau = \tau_n = \sqrt[4]{\frac{3\log n}{cn}}$ so that $e^{-\log n\l \frac{cn\tau^4}{\log n} - 1\r} = n^{-2}$.
By the Borel-Cantelli lemma there exists $N$ such that for all $n\geq N$
\begin{align*}
& \left|\frac{1}{n}\sum_{i=1}^{n}\frac{\sum_{j=1}^{n} \eta_{\varepsilon}\left(\left\|x_i-x_j\right\|_{X^\alpha}\right)|u(x_i)-u(x_j)|^2}{ \sum_{j=1}^{n} \eta_{\varepsilon}\left(\left\|x_i-x_j\right\|_{X^\alpha}\right)} - \int_X \frac{\int_X \eta_\varepsilon\left(\|x - y\|_{X^\alpha}\right)|u(x) - u(y)|^2 \, \dd\mu(y)}{ \int_X \eta_\varepsilon\left(\|x - y\|_{X^\alpha}\right)\,\dd\mu(y) }\,\dd\mu(x) \right| \\
& \qquad \qquad \qquad \qquad \leq \hat{C}\l \tau_n + \mu(D(X)<2\tau_n) \r
\end{align*}
and with probability one $N<\infty$.

By the dominated convergence theorem $\mu(D(X)<2\tau_n) \to 0$ (since $D(x)>0$ for every $x\in X$).
\end{proof}

\subsection{Continuum Localization Limits} \label{subsec:proofs:nonlocal}

In this section, we analyze the asymptotic behavior in the transition from the nonlocal continuum to the local continuum. The limit is established in two steps: first by performing a first-order Taylor expansion of the nonlocal objective function, and then by proving the convergence of this first-order approximation.

The following technical lemma will be used in Lemma \ref{lem:proofs:nonlocal:nonloc_der} to show that the Taylor expansion error can be controlled and vanishes asymptotically.

\begin{lemma}
\label{lem:proofs:nonlocal:4thmoment}
Suppose Assumptions~\ref{ass:Setting:X}-\ref{ass:Setting:eta} are satisfied.
Let $\alpha\in (-1,+\infty)$ and $\beta\in\bbR$.
Assume $\sum_{i=1}^\infty \lambda_i^{\beta+\frac12 - \frac{\alpha}{2}}<+\infty$, $\sum_{i=1}^\infty \lambda_i^{\alpha+1}<+\infty$ and $m_\mu\in X^\alpha$.
Then,
\[ \lim_{\eps \to 0} \frac{1}{\eps^2} \int_X \frac{\int_X \eta\l\frac{\|y - x\|_{X^\alpha}}{\eps}\r \| y-x\|_{X^\beta}^4 \, \dd\mu(y)}{ \int_X \eta\l\frac{\|y -x\|_{X^\alpha}}{\eps}\r\,\dd\mu(y) } \, \dd \mu(x)  = 0. \]
\end{lemma}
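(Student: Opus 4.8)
The plan is to reduce the nonlocal fourth-moment ratio to a Gaussian integral via the conjugacy lemma and then estimate the resulting explicit series term-by-term. First I would observe that, since $\sum_i\lambda_i^{2(\alpha+1)}\leq\lambda_1^{\alpha+1}\sum_i\lambda_i^{\alpha+1}<+\infty$ and $m_\mu\in X^\alpha$, Proposition~\ref{prop:Back:Gauss:mu_support} gives $\supp(\mu)\subset X^\alpha$, so the outer integral may be restricted to $X^\alpha$ and Lemma~\ref{lem:Back:Tech:Conj} applies for $\mu$-a.e.\ $x$. Taking $f=\|\cdot\|_{X^\beta}^4$ in that lemma turns the ratio into $\int_X\|y-x\|_{X^\beta}^4\,\dd\mu_{x,\eps}(y)$, so the quantity of interest becomes
\[ \frac{1}{\eps^2}\int_{X^\alpha}\int_X \|y-x\|_{X^\beta}^4\,\dd\mu_{x,\eps}(y)\,\dd\mu(x), \]
with $\mu_{x,\eps}=\cN(m_{x,\eps},C_{x,\eps})$ given by~\eqref{eq:Back:Tech:mxeps}--\eqref{eq:Back:Tech:Cxeps}.

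Next I would compute the inner integral in the eigenbasis. Writing $(y-x)_i=\langle y-x,e_i\rangle_X$, under $\mu_{x,\eps}$ we have $(y-x)_i=a_i+\sigma_i\xi_i$ with $\xi_i$ i.i.d.\ standard Gaussian, $a_i=\frac{\eps^2(m_i-x_i)}{\eps^2+2\lambda_i^{\alpha+1}}$ and $\sigma_i^2=\frac{\eps^2\lambda_i}{\eps^2+2\lambda_i^{\alpha+1}}$. Since $\|y-x\|_{X^\beta}^4=\lp\sum_i\lambda_i^\beta(y-x)_i^2\rp^2$, independence of the coordinates and the identity $\mathbb{E}[(a+\sigma\xi)^4]=a^4+6a^2\sigma^2+3\sigma^4$ give
\[ \int_X\|y-x\|_{X^\beta}^4\,\dd\mu_{x,\eps}(y) = \sum_{i}\lambda_i^{2\beta}\lp 4a_i^2\sigma_i^2+2\sigma_i^4\rp + \lp\sum_i\lambda_i^\beta(a_i^2+\sigma_i^2)\rp^2. \]
Integrating over $x\sim\mu$, where $m_i-x_i=-\sqrt{\lambda_i}\zeta_i$ with $\zeta_i$ i.i.d.\ standard Gaussian (so $\int_X a_i^2\,\dd\mu=\frac{\eps^4\lambda_i}{(\eps^2+2\lambda_i^{\alpha+1})^2}$ and $\mathrm{Var}(\zeta_i^2)=2$), the square term splits as $\mathrm{Var}_\mu(R)+\lp\int_X S\,\dd\mu\rp^2$, where $S:=\sum_i\lambda_i^\beta(a_i^2+\sigma_i^2)$ and $R:=\sum_i\lambda_i^\beta a_i^2$, and $\int_X S\,\dd\mu$ is exactly the series produced by the preceding second-moment lemma.

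This leaves three explicit series in $\lambda_i$ and $\eps$, each carrying the prefactor $1/\eps^2$. The two fluctuation series, namely $\frac{1}{\eps^2}\int_{X^\alpha}\sum_i\lambda_i^{2\beta}\lp 4a_i^2\sigma_i^2+2\sigma_i^4\rp\dd\mu$ and $\frac{1}{\eps^2}\mathrm{Var}_\mu(R)$, both vanish by dominated convergence with respect to the counting measure: every summand tends to $0$ as $\eps\to0$ for fixed $i$, and a binomial lower bound on $(\eps^2+2\lambda_i^{\alpha+1})^k$ (e.g.\ $(\eps^2+2\lambda_i^{\alpha+1})^k\geq\binom{k}{k-1}(\eps^2)^{k-1}(2\lambda_i^{\alpha+1})$) cancels all powers of $\eps$ and leaves the $\eps$-independent dominator $C\lambda_i^{2\beta+1-\alpha}=C\lp\lambda_i^{\beta+\frac12-\frac{\alpha}{2}}\rp^2$; this is summable because $\sum_i\lambda_i^{\beta+\frac12-\frac{\alpha}{2}}<+\infty$ forces its terms below $1$ eventually, whence $\sum_i\lambda_i^{2\beta+1-\alpha}<+\infty$.

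The main obstacle is the cross term $\frac{1}{\eps^2}\lp\int_X S\,\dd\mu\rp^2$. A crude estimate using $\eps^2+2\lambda_i^{\alpha+1}\geq2\sqrt2\,\eps\,\lambda_i^{(\alpha+1)/2}$ only shows $\int_X S\,\dd\mu=O(\eps)$, which makes this term merely $O(1)$ rather than $o(1)$. The fix is to write $\frac{1}{\eps^2}\lp\int_X S\,\dd\mu\rp^2 = \lp\eps\sum_i g_i(\eps)\rp^2$ with $g_i(\eps)=\frac{2\lambda_i^{\beta+1}(\eps^2+\lambda_i^{\alpha+1})}{(\eps^2+2\lambda_i^{\alpha+1})^2}$, and to push the surviving factor of $\eps$ inside the sum: then $\eps\,g_i(\eps)\to0$ for each fixed $i$, while $\eps\,g_i(\eps)\leq\frac{1}{\sqrt2}\lambda_i^{\beta+\frac12-\frac{\alpha}{2}}$ uniformly in $\eps$, so $\eps\sum_i g_i(\eps)\to0$ again by dominated convergence, and the cross term vanishes. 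Collecting the three limits yields the claim.
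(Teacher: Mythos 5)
Your proposal is correct and follows essentially the same route as the paper: reduce via the conjugacy Lemma~\ref{lem:Back:Tech:Conj}, compute the fourth moment explicitly in the eigenbasis of $C_\mu$, and kill each resulting series by dominated convergence using the bound $\eps^2+2\lambda_i^{\alpha+1}\geq 2\sqrt{2}\,\eps\,\lambda_i^{(\alpha+1)/2}$ to produce the summable dominator $\lambda_i^{\beta+\frac12-\frac{\alpha}{2}}$ (or its square). Your variance-based bookkeeping and the separate treatment of the three series is just a reorganization of the paper's bound by $3\lp\RomanIII_\eps+\RomanIV_\eps\rp^2$; in particular your $\eps\sum_i g_i(\eps)$ is exactly the paper's $\RomanIII_\eps+\RomanIV_\eps$, so the handling of the ``cross term'' coincides with the paper's.
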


\begin{proof}
By Lemma \ref{lem:Back:Tech:Conj}, we know that for $x\in X^\alpha$
\begin{align*}
\frac{\int_X \eta\left(\frac{\|y - x\|_{X^\alpha}}{\varepsilon}\right) \| y-x\|_{X^\beta}^4 \, \dd\mu(y)}{ \int_X \eta\left(\frac{\|y -x\|_{X^\alpha}}{\varepsilon}\right)\,\dd\mu(y) } = \int_X \|y-x \|_{X^\beta}^4 \,\dd \mu_{x,\varepsilon}(y), 
\end{align*}
where $\mu_{x,\varepsilon} = \mathcal{N}(m_{x,\eps},C_{x,\varepsilon})$ with $m_{x,\eps}$ and $C_{x,\eps}$ defined by~\eqref{eq:Back:Tech:mxeps} and~\eqref{eq:Back:Tech:Cxeps} respectively. 
We let $\bar{\lambda}_i = \frac{\eps^2 \lambda_i}{\eps^2+2\lambda_i^{\alpha+1}}$ be the eigenvalues of $C_{x,\eps}$ and $\bar{m}_i = \langle m_{x,\eps},e_i\rangle_X = \frac{\eps^2 m_i + 2\lambda_i^{\alpha+1}x_i}{\eps^2 + 2\lambda_i^{\alpha+1}}$ where $m_i = \langle m_\mu,e_i\rangle_X$.
By Proposition~\ref{prop:Back:Gauss:mu_support}, $\supp(\mu)\subset X^\alpha$.
Using the definition of the inner product in $X^\beta$ and Lemma~\ref{lem:Back:Gauss:KL_expansion} we can compute:
\begin{align*}
& \int_X \frac{\int_X \eta\left(\frac{\|y - x\|_{X^\alpha}}{\varepsilon}\right) \| y-x\|_{X^\beta}^4 \, \dd\mu(y)}{ \int_X \eta\left(\frac{\|y -x\|_{X^\alpha}}{\varepsilon}\right)\,\dd\mu(y) } \, \dd \mu(x) \\
& \,\, = \int_{X^\alpha} \int_X \|y-x\|_{X^\beta}^4 \, \dd \mu_{x,\eps}(y) \, \dd \mu(x) \\
& \,\, = \int_{X^\alpha} \int_X \l \sum_{i=1}^\infty \lambda_i^\beta (y_i-x_i)^2\r^2 \, \dd \mu_{x,\eps}(y) \, \dd \mu(x) \\
& \,\, = \int_{X^\alpha} \sum_{i\neq j} \int_X \lambda_i^\beta \lambda_j^\beta (y_i-x_i)^2(y_j-x_j)^2 \, \dd \mu_{x,\eps}(y) \, \dd \mu(x) + \int_{X^\alpha} \sum_{i=1}^\infty \int_X \lambda_i^{2\beta} (y_i-x_i)^4 \, \dd \mu_{x,\eps}(y) \, \dd \mu(x) \\
& \,\, = \int_{X^\alpha} \sum_{i\neq j} \int_{\bbR^2} \lambda_i^\beta \lambda_j^\beta \l \sqrt{\bar{\lambda}_i} \xi_i + \bar{m}_i - x_i \r^2 \l \sqrt{\bar{\lambda}_j} \xi_j + \bar{m}_j - x_j \r^2 \rho(\xi_i) \rho(\xi_j) \, \dd \xi_i \, \dd \xi_j \, \dd \mu(x) \\
& \qquad \,\, + \int_{X^\alpha} \sum_{i=1}^\infty \int_{\bbR} \lambda_i^{2\beta} \l \sqrt{\bar{\lambda}_i} \xi_i + \bar{m}_i - x_i\r^4 \rho(\xi_i) \, \dd \xi_i \, \dd \mu(x) \\
& \,\, = \int_{X^\alpha} \sum_{i\neq j} \lambda_i^\beta \lambda_j^\beta \ls \int_{\bbR} \ls \bar{\lambda}_i \xi_i^2 + (\bar{m}_i - x_i)^2 \rs \rho(\xi_i) \, \dd \xi_i \int_\bbR \ls \bar{\lambda}_j \xi_j^2 + (\bar{m}_j - x_j)^2 \rs \rho(\xi_j) \, \dd \xi_j \rs \, \dd \mu(x) \\
& \qquad \,\, + \int_{X^\alpha} \sum_{i=1}^\infty \lambda_i^{2\beta} \int_{\bbR} \ls \bar{\lambda}_i^2 \xi_i^4 + 6\bar{\lambda}_i \xi_i^2 (\bar{m}_i-x_i)^2 + (\bar{m}_i-x_i)^4 \rs \rho(x_i) \, \dd \xi_i \, \dd \mu(x) \\
& \,\, = \int_{X^\alpha}  \sum_{i\neq j} \lambda_i^\beta\lambda_j^\beta \ls \bar{\lambda}_i + (\bar{m}_i-x_i)^2\rs \ls \bar{\lambda}_j + (\bar{m}_j-x_j)^2\rs \, \dd \mu(x) \\
& \qquad \,\, + \int_{X^\alpha} \sum_{i=1}^\infty \lambda_i^{2\beta} \ls 3\bar{\lambda}_i^2 + 6 \bar{\lambda}_i(\bar{m}_i-x_i)^2 + (\bar{m}_i-x_i)^4 \rs \, \dd \mu(x) \\
& \,\, = \sum_{i\neq j} \lambda_i^\beta \lambda_j^\beta \ls \frac{\eps^2\lambda_i}{\eps^2+2\lambda_i^{\alpha+1}} + \frac{\eps^4}{(\eps^2+2\lambda_i^{\alpha+1})^2} \int_{\bbR} \lambda_i \xi_i^2 \rho(\xi_i) \, \dd \xi_i \rs \\
 & \qquad \,\, \times \ls \frac{\eps^2\lambda_j}{\eps^2+2\lambda_j^{\alpha+1}} + \frac{\eps^4}{(\eps^2+2\lambda_j^{\alpha+1})^2} \int_{\bbR} \lambda_j \xi_j^2 \rho(\xi_j) \, \dd \xi_j \rs \\
& \qquad \,\, + 3\eps^4 \sum_{i=1}^\infty \frac{\lambda_i^{2+2\beta}}{(\eps^2+2\lambda_i^{\alpha+1})^2} + 6\eps^6 \sum_{i=1}^\infty \frac{\lambda_i^{2\beta+1}}{(\eps^2+2\lambda_i^{\alpha+1})^3} \int_{X^\alpha} (m_i-x_i)^2 \, \dd \mu(x) \\
& \qquad \,\, + \eps^8 \sum_{i=1}^\infty \frac{\lambda_i^{2\beta}}{(\eps^2+2\lambda_i^{\alpha+1})^4} \int_{X^\alpha} (m_i-x_i)^4 \, \dd \mu(x) \\
& \,\, = \sum_{i\neq j} \lambda_i^\beta \lambda_j^\beta \ls \frac{\eps^2\lambda_i}{\eps^2+2\lambda_i^{\alpha+1}} + \frac{\eps^4\lambda_i}{(\eps^2+2\lambda_i^{\alpha+1})^2} \rs \ls \frac{\eps^2\lambda_j}{\eps^2+2\lambda_j^{\alpha+1}} + \frac{\eps^4\lambda_j}{(\eps^2+2\lambda_j^{\alpha+1})^2} \rs \\
& \qquad \,\, + 3\eps^4 \sum_{i=1}^\infty \frac{\lambda_i^{2+2\beta}}{(\eps^2+2\lambda_i^{\alpha+1})^2} + 6\eps^6 \sum_{i=1}^\infty \frac{\lambda_i^{2\beta+2}}{(\eps^2+2\lambda_i^{\alpha+1})^3} + 3\eps^8 \sum_{i=1}^\infty \frac{\lambda_i^{2\beta+2}}{(\eps^2+2\lambda_i^{\alpha+1})^4} \\
& \,\, = \lp\sum_{i =1 }^{\infty} \lambda_i^\beta \ls \frac{\eps^2\lambda_i}{\eps^2+2\lambda_i^{\alpha+1}} + \frac{\eps^4\lambda_i}{(\eps^2+2\lambda_i^{\alpha+1})^2} \rs \rp^2+ 2 \sum_{i=1}^{\infty} \lp \lambda_i^\beta \ls \frac{\eps^2\lambda_i}{\eps^2+2\lambda_i^{\alpha+1}} + \frac{\eps^4\lambda_i}{(\eps^2+2\lambda_i^{\alpha+1})^2} \rs \rp^2
\end{align*}
where we use the shorthand notation $x_i = \langle x,e_i\rangle_X$ and $y_i = \langle y,e_i\rangle_X$, and $\rho$ is the density of the standard Gaussian on $\bbR$.

Then we have 
\begin{align*}
& \frac{1}{\varepsilon^2} \int_X \frac{\int_X \eta\left(\frac{\|y - x\|_{X^\alpha}}{\varepsilon}\right) \| y-x\|_{X^\beta}^4 \, \dd\mu(y)}{ \int_X \eta\left(\frac{\|y -x\|_{X^\alpha}}{\varepsilon}\right)\,\dd\mu(y) } \, \dd \mu(x)  \\
& \,\, \le \frac{1}{\varepsilon^2}  \lp\sum_{i =1 }^{\infty} \lambda_i^\beta \ls \frac{\eps^2\lambda_i}{\eps^2+2\lambda_i^{\alpha+1}} + \frac{\eps^4\lambda_i}{(\eps^2+2\lambda_i^{\alpha+1})^2} \rs \rp^2+ \frac{2}{\varepsilon^2}   \sum_{i=1}^{\infty} \lp \lambda_i^\beta \ls \frac{\eps^2\lambda_i}{\eps^2+2\lambda_i^{\alpha+1}} + \frac{\eps^4\lambda_i}{(\eps^2+2\lambda_i^{\alpha+1})^2} \rs \rp^2\\
& \,\, \le \frac{3}{\varepsilon^2}  \lp\sum_{i =1 }^{\infty} \lambda_i^\beta \ls \frac{\eps^2\lambda_i}{\eps^2+2\lambda_i^{\alpha+1}} + \frac{\eps^4\lambda_i}{(\eps^2+2\lambda_i^{\alpha+1})^2} \rs \rp^2 \\
& \,\, = 3 \l \underbrace{\sum_{i=1}^\infty \frac{\eps \lambda_i^{\beta+1}}{\eps^2+2\lambda_i^{\alpha +1}}}_{=:\RomanIII_\eps} + \underbrace{\sum_{i=1}^\infty \frac{\eps^3 \lambda_i^{\beta+1}}{(\eps^2+2\lambda_i^{\alpha +1})^2}}_{=:\RomanIV_\eps} \r^2
\end{align*}

For $\RomanIV_\eps$ we observe that
\[ \sum_{i=1}^\infty \frac{\eps^3 \lambda_i^{\beta+1}}{(\eps^2+2\lambda_i^{\alpha +1})^2} \leq \sum_{i=1}^\infty \frac{\eps \lambda_i^{\beta+1}}{\eps^2+2\lambda_i^{\alpha +1}} \]
since $\eps^2\leq \eps^2+2\lambda_i^{\alpha+1}$.
So $\RomanIV_\eps\leq \RomanIII_\eps$.

For $\RomanIII_\eps$ one can easily check that
\[ \frac{\eps \lambda_i^{\beta+1}}{\eps^2+2\lambda_i^{\alpha +1}} \leq \frac{\sqrt{2}}{4} \lambda_i^{\beta+\frac12-\frac{\alpha}{2}}. \]
So by the dominated convergence theorem $\RomanIII_\eps\to 0$.
\end{proof}

We show the non-local to local convergence in two main steps.
In the first step we essentially use a Taylor expansion, $u(y)-u(x) \approx \langle D_{X^\gamma} u(x), y-x\rangle_{X^\gamma}$ to rewrite the non-local form using the derivative.

\begin{lemma}
\label{lem:proofs:nonlocal:nonloc_der}
Suppose Assumptions~\ref{ass:Setting:X}-\ref{ass:Setting:eta} are satisfied.
Let $\alpha \in (-1,+\infty)$ and $\beta\in \bbR$.
Assume $u\in\Ck{2}(X^\beta)$, $\sum_{i=1}^\infty \lambda_i^{\beta+\frac12-\frac{\alpha}{2}}<+\infty$, $\sum_{i=1}^\infty \lambda_i^{\alpha+1}<+\infty$, $\sum_{i=1}^\infty \lambda_i^{2(\beta+1)}<+\infty$ and $m_\mu\in X^\alpha\cap X^\beta$.
Then we have 
\[
\begin{aligned}
     & \lim_{\eps\to0} \Bigg|\frac{1}{ \varepsilon^2} \int_X\frac{\int_X \eta\left(\frac{\|y - x\|_{X^\alpha}}{\varepsilon}\right)| u(y) - u(x)|^2 \, \dd\mu(y)}{ \int_X \eta\left(\frac{\|y -x\|_{X^\alpha}}{\varepsilon}\right)\,\dd\mu(y) }\,\dd\mu(x) \\
     & \qquad \qquad -  \frac{1}{ \varepsilon^2} \int_X\frac{\int_X \eta\left(\frac{\|y - x\|_{X^\alpha}}{\varepsilon}\right)\langle D_{X^\beta}u(x), y - x\rangle_{X^\beta}^2 \, \dd\mu(y)}{ \int_X \eta\left(\frac{\|y -x\|_{X^\alpha}}{\varepsilon}\right)\,\dd\mu(y) }\,\dd\mu(x)  \Bigg| = 0.
\end{aligned}
\]
\end{lemma}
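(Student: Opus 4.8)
The plan is to linearize $u(y)-u(x)$ by Taylor's theorem and to show that the contribution of the remainder, once squared and integrated against the normalized kernel, is negligible at scale $\eps^2$. Since $u\in\Ck{2}(X^\beta)$, the constant $M:=\sup_{x}\sup_{h}\|D_{X^\beta}^2u(x;h)\|_{X^\beta}/\|h\|_{X^\beta}$ is finite, so taking $h=y-x$ in the Taylor expansion stated in Section~\ref{sec:Setting} gives
\[ u(y)-u(x)=L(x,y)+R(x,y),\qquad L(x,y):=\langle D_{X^\beta}u(x),y-x\rangle_{X^\beta}, \]
with the remainder bound $|R(x,y)|\le\tfrac{M}{2}\|y-x\|_{X^\beta}^2$. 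The elementary identity $|u(y)-u(x)|^2-L(x,y)^2=2L(x,y)R(x,y)+R(x,y)^2$ then shows that the quantity inside the limit equals $\tfrac1{\eps^2}\int_X\langle 2LR+R^2\rangle_x\,\dd\mu(x)$, where I write $\langle\phi\rangle_x$ for the normalized inner average $\big(\int_X\eta(\|y-x\|_{X^\alpha}/\eps)\phi(y)\,\dd\mu(y)\big)/\big(\int_X\eta(\|y-x\|_{X^\alpha}/\eps)\,\dd\mu(y)\big)$. Since $\sum_i\lambda_i^{\alpha+1}<+\infty$ forces $\sum_i\lambda_i^{2(\alpha+1)}<+\infty$, Proposition~\ref{prop:Back:Gauss:mu_support} gives $\supp(\mu)\subset X^\alpha$, and Lemma~\ref{lem:Back:Tech:Conj} identifies $\langle\phi\rangle_x=\int_X\phi\,\dd\mu_{x,\eps}$ for $\mu$-a.e.\ $x$; it suffices to prove that both the $R^2$ and the $2LR$ contributions vanish.

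Set $Q:=\tfrac1{\eps^2}\int_X\langle R^2\rangle_x\,\dd\mu(x)$ and $P:=\tfrac1{\eps^2}\int_X\langle L^2\rangle_x\,\dd\mu(x)$. From $R^2\le\tfrac{M^2}{4}\|y-x\|_{X^\beta}^4$ and Lemma~\ref{lem:proofs:nonlocal:4thmoment} we obtain $Q\to0$, which already disposes of the $R^2$ term. For the cross term, Cauchy--Schwarz applied first in $y$ (against the probability measure $\mu_{x,\eps}$) and then in $x$ (against $\mu$) yields
\[ \Big|\tfrac1{\eps^2}\int_X\langle 2LR\rangle_x\,\dd\mu(x)\Big|\le 2\sqrt{P}\,\sqrt{Q}. \]
Because $Q\to0$, the whole difference tends to $0$ as soon as $P$ is bounded uniformly in $\eps$, which is the only nontrivial point remaining.

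To bound $P$ I invoke the conjugacy Lemma~\ref{lem:Back:Tech:Conj}: for $\mu$-a.e.\ $x$ the scalar $L(x,\cdot)$ is Gaussian under $\mu_{x,\eps}$, so $\langle L^2\rangle_x=\mathcal V(x)+\mathcal M(x)^2$ with $\mathcal V(x)=\sum_i\lambda_i^{2\beta}d_i(x)^2\tfrac{\eps^2\lambda_i}{\eps^2+2\lambda_i^{\alpha+1}}$ and $\mathcal M(x)=\sum_i\lambda_i^{\beta}d_i(x)\tfrac{\eps^2(m_i-x_i)}{\eps^2+2\lambda_i^{\alpha+1}}$, where $d_i(x)=\langle D_{X^\beta}u(x),e_i\rangle_X$. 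The variance part is controlled cleanly by retaining the summable weight: using $\tfrac{\lambda_i}{\eps^2+2\lambda_i^{\alpha+1}}\le\tfrac12\lambda_i^{-\alpha}$ gives $\tfrac1{\eps^2}\mathcal V(x)\le\tfrac12\sum_i\lambda_i^{2\beta-\alpha}d_i(x)^2=\tfrac12\|D_{X^\beta}u(x)\|_{X^{2\beta-\alpha}}^2$, whose $x$-integral is finite because $u\in\Ck{2}(X^\beta)$ forces $\|D_{X^\beta}u(x)\|_{X^\beta}$ to grow at most linearly in $\|x\|_{X^\beta}$, the embedding $X^\beta\hookrightarrow X^{2\beta-\alpha}$ (valid for $\beta\ge\alpha$) converts this into control of the $X^{2\beta-\alpha}$ norm, and $\mu$ has finite second moment on $X^\beta$ (Proposition~\ref{prop:Back:Gauss:mu_support}, using $\sum_i\lambda_i^{2(\beta+1)}<+\infty$ and $m_\mu\in X^\beta$). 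For the mean part, the crucial device is again to keep the per-mode weight $\lambda_i^{2\beta-\alpha}d_i(x)^2$ intact rather than passing to a scalar bound: integrating in $x$ first and using $\int_X(m_i-x_i)^2\,\dd\mu=\lambda_i$ pairs this summable weight against the uniformly bounded factor $\tfrac{\eps^2\lambda_i^{\alpha+1}}{(\eps^2+2\lambda_i^{\alpha+1})^2}\le\tfrac18$, yielding a bound on $\tfrac1{\eps^2}\int_X\mathcal M(x)^2\,\dd\mu$ uniform in $\eps$ (and in fact vanishing, by dominated convergence).

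The main obstacle is precisely this uniform control of $\tfrac1{\eps^2}\int_X\mathcal M(x)^2\,\dd\mu$. The naive estimate that pulls $\|D_{X^\beta}u(x)\|_{X^{2\beta-\alpha}}^2$ out as a scalar factor $\lesssim 1+\|x\|_{X^\beta}^2$ fails, since it leaves a sum $\sum_i\tfrac{\eps^2\lambda_i^{\alpha+1}}{(\eps^2+2\lambda_i^{\alpha+1})^2}$ of bump-type terms that need not stay bounded for slowly decaying spectra; the localization provided by the kernel does \emph{not} make $\|y-x\|_{X^\alpha}$ small, because the high modes of $\mu_{x,\eps}$ remain essentially unconstrained. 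The delicacy therefore lies in handling the joint $x$-dependence of $D_{X^\beta}u(x)$ and of the fluctuation $y-x$, which must be absorbed by the uniform second-derivative bound coming from $u\in\Ck{2}(X^\beta)$ together with the summability hypotheses $\sum_i\lambda_i^{\beta+\frac12-\frac\alpha2}<+\infty$ and $\sum_i\lambda_i^{\alpha+1}<+\infty$; everything else, namely the Taylor remainder estimate and the $R^2$ term, is routine once Lemma~\ref{lem:proofs:nonlocal:4thmoment} is available.
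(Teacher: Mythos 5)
Your strategy is the same as the paper's: Taylor-expand $u(y)-u(x)$, write $|u(y)-u(x)|^2-L^2=2LR+R^2$, dispose of the $R^2$ term with Lemma~\ref{lem:proofs:nonlocal:4thmoment}, and control the cross term by playing it against the second-moment quantity $P=\frac1{\eps^2}\int_X\langle L^2\rangle_x\,\dd\mu(x)$ (the paper uses Young's inequality with a parameter $\delta\to0$ and cites Proposition~\ref{prop:proofs:nonlocal:final} for the convergence of $P$; your Cauchy--Schwarz bound $2\sqrt{P}\sqrt{Q}$ is an equivalent device). Your Gaussian decomposition $\langle L^2\rangle_x=\mathcal V(x)+\mathcal M(x)^2$ and the treatment of the variance part reproduce the content of Proposition~\ref{prop:proofs:nonlocal:final} essentially verbatim, so up to that point the proposal matches the paper.

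The gap is in the mean part, which you correctly single out as the only delicate step but then do not actually resolve. Your claim is that ``integrating in $x$ first and using $\int_X(m_i-x_i)^2\,\dd\mu=\lambda_i$ pairs the weight $\lambda_i^{2\beta-\alpha}d_i(x)^2$ against the bounded factor $\frac{\eps^2\lambda_i^{\alpha+1}}{(\eps^2+2\lambda_i^{\alpha+1})^2}\le\frac18$.'' This does not go through: $d_i(x)^2$ and $(m_i-x_i)^2$ are both functions of $x$ and are correlated under $\mu$, so you cannot replace $(m_i-x_i)^2$ by its mean while keeping $d_i(x)^2$ under the integral; decoupling them (e.g.\ by a further Cauchy--Schwarz in $x$) reintroduces precisely the sum $\sum_i\frac{\eps^2\lambda_i^{\alpha+1}}{(\eps^2+2\lambda_i^{\alpha+1})^2}$ that you yourself observe need not stay bounded as $\eps\to0$. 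Moreover, even granting the decoupling, a per-mode bound of $\frac18$ is not summable over $i$, so the dominated convergence you invoke to get the term to vanish has no dominating function, and the off-diagonal terms of $\mathcal M(x)^2$ are not of the claimed per-mode form at all. Tellingly, the hypothesis $\sum_i\lambda_i^{\beta+\frac12-\frac{\alpha}{2}}<+\infty$ --- which is what actually makes the mean term vanish --- never enters your estimate. The fix is a different Cauchy--Schwarz split: the paper (in Proposition~\ref{prop:proofs:nonlocal:final}) puts the weight $\lambda_i^{\beta}$ on both factors, obtaining $\frac1{\eps^2}\mathcal M(x)^2\le \|D_{X^\beta}u(x)\|_{X^\beta}^2\cdot\eps^2\sum_i\frac{\lambda_i^{\beta}(m_i-x_i)^2}{(\eps^2+2\lambda_i^{\alpha+1})^2}$, whose $x$-integral is controlled by $\eps\,\RomanIV_\eps\to0$; alternatively the split $a_i=\sqrt{c_i}\,d_i$, $b_i=\sqrt{c_i}\,(m_i-x_i)/\sqrt{\lambda_i}$ with $c_i=\frac{\eps\lambda_i^{\beta+1/2}}{\eps^2+2\lambda_i^{\alpha+1}}$ gives $\sum_i c_i d_i^2\lesssim\|D_{X^\beta}u(x)\|_{X^{2\beta-\alpha}}^2$ and $\int_X\sum_i c_i(m_i-x_i)^2/\lambda_i\,\dd\mu=\sum_i c_i=\RomanIII_\eps\to0$ by the summability of $\lambda_i^{\beta+\frac12-\frac{\alpha}{2}}$. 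Without one of these choices the proposal does not close.
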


\begin{proof}
By Proposition~\ref{prop:Back:Gauss:mu_support} we have $\supp(\mu)\subset X^\beta$.
In the sequel, given $x,y\in \supp(\mu)$, we can assume $x,y\in X^\beta$.

For convenience, let us define 
\[ \zeta_\eps:= \frac{1}{\eps^2} \left|  \int_X\frac{\int_X \eta\left(\frac{\|y - x\|_{X^\alpha}}{\varepsilon}\right)\lp| u(y) - u(x)|^2 - \langle D_{X^\beta}u(x), y - x\rangle_{X^\beta}^2 \rp \, \dd\mu(y)}{ \int_X \eta\left(\frac{\|y -x\|_{X^\alpha}}{\varepsilon}\right) \, \dd\mu(y) } \, \dd\mu(x) \right|. \]
By Jensen's inequality we have 
\begin{equation}
\label{eq:proofs:nonlocal:tri_ineq}
\begin{aligned}
\zeta_\eps & \le \frac{1}{ \varepsilon^2} \int_X\frac{\int_X   \eta\left(\frac{\|y - x\|_{X^\alpha}}{\varepsilon}\right)\left| (| u(y) - u(x)|^2 - \langle D_{X^\beta}u(x), y - x\rangle_{X^\beta}^2 )\right|\,\dd\mu(y)}{ \int_X \eta\left(\frac{\|y -x\|_{X^\alpha}}{\varepsilon}\right)\,\dd\mu(y) }\,\dd\mu(x).  
\end{aligned}
\end{equation}
Since $u\in \Ck{2}(X^\beta)$,
\begin{equation} \label{eq:proofs:nonlocal:taylor_exp}
u(y) - u(x) = \langle D_{X^\beta} u(x),y-x\rangle_{X^\beta} + R_{x,y},
\end{equation}
where $R_{x,y} = \int_0^1 (1-\tau)\langle D_{X^\beta}^2u(x + \tau (y-x); y-x),y-x\rangle_{X^\beta} \, \dd\tau$. 
By the definition of $u\in\Ck{2}(X^\beta)$ there exists $M<+\infty$ such that $\|D_{X^\beta}^2 u(x;h)\|_{X^\beta} \leq M \|h\|_{X^\beta}$ for all $x,h\in X^\beta$.
Hence,
$$ |R_{x,y}| \le M \|y-x\|_{X^\beta}^2. $$

By~\eqref{eq:proofs:nonlocal:taylor_exp}, we have 
\begin{align*}
\la \la u(y) - u(x)\ra^2 - \langle D_{X^\beta} u(x), y-x \rangle_{X^\beta}^2  \ra &= \la 2\langle D_{X^\beta}u(x), y-x \rangle_{X^\beta} R_{x,y} + R_{x,y}^2 \ra \\
 & \le 2M \la \langle D_{X^\beta}u(x), y-x\rangle_{X^\beta}\ra \|y-x \|_{X^\beta}^2 + M^2 \|y-x \|_{X^\beta}^4.
\end{align*}
By inserting the inequality back into the right-hand of \eqref{eq:proofs:nonlocal:tri_ineq}, we acquire the following upper bound    
$$
\begin{aligned}
\zeta_\eps & \leq \frac{2M}{\eps^2} \int_X  \frac{\int_X \eta\left(\frac{\|y - x\|_{X^\alpha}}{\varepsilon}\right)|\langle D_{X^\beta}u(x), y - x\rangle_{X^\beta}|\| y-x\|_{X^\beta}^2\, \dd\mu(y)}{ \int_X \eta\left(\frac{\|y -x\|_{X^\alpha}}{\varepsilon}\right)\,\dd\mu(y) }\,\dd\mu(x) \\
 & \qquad + \frac{M^2}{ \eps^2} \int_X  \frac{\int_X \eta\left(\frac{\|y - x\|_{X^\alpha}}{\varepsilon}\right)\| y-x\|_{X^\beta}^4\, \dd\mu(y)}{ \int_X \eta\left(\frac{\|y -x\|_{X^\alpha}}{\varepsilon}\right)\,\dd\mu(y) }\,\dd\mu(x) \\
 & \leq \underbrace{\frac{\delta M}{ \eps^2} \int_X  \frac{\int_X \eta\left(\frac{\|y - x\|_{X^\alpha}}{\varepsilon}\right)\langle D_{X^\beta}u(x), y - x\rangle_{X^\beta}^2\, \dd\mu(y)}{ \int_X \eta\left(\frac{\|y -x\|_{X^\alpha}}{\varepsilon}\right)\,\dd\mu(y) }\,\dd\mu(x)}_{\to \frac{\delta M}{2}\int_X  \left \| D_{X^\beta} u(x)\right\|_{X^{2\beta-\alpha}}^2 \, \dd \mu(x)} \\
 & \qquad + \l M+\frac{1}{\delta}\r \underbrace{\frac{M}{\eps^2} \int_X  \frac{\int_X \eta\left(\frac{\|y - x\|_{X^\alpha}}{\varepsilon}\right)\| y-x\|_{X^\beta}^4\, \dd\mu(y)}{ \int_X \eta\left(\frac{\|y -x\|_{X^\alpha}}{\varepsilon}\right)\,\dd\mu(y) }\,\dd\mu(x)}_{\to 0} \\
 & \to \frac{\delta M}{2}\int_X  \left \| D_{X^\beta} u(x)\right\|_{X^{2\beta-\alpha}}^2 \, \dd \mu(x)
\end{aligned}
$$
as $\eps\to 0$, by Lemma~\ref{lem:proofs:nonlocal:4thmoment} and Proposition~\ref{prop:proofs:nonlocal:final} for any $\delta>0$ using $|ab|\leq \frac{\delta a^2}{2} + \frac{b^2}{2\delta}$. 
Since $\lim_{\eps\to0} \zeta_\eps \leq \frac{\delta M}{2}\int_X  \left \| D_{X^{\beta}} u(x)\right\|_{X^{2\beta-\alpha}}^2 \, \dd \mu(x)$ for all $\delta>0$ then we take $\delta\to 0$ to conclude the result.
\end{proof}

The second (and final) step of the non-local to local convergence is to show that we can remove the inner product and use only the norm of the gradient.

\begin{proposition}
\label{prop:proofs:nonlocal:final}
Suppose Assumptions~\ref{ass:Setting:X}-\ref{ass:Setting:eta} are satisfied.
Let $\alpha\in (-1,+\infty)$ and $\beta\in\bbR$.
Assume $u\in\Ck{1}(X^\beta)$, $\sum_{i=1}^\infty \lambda_i^{\beta+\frac12-\frac{\alpha}{2}}<+\infty$, $\sum_{i=1}^\infty\lambda_i^{\alpha+1}<+\infty$, $\sum_{i=1}^\infty\lambda_i^{2(\beta+1)}<+\infty$ and $m_\mu\in X^\alpha\cap X^\beta$.
Then we have 
\[ 
\lim_{\eps \to 0} \frac{1}{\eps^2} \int_X \frac{\int_X \eta\l\frac{\|y - x\|_{X^\alpha}}{\eps}\r\langle D_{X^\beta}u(x), y - x\rangle^2_{X^\beta} \, \dd\mu(y)}{ \int_X \eta\l\frac{\|y -x\|_{X^\alpha}}{\eps}\r \, \dd\mu(y) }\,\dd\mu(x) =  \frac12 \int_X  \| D_{X^\beta}u(x)\|^2_{X^{2\beta-\alpha}} \, \dd \mu(x).
\]    
\end{proposition}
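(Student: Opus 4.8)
The plan is to collapse the nonlocal quotient to a single Gaussian second-moment computation and then let the covariance of the conjugate measure shrink. First I would fix $x\in\supp(\mu)$, which by Proposition~\ref{prop:Back:Gauss:mu_support} (using $\sum_i\lambda_i^{2(\beta+1)}<\infty$ and $m_\mu\in X^\beta$) lies in $X^\beta$, so that $g:=D_{X^\beta}u(x)\in X^\beta$ is a fixed vector and $z\mapsto\langle g,z\rangle_{X^\beta}^2$ is measurable. Applying Lemma~\ref{lem:Back:Tech:Conj} with this choice of $f$ rewrites the inner quotient as $\int_X\langle g,y-x\rangle_{X^\beta}^2\,\dd\mu_{x,\eps}(y)$, where $\mu_{x,\eps}=\cN(m_{x,\eps},C_{x,\eps})$ has, in the eigenbasis $\{e_i\}$, eigenvalues $\bar\lambda_i=\frac{\eps^2\lambda_i}{\eps^2+2\lambda_i^{\alpha+1}}$ and coordinate means $\bar m_i-x_i=\frac{\eps^2(m_i-x_i)}{\eps^2+2\lambda_i^{\alpha+1}}$.

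Second, I would diagonalise. With $g_i=\langle g,e_i\rangle_X$, the scalar $\langle g,y-x\rangle_{X^\beta}=\sum_i\lambda_i^\beta g_i\langle y-x,e_i\rangle_X$ is, under $\mu_{x,\eps}$, a one-dimensional Gaussian with independent coordinates, so its second moment splits into a variance part and a squared-mean part:
\[ \int_X\langle g,y-x\rangle_{X^\beta}^2\,\dd\mu_{x,\eps}(y)=\underbrace{\sum_{i}\lambda_i^{2\beta}g_i^2\,\bar\lambda_i}_{V_\eps(x)}+\underbrace{\Big(\sum_i\lambda_i^\beta g_i(\bar m_i-x_i)\Big)^2}_{M_\eps(x)}. \]
The term $V_\eps$ is the main contribution and $M_\eps$ is a negligible remainder. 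For $V_\eps$, dividing by $\eps^2$ gives $\frac{V_\eps(x)}{\eps^2}=\sum_i\frac{\lambda_i^{2\beta+1}g_i^2}{\eps^2+2\lambda_i^{\alpha+1}}$, a sum of nonnegative terms each increasing as $\eps\downarrow0$. By Tonelli I would interchange $\int_X\cdots\,\dd\mu$ with the sum, and by monotone convergence send $\eps\to0$ in each coefficient $\frac{\lambda_i^{2\beta+1}}{\eps^2+2\lambda_i^{\alpha+1}}\uparrow\frac12\lambda_i^{2\beta-\alpha}$, yielding $\lim_{\eps\to0}\int_X\frac{V_\eps}{\eps^2}\,\dd\mu=\frac12\int_X\sum_i\lambda_i^{2\beta-\alpha}g_i^2\,\dd\mu=\frac12\int_X\|D_{X^\beta}u(x)\|_{X^{2\beta-\alpha}}^2\,\dd\mu(x)$, the claimed limit (finite by the remark following Theorem~\ref{thm:Setting:PointCons}).

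The hard part will be showing $\lim_{\eps\to0}\frac1{\eps^2}\int_X M_\eps(x)\,\dd\mu(x)=0$. Pointwise this is immediate, since $\bar m_i-x_i=O(\eps^2)$ forces $\frac{M_\eps(x)}{\eps^2}=\eps^2\big(\sum_i\frac{\lambda_i^\beta g_i(m_i-x_i)}{\eps^2+2\lambda_i^{\alpha+1}}\big)^2\to0$; the obstruction is passing this through the $x$-integral, because $\|m_\mu-x\|$ is unbounded and the gradient $g=g(x)$ itself depends on $x$. My plan is to bound $\int_X\big(\sum_i\frac{\lambda_i^\beta g_i(x)(m_i-x_i)}{\eps^2+2\lambda_i^{\alpha+1}}\big)^2\,\dd\mu(x)$ uniformly in $\eps$ and then multiply by $\eps^2$. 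Substituting $m_i-x_i=-\sqrt{\lambda_i}\,\xi_i$ with $\xi_i$ i.i.d.\ standard Gaussians (Lemma~\ref{lem:Back:Gauss:KL_expansion}) and expanding the square, the orthogonality $\mathbb{E}[\xi_i\xi_j]=\delta_{ij}$ removes the dominant balanced interaction and reduces matters to a diagonal-type sum dominated by $\frac12\|D_{X^\beta}u\|_{X^{2\beta-\alpha}}^2$, which is integrable and lets dominated convergence act.

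The genuinely delicate piece is the residual correlation coming from the $x$-dependence of $g$: controlling it will require the regularity of $u\in\Ck{1}(X^\beta)$ together with the summability hypotheses $\sum_i\lambda_i^{\beta+\frac12-\frac\alpha2}<\infty$ and $\sum_i\lambda_i^{\alpha+1}<\infty$ and the finite polynomial moments of $\mu$ (again via Proposition~\ref{prop:Back:Gauss:mu_support}). I expect the main difficulty to lie exactly in matching these weighted moments to the available summability, mirroring the AM–GM estimate $\eps^2+2\lambda_i^{\alpha+1}\ge2\sqrt2\,\eps\lambda_i^{(\alpha+1)/2}$ already exploited in the bound on $\RomanIII_\eps$ in Lemma~\ref{lem:proofs:nonlocal:4thmoment}. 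Assembling the two limits then gives the stated identity.
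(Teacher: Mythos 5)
Your setup and main term match the paper's proof: you invoke Lemma~\ref{lem:Back:Tech:Conj} with $f(z)=\langle D_{X^\beta}u(x),z\rangle_{X^\beta}^2$, diagonalise in the eigenbasis to split the second moment into a variance part $V_\eps$ and a squared-mean part $M_\eps$, and your identification $\bar\lambda_i=\frac{\eps^2\lambda_i}{\eps^2+2\lambda_i^{\alpha+1}}$, $\bar m_i-x_i=\frac{\eps^2(m_i-x_i)}{\eps^2+2\lambda_i^{\alpha+1}}$ is exactly right. For $V_\eps$ you use Tonelli plus monotone convergence where the paper uses the pointwise domination $\frac{\lambda_i^{2\beta+1}}{\eps^2+2\lambda_i^{\alpha+1}}\le\frac12\lambda_i^{2\beta-\alpha}$ and dominated convergence; both are valid and give the claimed limit.

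The gap is in the remainder term $M_\eps$, which you correctly identify as the hard part but do not actually close. Your proposed mechanism --- substitute $m_i-x_i=-\sqrt{\lambda_i}\xi_i$, expand the square, and let $\mathbb{E}[\xi_i\xi_j]=\delta_{ij}$ kill the off-diagonal terms --- does not apply as stated, because the coefficients $g_i(x)=\langle D_{X^\beta}u(x),e_i\rangle_X$ are themselves functions of $x$ and hence of the entire sequence $(\xi_j)_j$; the cross terms $\mathbb{E}[g_i(x)g_j(x)\xi_i\xi_j]$ do not vanish, and you explicitly defer this ``residual correlation'' without resolving it. The paper sidesteps the issue entirely with a Cauchy--Schwarz inequality over the index $i$ \emph{before} integrating in $x$:
\[ \Bigl(\sum_i \tfrac{\lambda_i^\beta g_i(x)(m_i-x_i)}{\eps^2+2\lambda_i^{\alpha+1}}\Bigr)^2 \le \|D_{X^\beta}u(x)\|_{X^\beta}^2 \sum_i \frac{\lambda_i^\beta(m_i-x_i)^2}{(\eps^2+2\lambda_i^{\alpha+1})^2}, \]
which decouples the gradient from the Gaussian fluctuation. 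Bounding $\|D_{X^\beta}u(x)\|_{X^\beta}$ by its supremum (part of the $\Ck{1}(X^\beta)$ hypothesis) and using $\int_X(m_i-x_i)^2\,\dd\mu(x)=\lambda_i$ reduces $\eps^2\int_X M_\eps/\eps^2\,\dd\mu$ to a multiple of $\sqrt{\eps}\,\sqrt{\RomanIV_\eps}$ with $\RomanIV_\eps=\sum_i\frac{\eps^3\lambda_i^{\beta+1}}{(\eps^2+2\lambda_i^{\alpha+1})^2}$, which was already shown to vanish in Lemma~\ref{lem:proofs:nonlocal:4thmoment} via the very AM--GM bound you cite. Until you supply an argument of this kind (or otherwise control the correlation you flag), the proof of the remainder estimate is incomplete.
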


\begin{proof}
Take $f(z) = \langle A(x), z\rangle_{X^\beta}^2$ where $A(x) = D_{X^\beta}u(x)$ in Lemma~\ref{lem:Back:Tech:Conj} to infer, for any $x\in X^\alpha$, 
\[ \frac{\int_X \eta\l\frac{\|y-x\|_{X^\alpha}}{\eps}\r \langle D_{X^\beta} u(x), y-x\rangle_{X^\beta}^2 \, \dd \mu(y)}{\int_X \eta\l \frac{\|x-y\|_{X^\alpha}}{\eps}\r \, \dd \mu(y)} = \int_X \langle A(x), y-x\rangle_{X^\beta}^2 \, \dd \mu_{x,\eps}(y) \]
with $\mu_{x,\eps} = \mathcal{N}(m_{x,\eps},C_{x,\varepsilon})$ where $m_{x,\eps}$ and $C_{x,\eps}$ are given by~\eqref{eq:Back:Tech:mxeps} and~\eqref{eq:Back:Tech:Cxeps}.
Manipulating the RHS we can continue,
\begin{align*}
& \frac{\int_X \eta\l\frac{\|y-x\|_{X^\alpha}}{\eps}\r \langle D_{X^\beta} u(x), y-x\rangle_{X^\beta}^2 \, \dd \mu(y)}{\int_X \eta\l \frac{\|x-y\|_{X^\alpha}}{\eps}\r \, \dd \mu(y)} \\
& \qquad \qquad = \int_X \l \sum_{i=1}^\infty \lambda_i^\beta A(x)_i (y_i-x_i)\r ^2 \, \dd \mu_{x,\eps}(y) \\
& \qquad \qquad = \sum_{i,j=1}^\infty \int_X \lambda_i^\beta \lambda_j^\beta A(x)_i A(x)_j (y_i-x_i)(y_j-x_j) \, \dd \mu_{x,\eps}(y) \\
& \qquad \qquad = \sum_{i\neq j} \lambda_i^\beta \lambda_j^\beta A(x)_i A(x)_j \int_{\bbR^2} \l \sqrt{\bar{\lambda}_i}\xi_i + \bar{m}_i - x_i\r \l \sqrt{\bar{\lambda}_j} \xi_j + \bar{m}_j - x_j\r \rho(\xi_i)\rho(\xi_j) \, \dd \xi_i \, \dd \xi_j \\
& \qquad \qquad \qquad \qquad + \sum_{i=1}^\infty \lambda_i^{2\beta} A(x)_i^2 \int_{\bbR} \l \sqrt{\bar{\lambda}_i} \xi_i + \bar{m}_i - x_i\r^2 \rho(\xi_i) \, \dd \xi_i \\
& \qquad \qquad = \sum_{i\neq j} \lambda_i^\beta \lambda_j^\beta A(x)_i A(x)_j (\bar{m}_i-x_i)(\bar{m}_j-x_j) + \sum_{i=1}^\infty \lambda_i^{2\beta} A(x)_i^2 \l \bar{\lambda}_i + (\bar{m}_i - x_i)^2 \r \\
& \qquad \qquad = \sum_{i=1}^\infty \lambda_i^{2\beta} \bar{\lambda}_i A(x)_i^2 + \l \sum_{i=1}^\infty \lambda_i^\beta A(x)_i (\bar{m}_i-x_i)\r^2 \\
& \qquad \qquad = \eps^2 \sum_{i=1}^\infty \frac{\lambda_i^{2\beta+1}}{\eps^2+2\lambda_i^{\alpha+1}} A(x)_i^2 + \eps^4 \l \sum_{i=1}^\infty \frac{\lambda_i^\beta(m_i-x_i)}{\eps^2+2\lambda_i^{\alpha+1}} A(x)_i\r^2
\end{align*}
where $\bar{m}_i = \frac{\eps^2m_i + 2\lambda_i x_i}{\eps^2+2\lambda_i}$, $m_i = \langle m_\mu,e_i\rangle_X$, $\bar{\lambda}_i=\frac{\eps^2\lambda_i}{\eps^2+2\lambda_i^{\alpha+1}}$ and $\rho$ is the standard Gaussian density on $\bbR$.
So, (following the argument in the proof of Lemma~\ref{lem:proofs:nonlocal:4thmoment} we restrict the outer integral to $x\in X^\alpha \cap X^\beta$)
\begin{align*}
& \frac{1}{\eps^2} \int_X \frac{\int_X \eta\l\frac{\|y-x\|_{X^\alpha}}{\eps}\r \langle D_{X^\beta} u(x), y-x\rangle_{X^\beta}^2 \, \dd \mu(y)}{\int_X \eta\l \frac{\|x-y\|_{X^\alpha}}{\eps}\r \, \dd \mu(y)} \, \dd \mu(x) \\
& \qquad \qquad = \underbrace{\int_X \sum_{i=1}^\infty \frac{\lambda_i^{2\beta+1}A(x)_i^2}{\eps^2 + 2\lambda_i^{\alpha+1}}  \, \dd \mu(x)}_{=:\RomanV_\eps} + \underbrace{\eps^2\int_X \l \sum_{i=1}^\infty \frac{\lambda_i^\beta (m_i-x_i) A(x)_i}{\eps^2 + 2\lambda_i^{\alpha+1}} \r^2 \, \dd \mu(x)}_{\RomanVI_\eps}.
\end{align*}
We will consider the two terms, $\RomanV_\eps$ and $\RomanVI_\eps$ separately.

For the first term, $\RomanV_\eps$, we have a pointwise bound: 
\[ \frac{\lambda_i^{2\beta+1}}{\eps^2 + 2\lambda_i^{\alpha+1}} A(x)_i^2 \leq \frac{\lambda_i^{2\beta-\alpha} A(x)_i^2}{2} \text{ and } \sum_{i=1}^\infty \lambda_i^{2\beta-\alpha} A(x)_i^2 = \|D_{X^\beta} u(x) \|_{X^{2\beta-\alpha}}^2\in \Lp{1}(\mu). \]
So by the dominated convergence theorem,
\[ \lim_{\eps\to 0} \RomanV_\eps = \frac12 \int_X \sum_{i=1}^\infty \lambda_i^{2\beta-\alpha} A(x)_i^2 \, \dd \mu(x) = \frac12 \int_X \| D_{X^\beta} u(x)\|_{X^{2\beta-\alpha}}^2 \, \dd \mu(x). \]

For the second term, $\RomanVI_\eps$, we have,
\begin{align*}
\RomanVI_\eps & \leq \eps^2 \int_X \| D_{X^\beta}u(x)\|_{X^\beta} \sqrt{\sum_{i=1}^\infty \frac{\lambda_i^\beta (m_i-x_i)^2}{(\eps^2+2\lambda_i^{\alpha+1})^2}} \, \dd \mu(x) \\
 & \leq \eps^2 \sup_{x\in X^\beta} \| D_{X^\beta} u(x) \|_{X^\beta} \sqrt{\sum_{i=1}^\infty \int_{\bbR} \frac{\lambda_i^{1+\beta} \xi_i^2}{(\eps^2+2\lambda_i^{\alpha+1})^2} \rho(\xi_i) \, \dd \xi_i} \\
 & = \eps^2 \sup_{x\in X^\beta} \| D_{X^\beta} u(x) \|_{X^\beta} \sqrt{\sum_{i=1}^\infty \frac{\lambda_i^{1+\beta}}{(\eps^2+2\lambda_i^{\alpha+1})^2}}.
\end{align*}
Defining $\RomanIV_\eps$ as in the proof of Lemma~\ref{lem:proofs:nonlocal:4thmoment} and using that $\RomanIV_\eps\to 0$ we conclude
\[ \RomanVI_\eps \leq \sqrt{\eps} \sup_{x\in X^\beta} \| D_{X^\beta} u(x) \|_{X^\beta} \sqrt{\RomanIV_\eps} \to 0. \qedhere \]
\end{proof}    

Combining Lemma~\ref{lem:proofs:nonlocal:nonloc_der} and Proposition~\ref{prop:proofs:nonlocal:final} we can conclude the non-local continuum to local convergence.

\begin{theorem}
\label{thm:proofs:nonlocal:nonlocal}
Suppose Assumptions~\ref{ass:Setting:X}-\ref{ass:Setting:eta} are satisfied.
Let $\alpha\in (-1,+\infty)$ and $\beta\in \bbR$.
Assume $u\in\Ck{2}(X^\beta)$, $\sum_{i=1}^\infty \lambda_i^{\beta+\frac12-\frac{\alpha}{2}}<+\infty$, $\sum_{i=1}^\infty\lambda_i^{\alpha+1}<+\infty$, $\sum_{i=1}^\infty\lambda_i^{2(\beta+1)}<+\infty$ and $m_\mu\in X^\alpha\cap X^\beta$.
Then we have 
\[ \lim_{\eps \to 0}  \frac{1}{ \eps^2} \int_X\frac{\int_X \eta\l\frac{\|y - x\|_{X^\alpha}}{\eps}\r| u(y) - u(x)|^2 \, \dd\mu(y)}{ \int_X \eta\l\frac{\|y -x\|_{X^\alpha}}{\eps}\r \, \dd\mu(y) }\,\dd\mu(x)  =  \frac12 \int_X \| D_{X^{\beta}} u(x)\|_{X^{2\beta-\alpha}}^2 \, \dd \mu(x). \]
\end{theorem}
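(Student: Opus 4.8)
The plan is to decompose the nonlocal energy into a first-order (Taylor) surrogate plus a remainder, control each piece with one of the two results established immediately above, and recombine by the triangle inequality. Denote the quantity whose limit we seek by
\[ \mathcal{A}_\eps := \frac{1}{\eps^2} \int_X \frac{\int_X \eta\lp\frac{\|y - x\|_{X^\alpha}}{\eps}\rp| u(y) - u(x)|^2 \, \dd\mu(y)}{ \int_X \eta\lp\frac{\|y -x\|_{X^\alpha}}{\eps}\rp \, \dd\mu(y) }\,\dd\mu(x), \]
and its linearized surrogate---obtained by replacing the increment $u(y)-u(x)$ with the linear term $\langle D_{X^\beta}u(x), y-x\rangle_{X^\beta}$---by
\[ \mathcal{B}_\eps := \frac{1}{\eps^2} \int_X \frac{\int_X \eta\lp\frac{\|y - x\|_{X^\alpha}}{\eps}\rp\langle D_{X^\beta}u(x), y - x\rangle^2_{X^\beta} \, \dd\mu(y)}{ \int_X \eta\lp\frac{\|y -x\|_{X^\alpha}}{\eps}\rp \, \dd\mu(y) }\,\dd\mu(x). \]
Write $L := \frac12 \int_X \| D_{X^{\beta}} u(x)\|_{X^{2\beta-\alpha}}^2 \, \dd \mu(x)$ for the candidate limit, which is finite and well-defined under the stated summability hypotheses (see the remark following Theorem~\ref{thm:Setting:PointCons}).

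First I would use the triangle inequality $|\mathcal{A}_\eps - L| \le |\mathcal{A}_\eps - \mathcal{B}_\eps| + |\mathcal{B}_\eps - L|$ and treat the two terms in turn. The hypotheses of the theorem are exactly those of Lemma~\ref{lem:proofs:nonlocal:nonloc_der}, so that lemma applies verbatim and gives $\lim_{\eps\to0}|\mathcal{A}_\eps - \mathcal{B}_\eps| = 0$. For the second term, since $u\in\Ck{2}(X^\beta)\subset\Ck{1}(X^\beta)$ and the remaining summability and mean conditions coincide, Proposition~\ref{prop:proofs:nonlocal:final} applies and yields $\lim_{\eps\to0}\mathcal{B}_\eps = L$, i.e.\ $\lim_{\eps\to0}|\mathcal{B}_\eps - L| = 0$. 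Adding the two limits gives $\lim_{\eps\to0}\mathcal{A}_\eps = L$, which is the claim.

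Because the two feeding results do all the analytic work, there is no genuine obstacle remaining at this level: the proof is a clean combination. Were I to construct the argument from scratch, the difficulty would reside entirely in those two statements. The harder of the two is Lemma~\ref{lem:proofs:nonlocal:nonloc_der}: squaring the Taylor remainder $R_{x,y}$ (which is $O(\|y-x\|_{X^\beta}^2)$) produces a cross term and a quartic term, and after the $\eps^{-2}$ scaling one must show that the quartic contribution vanishes---this is precisely the fourth-moment estimate of Lemma~\ref{lem:proofs:nonlocal:4thmoment}, whose proof invokes the conjugacy identity of Lemma~\ref{lem:Back:Tech:Conj} to reduce the Gaussian average to sums dominated by $\lambda_i^{\beta+\frac12-\frac{\alpha}{2}}$. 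Evaluating $\mathcal{B}_\eps$ in Proposition~\ref{prop:proofs:nonlocal:final} relies on the same conjugacy identity to rewrite the inner ratio as an integral against $\mu_{x,\eps}=\cN(m_{x,\eps},C_{x,\eps})$; expanding the quadratic form, using orthogonality of the Gaussian coordinates to kill the cross terms, and dominating the resulting series by $\lambda_i^{2\beta-\alpha}A(x)_i^2\in\Lp{1}(\mu)$ then lets dominated convergence produce the factor $\frac12$ together with the norm $\|\cdot\|_{X^{2\beta-\alpha}}$, while the residual $O(\eps^2)$ term is shown to vanish.
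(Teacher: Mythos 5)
Your proposal is correct and matches the paper exactly: the paper proves this theorem simply by combining Lemma~\ref{lem:proofs:nonlocal:nonloc_der} and Proposition~\ref{prop:proofs:nonlocal:final} via the triangle inequality, which is precisely your decomposition $|\mathcal{A}_\eps - L| \le |\mathcal{A}_\eps - \mathcal{B}_\eps| + |\mathcal{B}_\eps - L|$. Your accompanying remarks about where the real analytic work lives (the fourth-moment estimate and the conjugacy identity) also accurately reflect the structure of the paper's supporting results.
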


\section{Numerical Experiments}
\label{sec:num_exp}

In the numerical experiments section, we compare the proposed infinite-dimensional framework with the usual finite-dimensional one.
We stress that  these settings differ in two distinct ways. The first one is the use of the normalized Laplacian --- which makes sense in  infinite dimensions  --- versus the unnormalized one which is commonly used in the finite-dimensional setting.  The second one is the geometry motivated by infinite-dimensional considerations, such as the use of negative-order Sobolev norms for ``rough'' objects such as white noise.

Throughout this section, we assume that all  data $x_i$ are inherently infinite-dimensional, but we only have access to their discretized versions, as it is typical in applications. 
The main goal of our investigations is to study the performance of different methods as a function of the dimension of the discretized space, the common wisdom being that methods that have a well-defined infinite-dimensional formulation perform better in finite but high dimensions than inherently finite-dimensional ones; this is indeed what our results seem to confirm.

When the Euclidean norm is chosen in the weight function, our method naturally recovers the (finite-dimensional) normalized Graph Laplacian.
Alternatively, one can estimate the underlying infinite-dimensional data using a basis expansion and then use an appropriate norm in the weight function. This enables the incorporation of additional structural information about the data into the semi-supervised learning task, improving the performance. 
The advantage is particularly evident when the data lie in spaces less regular than $\Lp{2}$.

This section is organized as follows. In Section \ref{sec:num_method}, we describe the feature vectors and labels used in our experiments and explain how we solve the semi-supervised learning problem using the proposed normalized Laplacian for infinite-dimensional data and the classical unnormalized Laplacian for Euclidean data, under the assumption that only discretized versions of the infinite-dimensional data are available. 
In Section \ref{sec:white_noise}, we study data corrupted by additive white noise. Since such data lie in spaces rougher than $\Lp{2}$, choosing an appropriate (rougher) norm is crucial. 
In Section \ref{sec:Brownian_bridge}, we consider semi-supervised learning for the Brownian bridge, focusing on three tasks: labeling low-frequency data, labeling high-frequency data, and predicting the maximum value.  
For these labeling tasks, if additional information is available about which modes contribute most to the label, this knowledge can be incorporated into the norm via a reweighting, which improves the performance compared to the standard Euclidean approach.

\subsection{Methods}
\label{sec:num_method}

Let $x_j \overset{\text{i.i.d.}}{\sim} \mu$, $j=1,...,n$, denote feature vectors drawn in an independent and identically distributed way from the probability measure $\mu\in \cP(X)$ over a separable Hilbert space $X$. 
In our numerical experiments, we stay with the original norm of the Hilbert space $X$, that is we choose $\alpha =0$ and $\beta = 0$ in Theorem~\ref{thm:Setting:PointCons}.

Throughout this section, we assume that an oracle has access to the ``true'' infinite-dimensional feature vectors $x_j$, whereas the practical learner is only given a finite-dimensional vector $\bar{x}_j \in \mathbb{R}^{N}$, usually composed of  evaluations on a finite grid
\begin{equation}\label{eq:finite-data}
    \bar{x}_j = \begin{bmatrix}
        \bar{x}_j(1), \ldots, \bar{x}_j(N)
    \end{bmatrix}^{\top} := \begin{bmatrix}
    x_j(t_1), \ldots,x_j(t_{N})
\end{bmatrix}^{\top} \in \mathbb{R}^{N},
\end{equation}
where $t_1, \ldots, t_{N}$ are the observation points. 

Let  $y_1^*, \ldots, y_{\ell}^*$ be given labels with $\ell \ll n$ and denote by $y_{\ell+1}, \ldots, y_n$  the labels predicted by the learner. Let $y_{\ell+1}^*, \ldots, y_n^*$ be the corresponding true labels. We measure the labeling error rate by
$$
1-\frac{\sum_{i=\ell+1}^n \mathbb{I}_{y_i^*}\left(y_i\right)}{n-\ell},
$$
where $\mathbb{I}_a(b)$ is the indicator function defined by
$$
\mathbb{I}_a(b):= \begin{cases}1, & b=a, \\ 0, & b \neq a .\end{cases}
$$

Below, we consider two different semi-supervised learning settings.
In all settings the connection radius $\eps$ in Eq. \eqref{eq:Setting:weight_function} is selected as the smallest value for which the graph remains connected.

\subsubsection{Laplace Learning in Finite Dimensions: the Unnormalized Laplacian}
\label{sec:laplace_learning_euclidean_spaces}
In the practical setting, we aim to label the discretized data. Given $\{
\bar{x}_i, y^*_i\}_{i=1}^{\ell}$, our goal is to find a labeling function defined on the vertices of the graph,
$$u:\{\bar{x}_1,\ldots,\bar{x}_\ell \} \rightarrow \mathbb{R},$$ that minimizes the objective function 
$$\sum_{i=1}^n \sum_{j=1}^n W_{i j}\left|u\left(\bar{x}_i\right)-u\left(\bar{x}_j\right)\right|^2,$$ 
where the weights are given by $W_{i j}=\exp \left(-\frac{\left\|\bar{x}_i-\bar{x}_j\right\|^2}{2 \varepsilon^2}\right)$ and $\| \cdot\|$ denotes the Euclidean norm.
Let $\mathbf{u}:=\left[\begin{array}{c}u\left(x_1\right) \\ \vdots \\ u\left(x_n\right)\end{array}\right]$ and  denote $\mathbf{u} = \begin{bmatrix}
    \mathbf{u}_\ell\\
    \mathbf{u}_u
\end{bmatrix}$ with
$
\mathbf{u}_\ell:=\left[\begin{array}{c}u\left(x_1\right) \\ \vdots \\ u\left(x_\ell\right)\end{array}\right]
$ 
and
$
\mathbf{u}_u:=\left[\begin{array}{c}u\left(x_{\ell+1}\right) \\ \vdots \\ u\left(x_n\right)\end{array}\right].
$
Using matrix operations, the objective function can be written equivalently as $\mathbf{u}^{\top}L\mathbf{u}$, where $L:=D-W$ and $D_{i i}=\sum_{j=1}^n W_{i j}$. Partitioning $L$ according to the labeled and unlabeled indices,
$$
L=\left[\begin{array}{ll}
L_{u u} & L_{u \ell} \\
L_{\ell u} & L_{\ell \ell}
\end{array}\right],
$$
und using the first-order optimality condition for the quadratic objective function $\nabla\left(\mathbf{u}^{\top} L\mathbf{u}\right)=0$, we conclude that the   optimal labeling on the unlabeled feature vectors is given by $\mathbf{u}_u^*=-L_{u u}^{-1}L_{u \ell} \mathbf{u}_{\ell}$.

\subsubsection{Laplace Learning in Infinite Dimensions: the Normalized Laplacian}
\label{sec:laplace_learning_func_spaces}
Given $\{
x_i, y^*_i\}_{i=1}^{\ell}$ for $x_i$ in the Hilbert space, the normalized Laplacian aims to find a labeling function defined on the vertices of the graph,
$$u:\{x_1,\ldots,x_\ell \} \rightarrow \mathbb{R},$$ that minimizes the objective function
$$\sum_{i=1}^n \frac{\sum_{j=1}^n W_{i j}\left|u\left(x_i\right)-u\left(x_j\right)\right|^2}{\sum_{j=1}^n W_{i j}},$$ 
where the weights are given by $W_{i j}=\exp \left(-\frac{\left\|x_i-x_j\right\|_{X^{\alpha}}^2}{2 \varepsilon^2}\right)$ and $\| \cdot\|_{X^{\alpha}}$ is the fractional norm on $X$ defined in~\eqref{eq:frac-norm}.
Let $\mathbf{u}$, $\mathbf{u}_\ell$ 
and $\mathbf{u}_u$ be as before.
Using matrix operations, the objective function can be written equivalently as $\mathbf{u}^{\top}L\mathbf{u}$, where $L:=I-2 P+D$ with $P_{i j}:=\frac{W_{i j}}{d_i}, d_i=\sum_{j=1}^n W_{i j}$ and $D=\left[\begin{array}{lll}\sum_i P_{i 1} & & \\ & \ddots & \\ & & \sum_i P_{i n}\end{array}\right]$. Upon partitioning $L$ according to the labeled and unlabeled indices as above, the first-order optimality condition for the quadratic objective function $\nabla\left(\mathbf{u}^{\top} L\mathbf{u}\right)=0$ yields the optimal labeling on the unlabeled feature vectors 
\begin{equation}
    \label{eq:sol_oracle}
    \mathbf{u}_u^*=-\left(L_{u u}^{\top}+L_{u u}\right)^{-1}\left(L_{\ell u}^{\top}+L_{u \ell}\right) \mathbf{u}_{\ell}.\end{equation}

However, in the practical setting, we can only get access to the discretized data. Hence, given $\{
\bar{x}_i, y^*_i\}_{i=1}^{\ell}$, we consider the following two approaches for the Laplace learning in infinite-dimensional Hilbert spaces: (i) approximating the norm of the underlying Hilbert space and applying it to finite-dimensional data, and (ii) (approximately) reconstructing the infinite-dimensional data from the discrete data. Essentially, (i) is a finite difference method and (ii) is a spectral method.

For the first method, we approximate the Hilbert-space norm with the discretized data. In particular, for the $\Lp{2}$ norm with empirical measure on the grid we have 
\begin{equation}\label{eq:L2_discretized_data}
    \|x-y \|_{\Lp{2}}^2 = \int |x(t)-y(t)|^2 \, \dd\mu(t) \approx \frac{1}{N_{grid}}\sum_{i=1}^{N_{grid}}|\bar{x}(i)-\bar{y}(i)|^2.
\end{equation}

The second approach consists in recovering the expansion of the infinite-dimensional data in some basis. Let $\{e_k\}$ be an orthonormal basis of the Hilbert space $X$, so that  $x = \sum_{k = 1}^{
\infty
}\langle x, e_k\rangle e_k$ for any $x \in X$. In the $\Lp{2}$ case, given the finite-dimensional data $\bar x_j$ from~\eqref{eq:finite-data}, we aim to estimate the coefficients $a_{jk} := \langle x_j, e_k\rangle$ by
\begin{equation}
\label{eq:coef_learning_no_weight}
    a_{jk} = \int x_j(t)e_k(t) \, \dd\mu(t) \approx 1/N_{grid} \sum_{i=1}^{N_{grid}} x_j(t_i) e_k(t_i)  =: \bar a_{jk}.
\end{equation}
We truncate the expansion of $x_j$ after $m=N/2$ modes in accordance with the Nyquist–Shannon criterion, that is $x_j \approx \sum_{k=1}^m \bar a_{jk} e_k$. 
By Parceval's theorem, we have that
\begin{equation}
    \|x\|_X^2 = \sum_{k=1}^\infty \left|\langle x, e_k\rangle\right|^2, \quad x \in X.
\end{equation}
If we know that certain frequencies are more important than others, we can introduce a weighted (equivalent) norm using a vector $c \in \ell^\infty$, $c_k \geq \delta>0$ for all $i$,
\begin{equation}\label{eq:weighted-norm}
    \|x\|_{X,c}^2 = \sum_{k=1}^\infty c_k\left|\langle x, e_k\rangle\right|^2, \quad x \in X.
\end{equation}

\subsection{Shifted White Noise}
\label{sec:white_noise}
\subsubsection{Features and Labels}
In this section, we consider an example where the features aren't functions but live in a rougher space -- white noise. 
In this setting we will see that a finite-dimensional graph Laplacian may lead to an inadequate labeling, especially at high resolutions,  whereas by selecting an appropriate norm (negative Sobolev norm) we can solve the semi-supervised learning task well.

Let $X = \Hk{-s}([0,1])$ with $s>0$ be a negative-order Sobolev space. We choose the orthonormal basis $\{e_k\}$ of $L^{2}([0,1])$ given by
\begin{equation}\label{eq:ei-fi-cos}
    \{e_k\}= \{1\} \cup \{\sqrt{2}\cos(\pi k \cdot)\}_{k = 1, 2, \ldots}  \quad \text{and define} \quad f_k = \left(1+(k \pi)^2\right)^{s / 2} e_k.
\end{equation}
Assuming periodic boundary conditions, $\{ f_k\}_{k=0,1,2,\ldots}$ is an orthonormal basis of $\Hk{-s}$, and the norm is given by
\begin{equation}
    \label{eq:frac_sobolev_norm_cos}
    \|x\|_{\Hk{-s}}^2 = \sum_{k=0}^{\infty}\left(1+(k \pi)^2\right)^{-s}\left|\langle x, e_k\rangle_{\Lp{2}}\right|^2=\sum_{k=0}^{\infty}\left(1+(k \pi)^2\right)^{-2s}\left|\langle x, f_k\rangle_{\Lp{2}}\right|^2, \quad x \in \Hk{-s}([0,1]).
\end{equation}

The covariance of white noise is the identity operator on $\Lp{2}([0,1])$, and any orthonormal basis of $\Lp{2}([0,1])$ is an eigenbasis (with eigenvalues $1$). We will use the basis $\{e_k\}$ from~\eqref{eq:ei-fi-cos}. With $\xi_k \iid \cN(0,1)$, white noise is defined via the Karhunen-Lo\'eve expansion as follows \cite{carrizo2025generalized}
\begin{equation*}
    W = \sum_{k} \xi_k e_k = \sum_{k} \left(1+(k \pi)^2\right)^{-s/2} \xi_k f_k.
\end{equation*}
On $\Hk{-s}([0,1]))$, the eigenvalues of the covariance  are given by $\lambda_k = \left(1+(k \pi)^2\right)^{-s}$ and satisfy the conditions of Theorem~\ref{thm:Setting:PointCons} with $\alpha=\beta=0$ as long as $s>1$. We choose $s=1.01$ in our experiments.

We generate each feature vector from a constant signal corrupted by white noise and evaluated on uniform finite grids. For each sample $j$, the feature vector is defined as $x_j = \mu_j + w_j$, where $\mu_j \in\{+1,-1\}$ is a constant function and $w_j$ denotes a realization of Gaussian white noise on $[0,1]$. 

For the time-discretized data, we approximate $w_j$ on a uniform grid $0=t_1<t_2<\cdots<t_m=1$
by sampling independent Gaussian values with variance $1 / \Delta t=m$.
Equivalently, the discretized white noise is represented as
$$
w_j\left(t_i\right) \approx \sqrt{m} z_{j, i}, \quad z_{j, i} \sim \mathcal{N}(0,1),
$$
which corresponds to the classical approximation of white noise using a piecewise-constant $\Lp{2}$ orthonormal basis on $[0,1]$. Thus each discretized feature vector is of the form
$$
\bar{x}_j=\left[x_j\left(t_1\right), \ldots, x_j\left(t_m\right)\right]^{\top}=\mu_j \mathbf{1}_m+\sqrt{m} z_j, \quad z_j \sim \mathcal{N}\left(0, I_m\right).
$$

We label $x_j$ to be $+1$ if $\mu_j = 1$, otherwise $-1$, i.e. 
	$$
y_j = \begin{cases} +1 & \text { if }  \mu_j = 1 \\ 
-1 & \text { otherwise. }\end{cases}
$$

\subsubsection{Results}
We generate 200 feature vectors and label the first 20. We compare the performance of the normalized Laplacian using the $\Lp{2}$ or $\Hk{-s}$ with $s=1.01$ geometries with the unnormalized Laplacian in Euclidean space. 
For the infinite-dimensional framework using $\Lp{2}$ and $\Hk{-s}$ from Section \ref{sec:laplace_learning_func_spaces}, we reconstruct infinite-dimensional data on $X = \Lp{2}$ and $X = \Hk{-s}$   using the (truncated) bases $\{e_k\}_{k=0,...,m}$ and $\{f_k\}_{k=0,...,m}$, respectively, as defined in~\eqref{eq:ei-fi-cos}. The coefficients are estimated via \eqref{eq:coef_learning_no_weight}. Hence each  data point takes the form $\bar a_0 + \sum_{k=1}^{m} \bar a_k \cos(k\pi t)$.

Figure \ref{fig:whitenoise_grid_panel} shows the labeling of four realizations obtained using the unnormalized Laplacian with the Euclidean norm and the normalized Laplacian with the $\Hk{-s}$ norm, as well as  true labels. We show results for seven different resolutions (10, 20, 40, ..., 640 points). We see that whilst at low resolutions both methods perform similarly, the performance of the unnormalized Laplacian with the Euclidean geometry deteriorates at high resolutions while that of the normalized Laplacian with the $\Hk{-s}$ geometry stays consistently good at all resolutions.

\newcommand{\imgw}{0.28\textwidth}

\begin{figure}[htp!]
  \centering
  \setlength{\tabcolsep}{4pt}    
  \renewcommand{\arraystretch}{1}

  \begin{tabular}{r c c c}
    \textbf{$N=10$} &
      \includegraphics[width=\imgw]{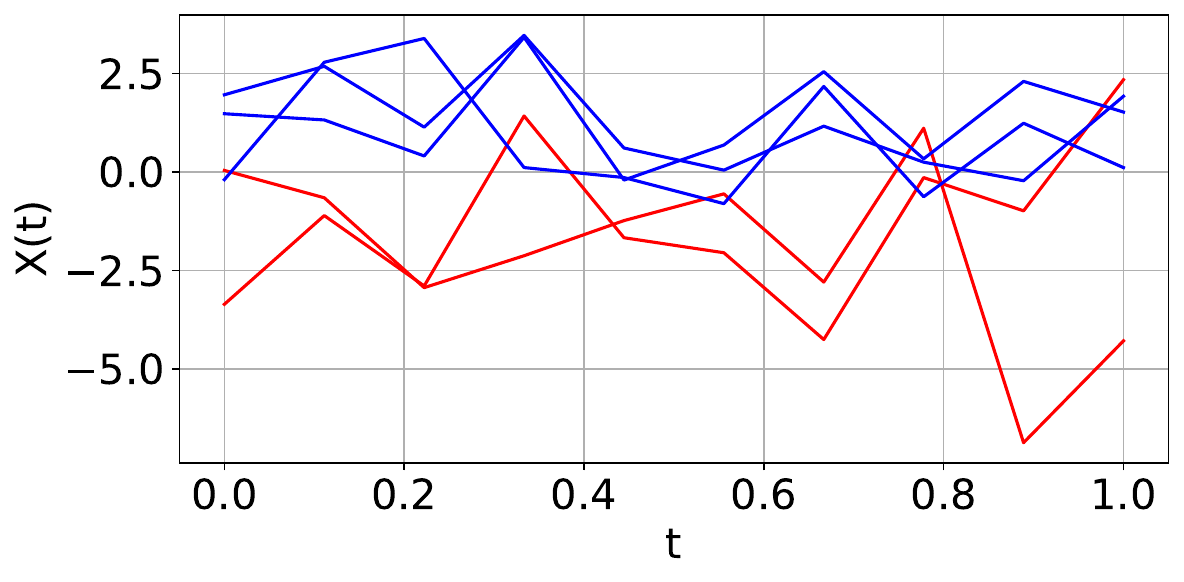} &
      \includegraphics[width=\imgw]{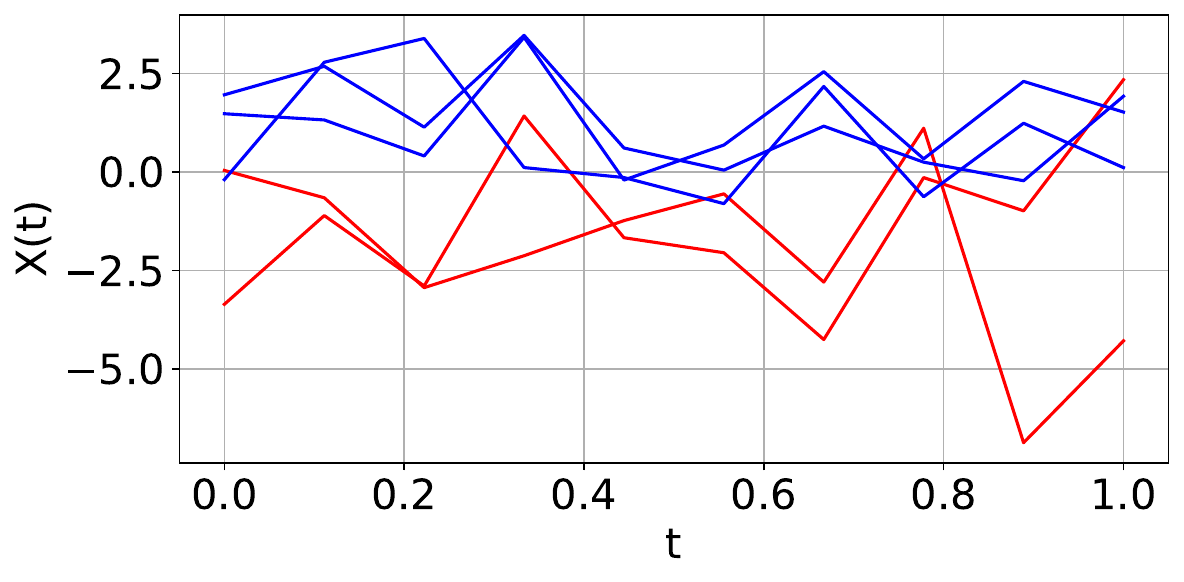} &
      \includegraphics[width=\imgw]{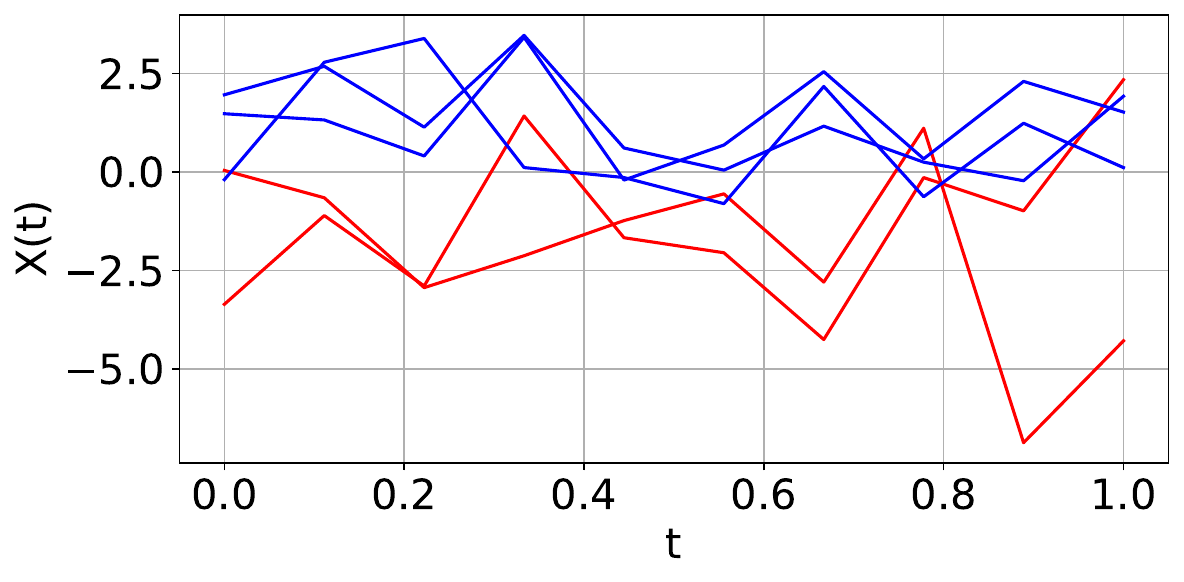} \\
    \textbf{$N=20$} &
      \includegraphics[width=\imgw]{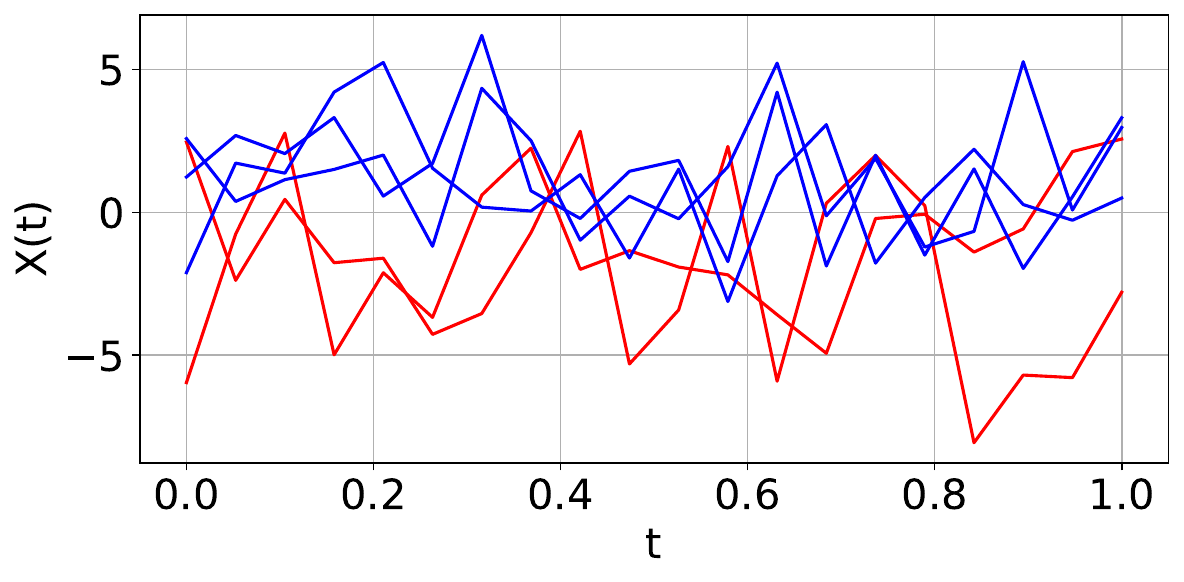} &
      \includegraphics[width=\imgw]{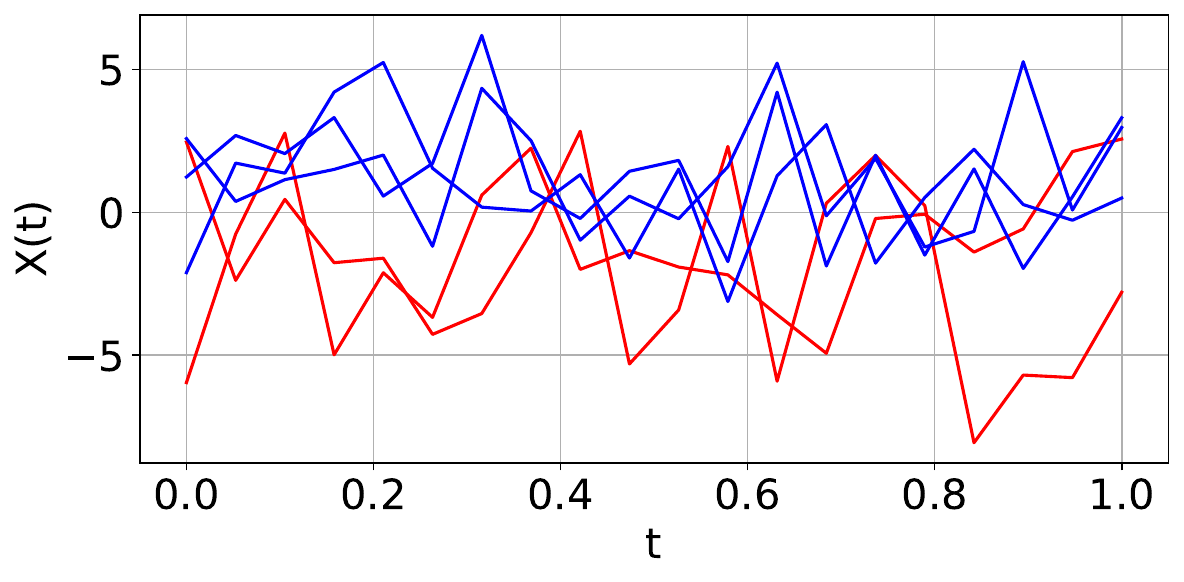} &
      \includegraphics[width=\imgw]{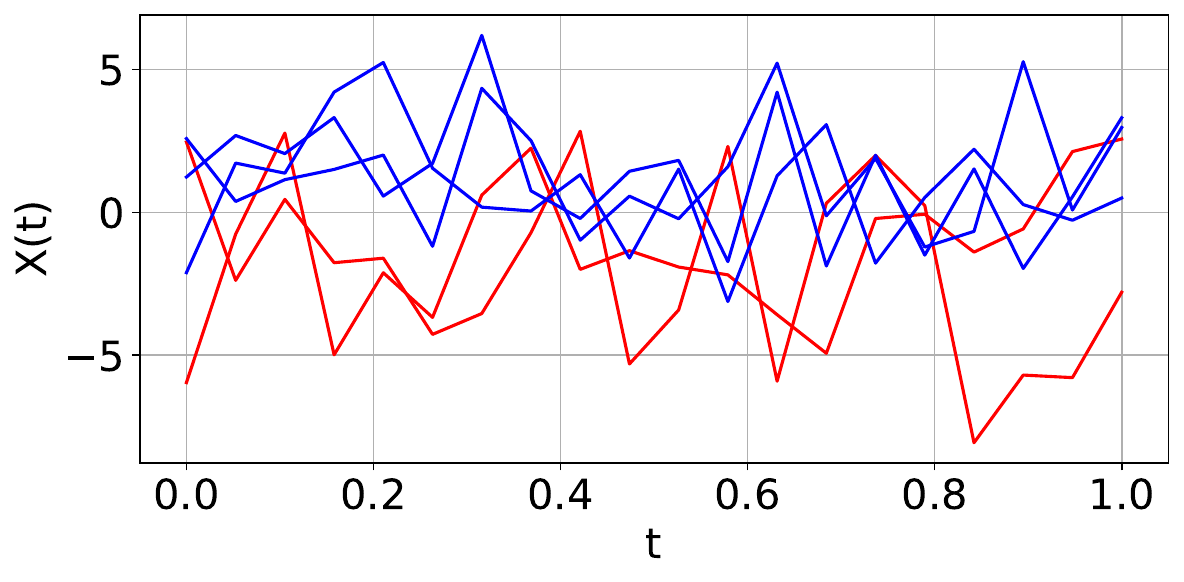} \\
    \textbf{$N=40$} &
      \includegraphics[width=\imgw]{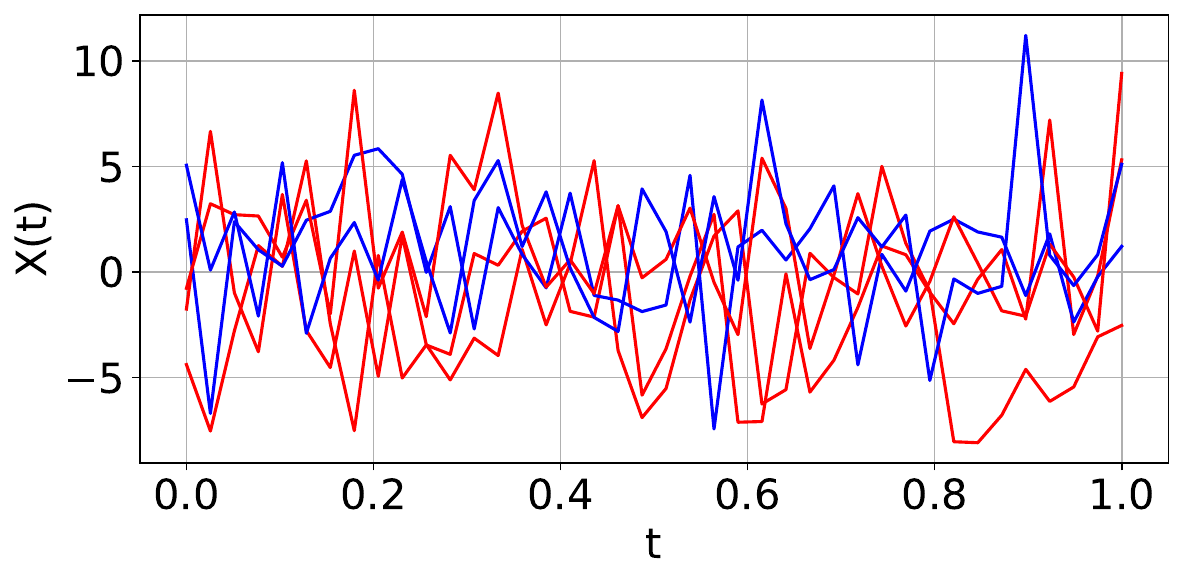} &
      \includegraphics[width=\imgw]{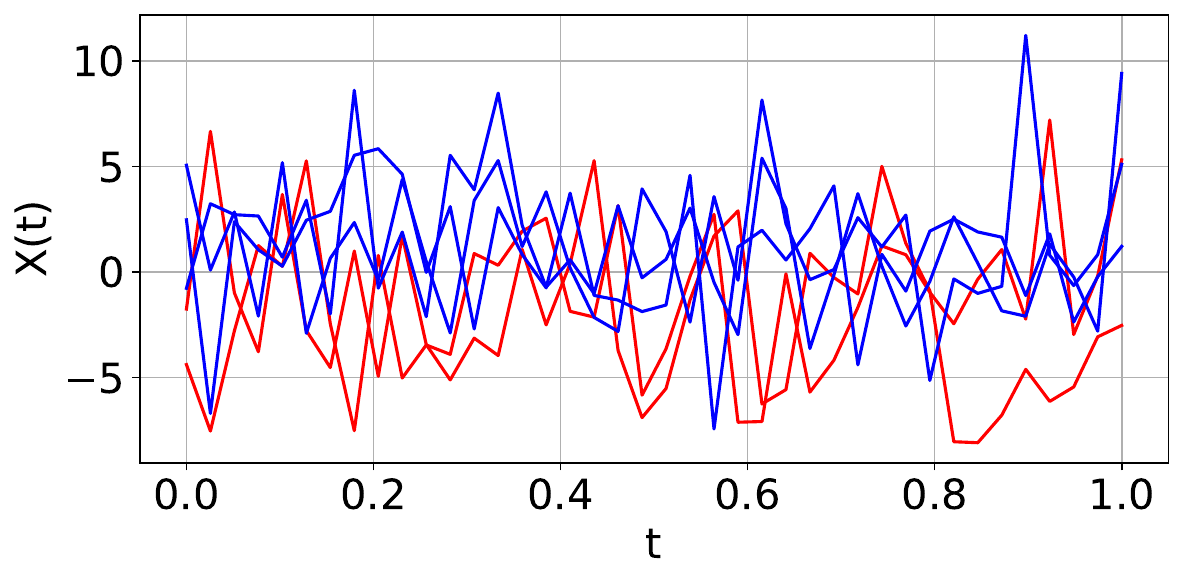} &
      \includegraphics[width=\imgw]{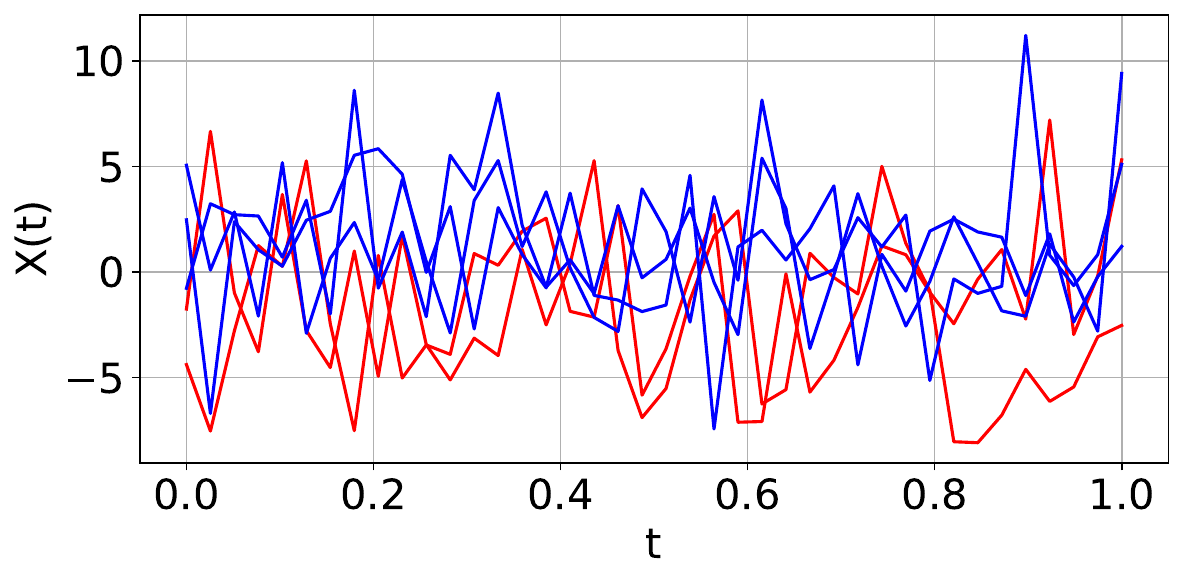} \\
    \textbf{$N=80$} &
      \includegraphics[width=\imgw]{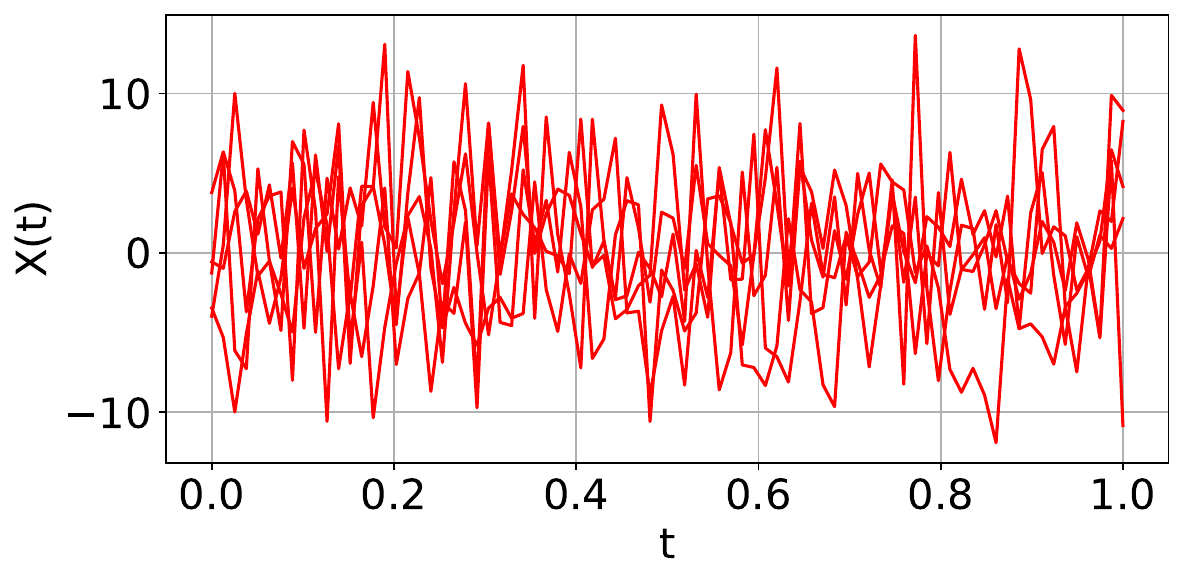} &
      \includegraphics[width=\imgw]{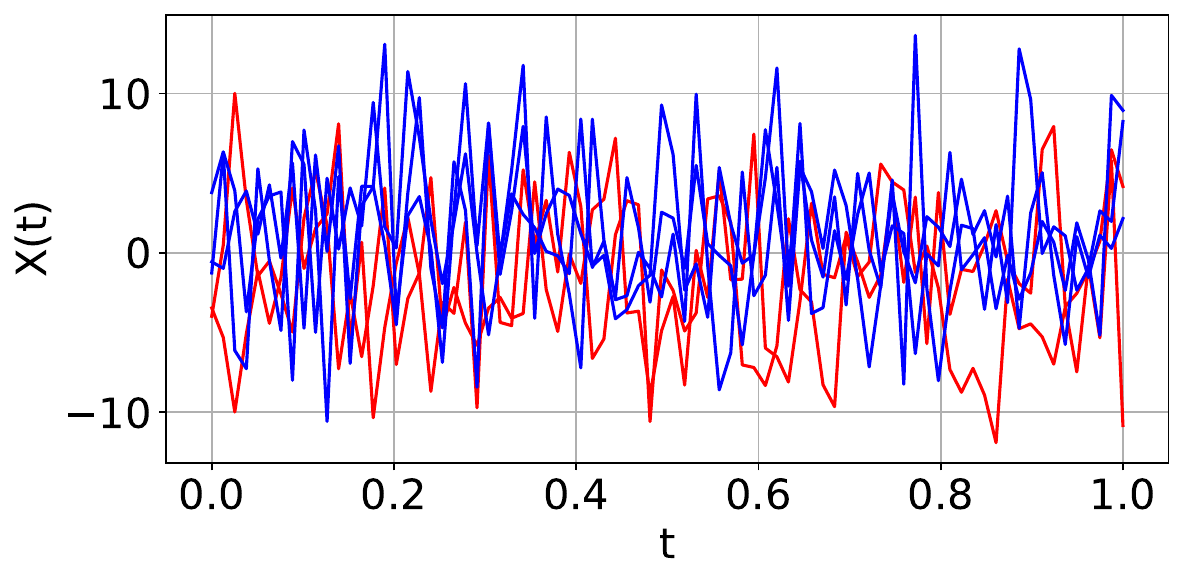} &
      \includegraphics[width=\imgw]{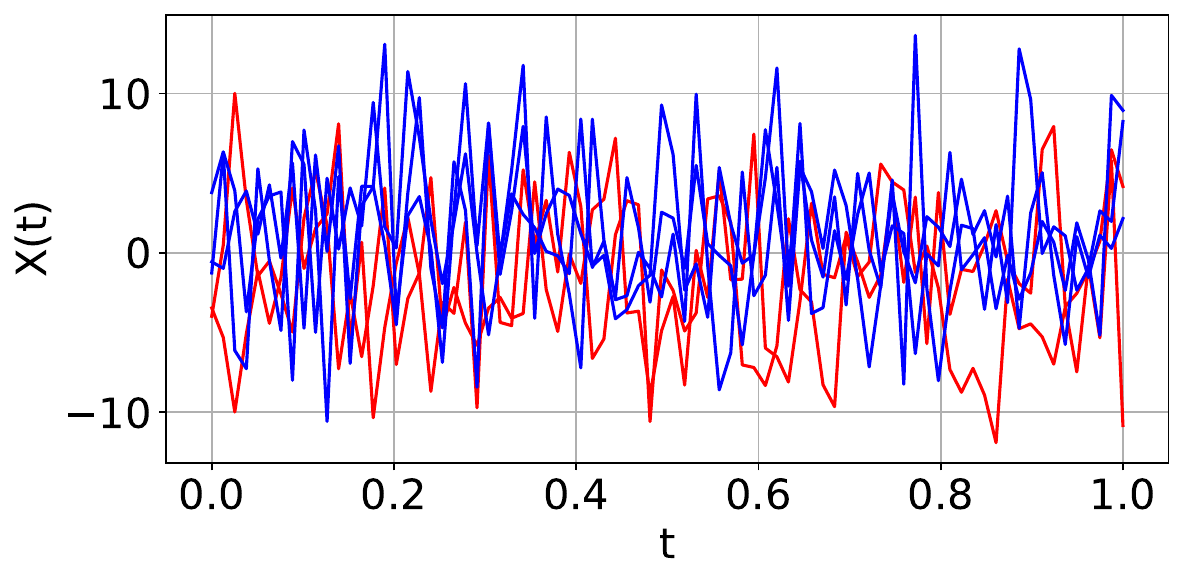} \\
    \textbf{$N=160$} &
      \includegraphics[width=\imgw]{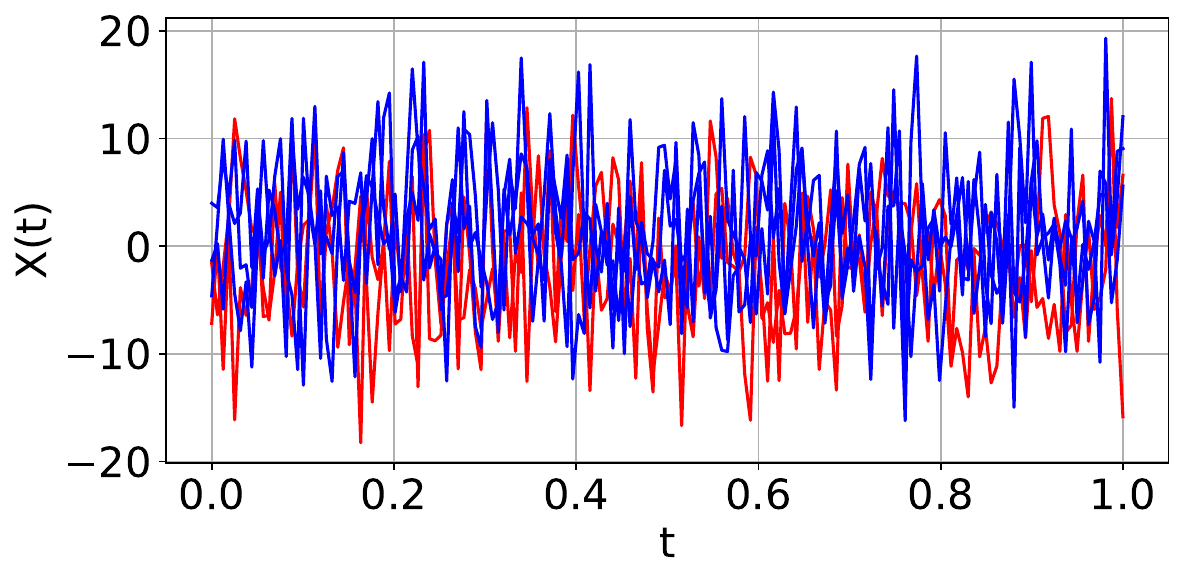} &
      \includegraphics[width=\imgw]{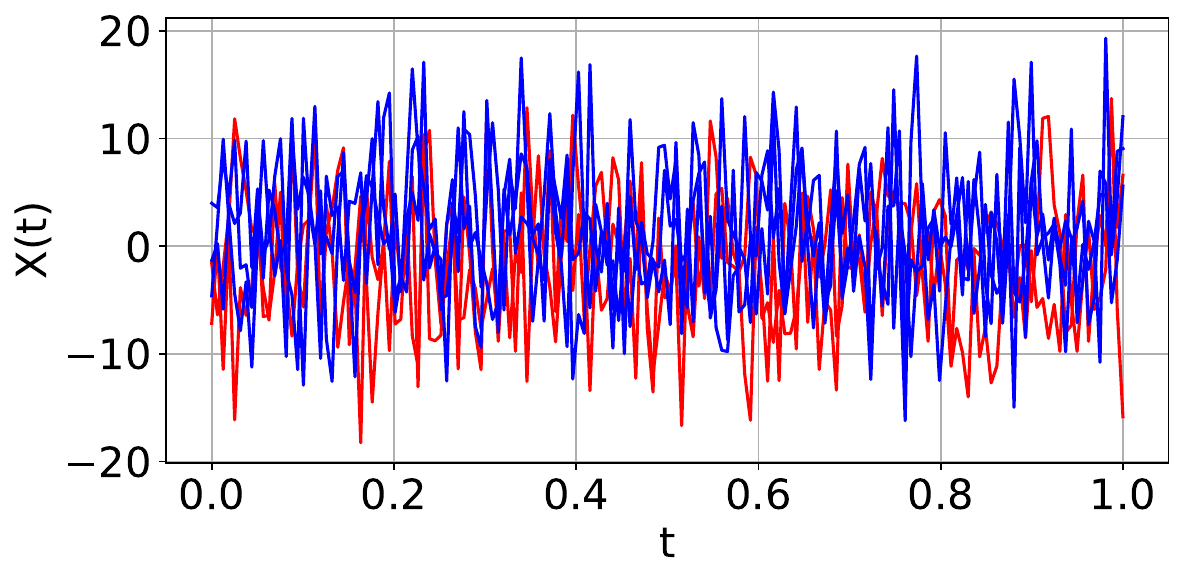} &
      \includegraphics[width=\imgw]{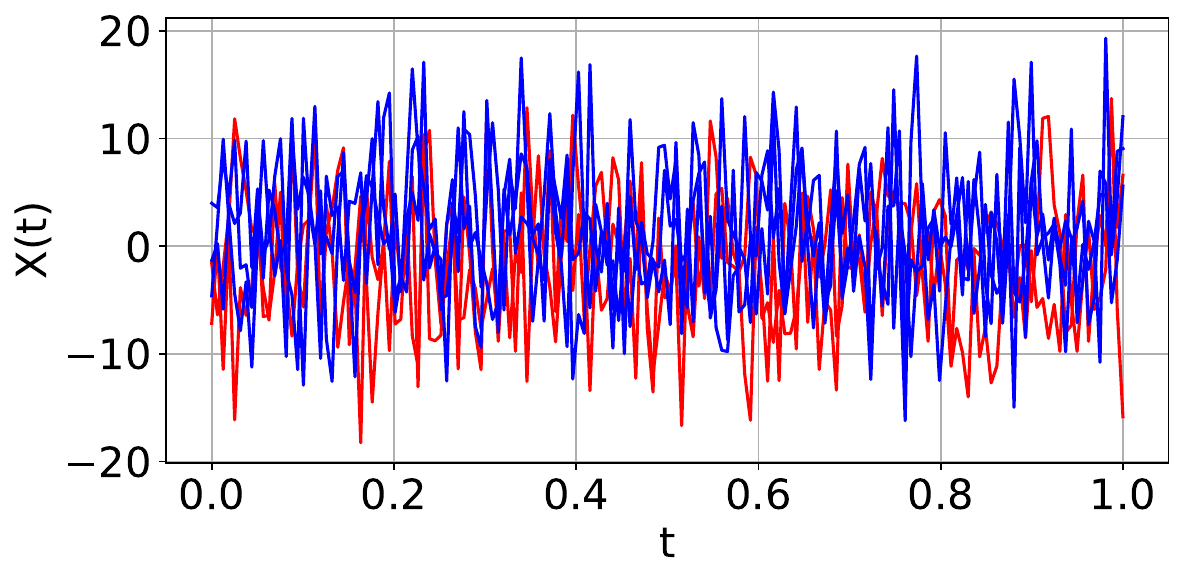} \\
    \textbf{$N=320$} &
      \includegraphics[width=\imgw]{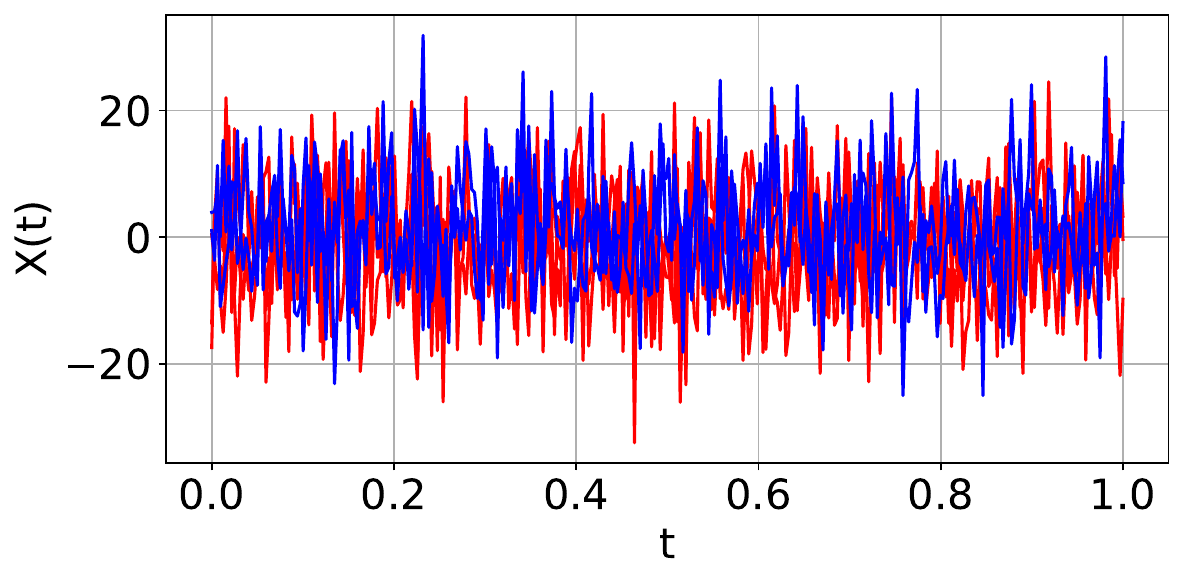} &
      \includegraphics[width=\imgw]{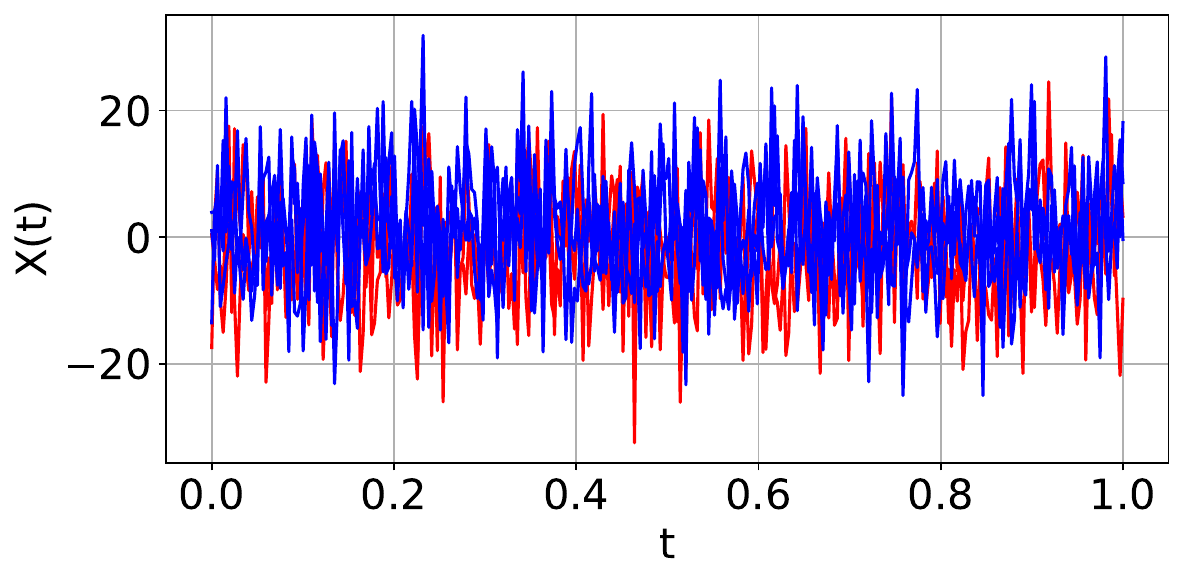} &
      \includegraphics[width=\imgw]{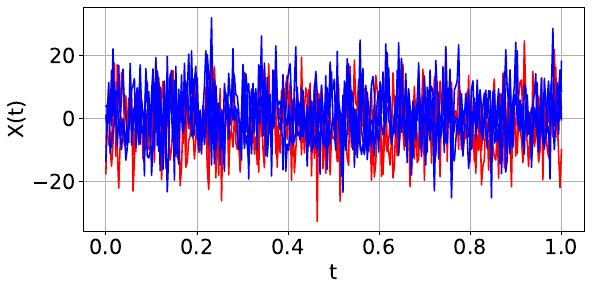} \\
    \textbf{$N=640$} &
      \includegraphics[width=\imgw]{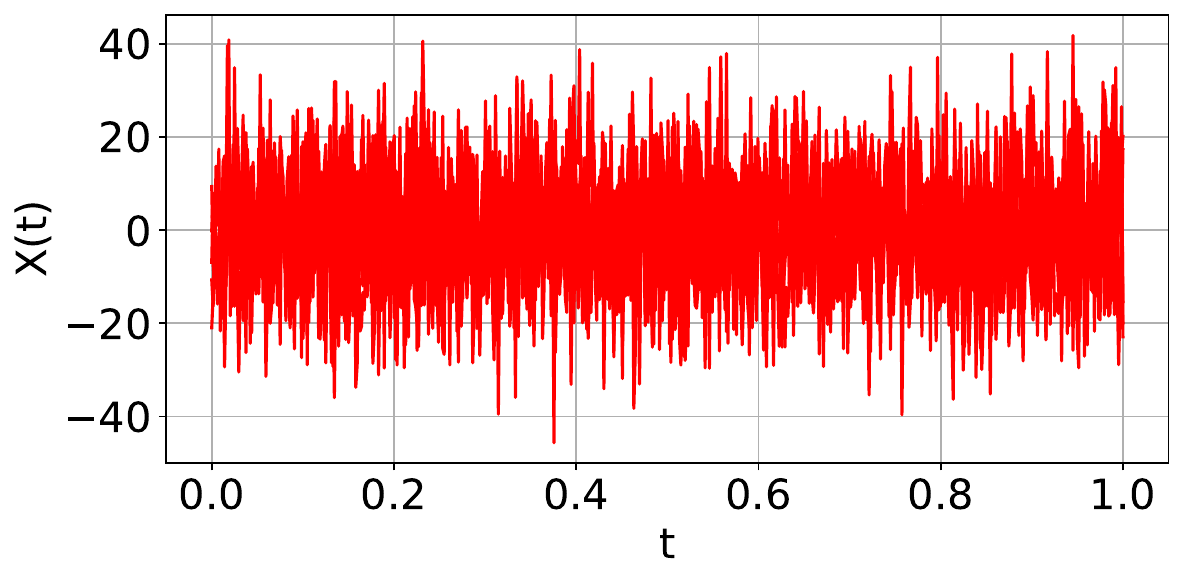} &
      \includegraphics[width=\imgw]{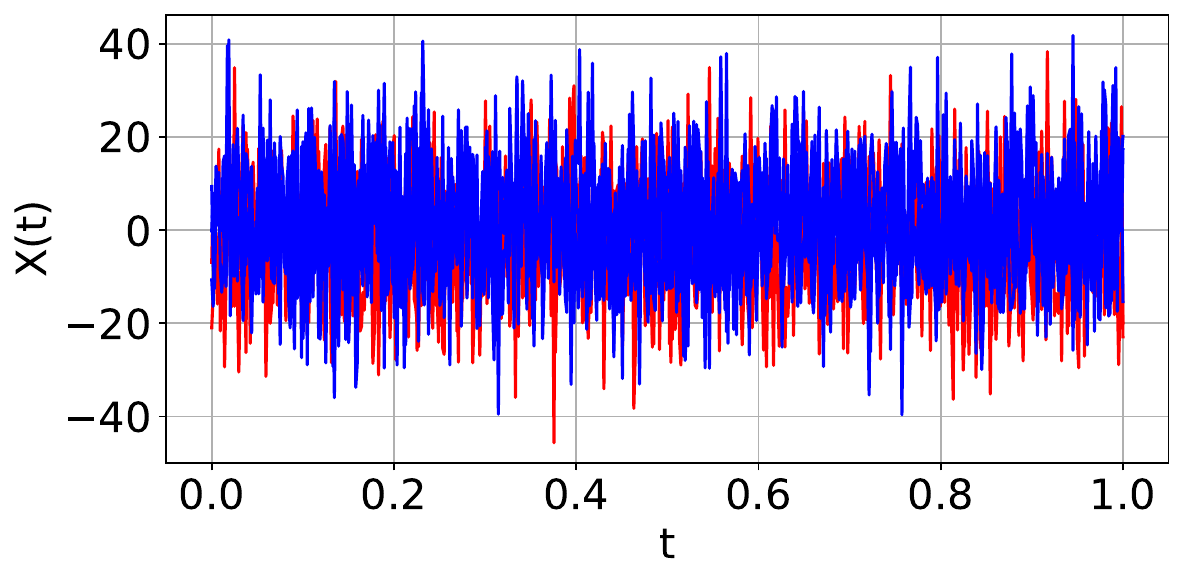} &
      \includegraphics[width=\imgw]{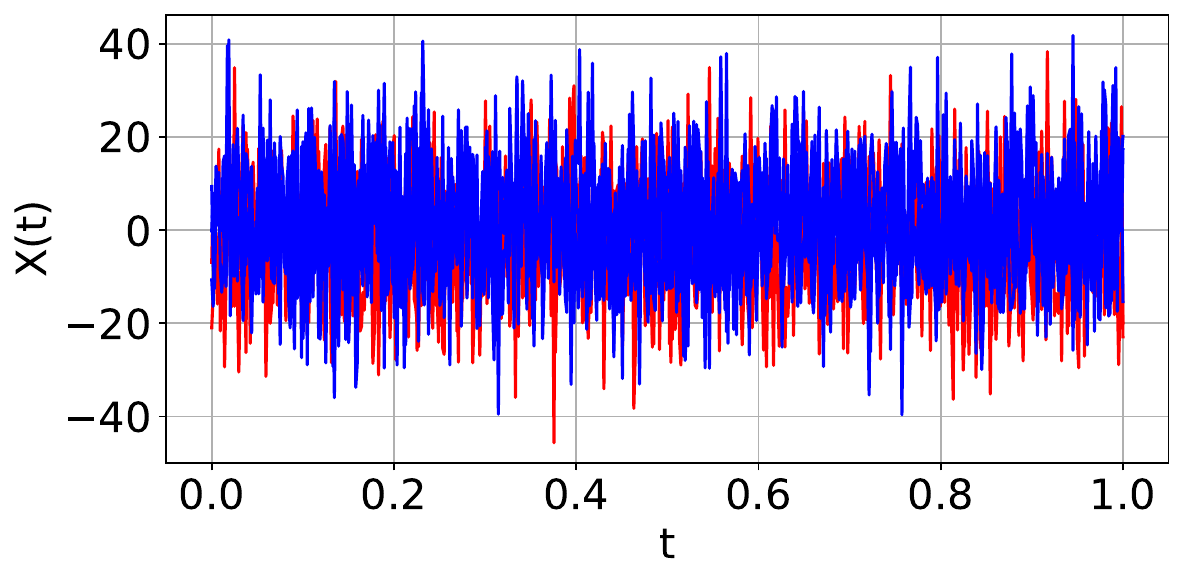} \\
    \multicolumn{1}{c}{} &
      \makebox[\imgw][c]{\textbf{Unnormalized Lap. Eucl.}} &
      \makebox[\imgw][c]{\textbf{Normalized Lap. $\Hk{-s}$}} &
      \makebox[\imgw][c]{\textbf{True label}} \\
  \end{tabular}

  \caption{Results for shifted white noise data for different resolutions (row labels) and different methods (column labels). See Section \ref{sec:white_noise} for details. Whilst at low resolutions both methods perform similarly, the labeling accuracy of the unnormalized Laplacian with the Euclidean geometry drops at higher resolutions while the normalized Laplacian with the $\Hk{-s}$ geometry ($s=1.01$) performs consistently well at all resolutions.}
  \label{fig:whitenoise_grid_panel}
\end{figure}

This is further illustrated in Figure \ref{fig:experiment_shift_with_witenoise_multi_realization_girds}, which shows the dependence of the labeling accuracy of different methods as a function of the resolution.
We see that the classification error of the unnormalized Laplacian with Euclidean geometry increases with resolution. The reported mean, maximum and minimum errors are computed by repeating the  experiment across 50 independently generated data sets. This degradation occurs because white noise almost surely does not belong to $\Lp{2}([0,1])$. For the same reason, the normalized Laplacian with the $\Lp{2}$ geometry performs equally poorly. By contrast, with the $\Hk{-s}$ geometry, our framework demonstrates resolution-independence and achieves accurate labeling even at high resolutions.

\begin{figure}[htp!]
    \centering
    
        \includegraphics[width=\textwidth]{./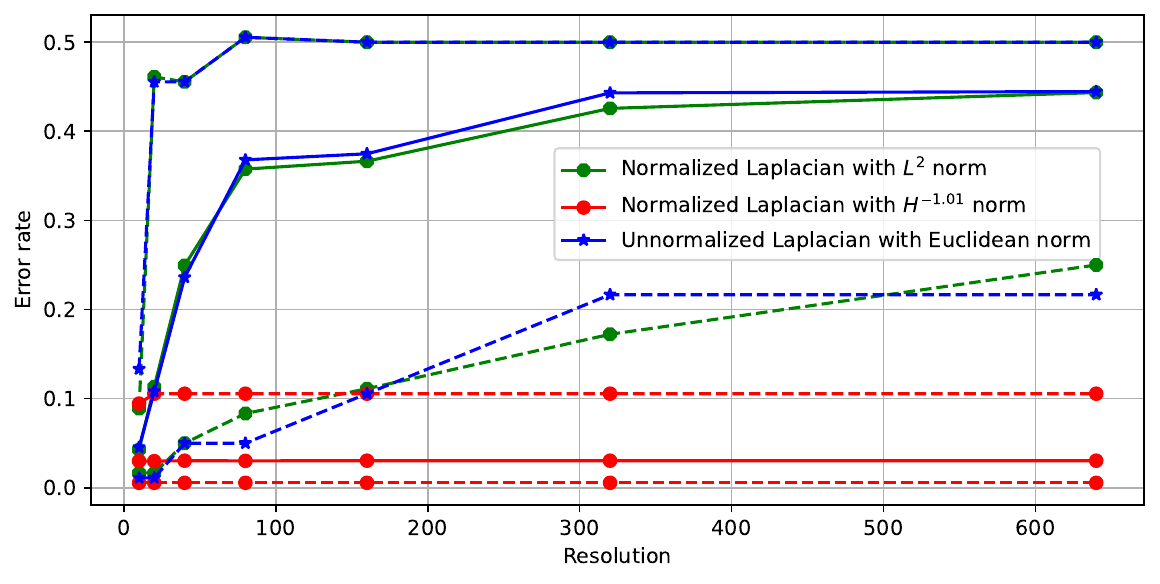}
        \caption{Labeling error for shifted white noise data at different resolutions. Solid line: mean error; dashed lines: maximum and minimum errors. See Section \ref{sec:white_noise} for details. The accuracy of the normalized Laplacian with $\Hk{-s}$ geometry ($s=1.01$) remains good across all resolutions whilst that of the unnormalised Laplacian with the Euclidean geometry and the normalized Laplacian with $\Lp{2}$ geometry deteriorates at high resolutions.}
        \label{fig:experiment_shift_with_witenoise_multi_realization_girds}
\end{figure}

\subsection{Brownian Bridge with Spatial Data}
\label{sec:Brownian_bridge}

\subsubsection{Features and Labels}

In this section, we consider infinite-dimensional feature vectors generated by a Brownian bridge between 0 and 1, i.e. $B_t:=\left(W_t \mid W_1=0\right)$, $t \in[0, 1]$ with $W_t$ being the standard Wiener process. Given the definition of Brownian motion, we have $B(t) \sim \mathcal{N}(0, t(1-t))$. By \cite{benyi2011modulation} we have that $B_t \in \Hk{s}([0,1])$ for any $s < 1/2$. For $X=\Lp{2}([0,1])$, the (unnormalized) eigenfunctions of the covariance are given by $\{\sin(k \pi \cdot)\}_{k = 1, 2, \ldots}$
and the eigenvalues by $\lambda_k = (k \pi)^{-2}$ (e.g.~\cite[Sec. 2.1]{foster2023brownian}); hence, they do not satisfy the assumptions of Theorem~\ref{thm:Setting:PointCons}. However, with a similar argument as in Section~\ref{sec:white_noise}, we get that the assumptions are satisfied if we take $X=\Hk{-s}([0,1])$ with any $s>0$.  

The system $\{\sqrt{2}\sin(k \pi \cdot)\}_{k = 1, 2, \ldots}$ is an orthonormal basis of $X=\Lp{2}([0,1])$, hence we choose
\begin{equation}\label{eq:ei-fi}
    \{e_k\}= \{\sqrt{2}\sin(k \pi \cdot)\}_{k = 1, 2, \ldots} \quad \text{and} \quad \{ f_k\}_{k=1,2,\ldots} = \{\left(1+(k \pi)^2\right)^{s / 2} e_k\}_{k=1,2,\ldots},
\end{equation}
so that $\{ f_k\}_{k=1,2,\ldots}$ forms an orthonormal basis of $\Hk{-s}([0,1])$, assuming  periodic boundary conditions (which are satisfied for the Brownian bridge).

We consider the following four settings for all labeling tasks:
\begin{enumerate}[topsep=3pt]
    \item unnormalized graph Laplacian for Euclidean (discretized) data;
    \item $X= \Hk{-s}([0,1])$ with $s=0.01$; because of the  small Sobolev exponent we don't expect a significant numerical difference from the $\Lp{2}$ setting (which, up to scaling, is equivalent to the Euclidean one)
    \item $X= \Hk{-s}([0,1])$ with $s=2$; in this setting we expect  low frequencies to be emphasized;
    \item $X= \Hk{-s}([0,1])$ with $s=0.01$ and a reweighted norm  as in~\eqref{eq:weighted-norm}, with a weighting given by 
    \begin{equation*}
    c_i= \begin{cases}50, & \text { if } i \in\{9,10,11\}, \\ 1, & \text { otherwise, }\end{cases}  
    \end{equation*}
  which assigns higher weight to  modes that we assume to be the most informative.
\end{enumerate}

We consider three labeling tasks for Brownian-bridge feature vectors evaluated on uniform finite grids. For each sample $j$, the feature vector is defined as
$$
\bar{x}_j:=\left[x_j\left(t_1\right), \ldots, x_j\left(t_N\right)\right]^{\top} \in \mathbb{R}^N,
$$
where $0=t_1<\cdots<t_N=1$ is a uniform partition of $[0,1]$. These vectors correspond to discretized sample paths of the Brownian bridge. To generate them, we first simulate a Brownian motion $W_t$ using Gaussian increments $\Delta W_i \sim \mathcal{N}(0, \Delta t)$ and then condition this process to return to zero at time $t=1$ using the standard transformation
$$
B_t=W_t-t W_1 .
$$

Given these feature vectors, we consider the following three labeling tasks.

\begin{enumerate}
    \item Average value labeling
    $$
y_j = \begin{cases} +1 & \text { if } \sum_{i=1}^{N} x_j(t_i) > 0 \\ 
-1 & \text { else.}\end{cases}
$$
    \item High frequency labeling
$$
y_j = \begin{cases} +1 & \text { if } \sum_{i=1}^{N} 
\sin(10\pi t_i) x_j(t_i)  > 0 \\ 
-1 & \text { else, }\end{cases}
$$
where the weighted sum $1/N \sum_{i=1}^N  \sin \left(10 \pi t_i\right) x_j\left(t_i\right)$ is an approximation of $\int_0^1 \sin (10 \pi t) x_j(t) \, \dd t$, i.e. the projection of the sample $x_j$ onto the 10th mode.

    \item Maximum value labeling
        $$
y_j = \begin{cases} +1 & \text { if } \max_{i\in \{1,\ldots, N \}} x_j(t_i) > u_b \\ 
-1 & \text { else. }\end{cases}
$$
We select $u_b = \sqrt{-\log(0.5)/2} $ such that the true labels split the dataset into two equally-sized classes, i.e. with  probability $50\%$ the maximum value of the feature vector will exceed $u_b$ \cite[Section 4.3]{karatzas2012brownian}: 
$$\mathbb{P}\left\{\max _{0 \leq t \leq T} B_t \geq \sqrt{-\log(0.5)/2} \mid B_T=0\right\}=0.5.$$
\end{enumerate}

\subsubsection{Results}
We fix $N=641$ grid points, given by $t_i=(i-1) / 640$.
For each data set, we generate 200 feature vectors and select $\ell=20$ of them as training labels.  The labels generated for each task are evenly balanced, with $50 \%$ in each class. 
We use $m=320$ modes in the basis expansion of each feature vector in accordance with the Nyquist–Shannon condition. 
The reported error statistics are computed by repeating the semi-supervised learning experiment on $50$ independently generated data sets. 

In what follows, we report the results obtained using the unnormalized Laplacian as in  Section~\ref{sec:laplace_learning_euclidean_spaces} compared to the normalized Laplacian with three different $\Hk{-s}$ norms 
as described above. 
For the infinite-dimensional  framework from Section \ref{sec:laplace_learning_func_spaces}, we reconstruct infinite-dimensional data on $X = \Lp{2}$ and $X = \Hk{-s}$   using the (truncated) bases $\{e_k\}_{k=0,...,m}$ and $\{f_k\}_{k=0,...,m}$, respectively, as defined in~\eqref{eq:ei-fi}. The coefficients are estimated via \eqref{eq:coef_learning_no_weight}. Hence each  data point takes the form $\sum_{k=1}^{m} \bar a_k \sin(k\pi t)$.
The choice of norms and weights reflects our prior knowledge about which modes dominate each task: high-frequency labels depend primarily on high-frequency modes (and we assume that we have a rough idea about the range of these relevant nodes, in our case around the 10$^{th}$ mode), whereas max-value and average-value labels are influenced more strongly by low-frequency modes.

In Fig.~\ref{fig:3x3_results}, the first column displays examples of true labels and representative misclassified samples, and the second column reports the corresponding error statistics. The labeling error plots are based on the results obtained using the norms that yield the lowest mean errors, that is the $\Hk{-2}$ norm for the average and maximum-value labels, and the weighted $\Hk{-0.01}$ norm for the high-frequency labels.

As expected, the normalized Laplacian using the $\Hk{-s}$ norm with $s = 0.01$ performs almost exactly as the unnormalized Laplacian in all experiments.
From Experiments~1 and~3, we observe that for  tasks dominated by low-frequency modes, the $\Hk{-2}$ norm, which places higher weight on low-frequency components, performs better than the unnormalized Laplacian with Euclidean norm and the $\Hk{-s}$ norm with $s = 0.01$. This indicates that the flexibility in selecting the norm allows the infinite-dimensional spectral method to outperform the Euclidean-space method.
Additionally, when prior knowledge about specific dominant higher-frequency modes is available, as demonstrated in Experiment~2, selectively amplifying these modes can  improve performance. With the current ratio 50:1 between ``important'' and ``unimportant'' modes we get a mean error decrease from ca. $50\%$ (essentially, random labels) to ca. $30\%$ compared to uniform weighting. With a more aggressive reweighting ratio of 500:1 we get a further improvement to ca. $15\%$ labeling error.

\begin{figure}[H]
\centering
\newcommand{\cellheight}{3.8cm}
\begin{minipage}[t]{0.35\textwidth}
    \centering

    \parbox[c][\cellheight][c]{\textwidth}{
        \includegraphics[width=\textwidth]{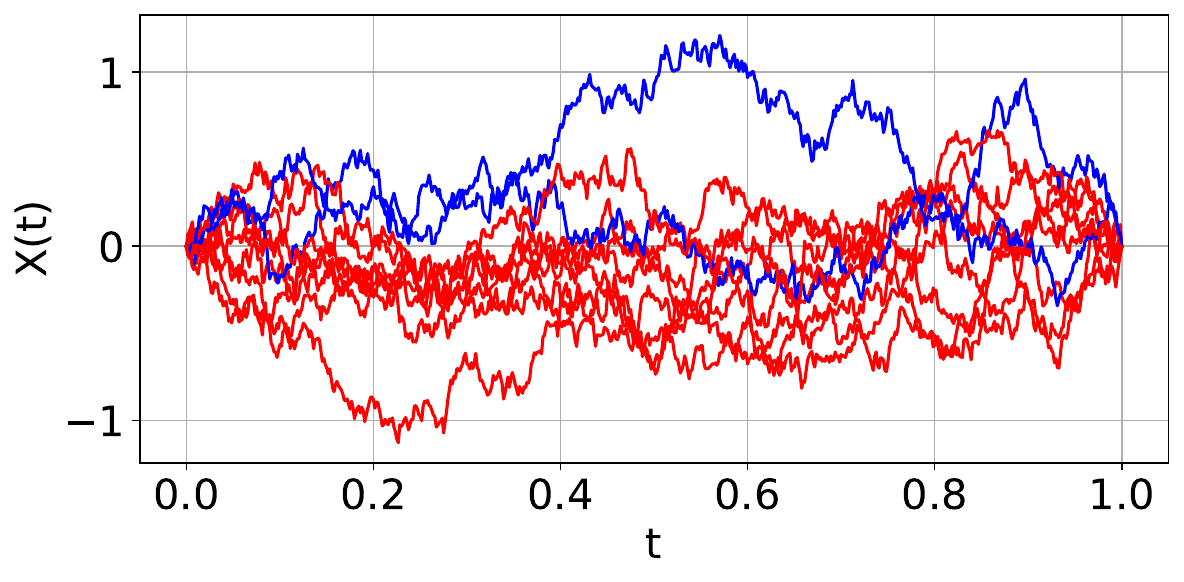}
    }
    \vspace{-2.em}
    \caption*{Avg value labeling: True labels}

    \parbox[c][\cellheight][c]{\textwidth}{
        \includegraphics[width=\textwidth]{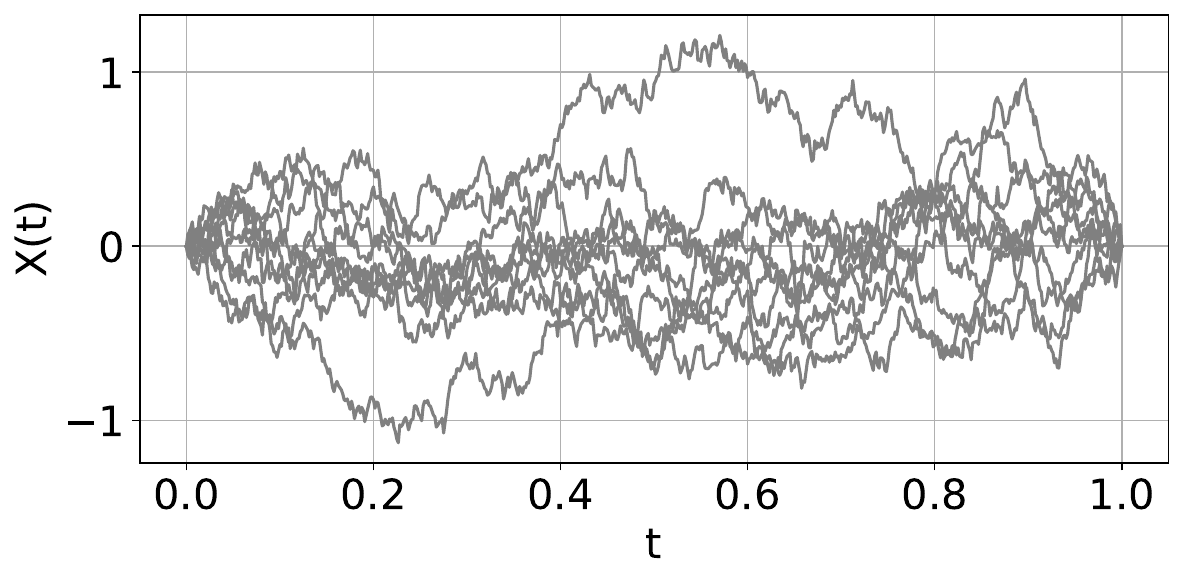}
    }
    \vspace{-2.em}
    \caption*{Avg value labeling: Predicted labels}
\end{minipage}
\hfill
\begin{minipage}[t]{0.6\textwidth}
    \centering    
    \parbox[c][\cellheight][c]{\textwidth}{
    \centering
    \begin{tabular}{lcccc}
    \hline
    Metric & Euc. & $s=0.01$ & $s = 2$ & $s=0.01$ weight.\\
    \hline
    Mean & 0.114 & 0.110 & 0.054 & 0.222\\
    Min & 0.011 & 0.011 & 0 & 0.044\\
    Max & 0.367& 0.361 & 0.178 & 0.494\\
    10\% & 0.039 & 0.038 & 0.022 & 0.122\\
    90\% & 0.238 & 0.237 & 0.106 & 0.361\\
    \hline
    \end{tabular}
    }
    \caption*{Average value labeling: Error statistics}
\end{minipage}

\begin{minipage}[t]{0.35\textwidth}
    \centering

    \parbox[c][\cellheight][c]{\textwidth}{
        \includegraphics[width=\textwidth]{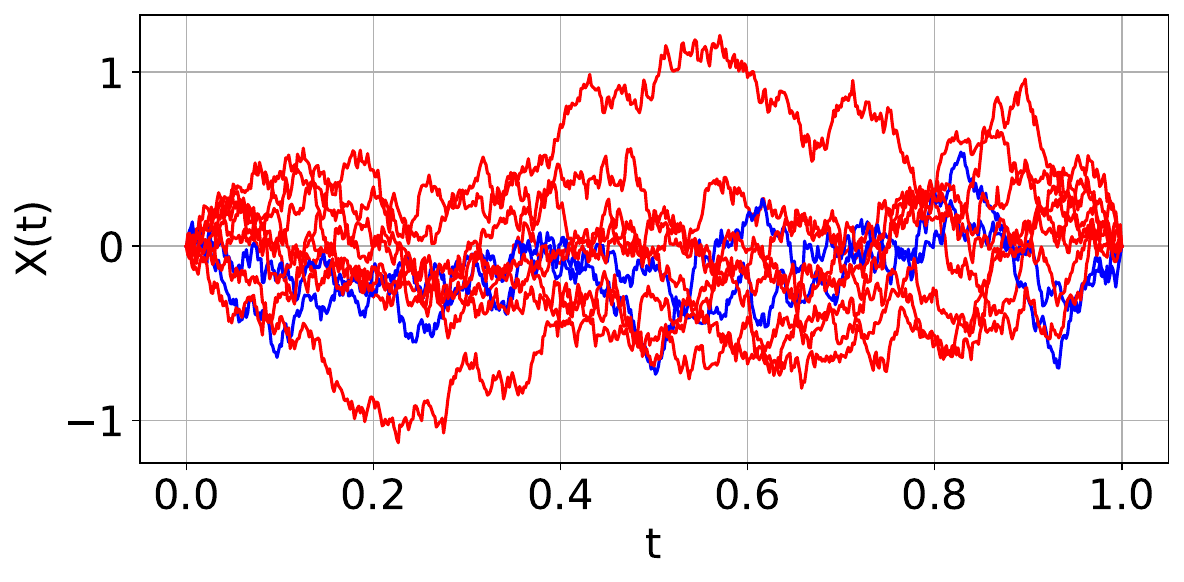}
    }
    \vspace{-2.em}
    \caption*{High freq  labeling: True labels}

    \parbox[c][\cellheight][c]{\textwidth}{
        \includegraphics[width=\textwidth]{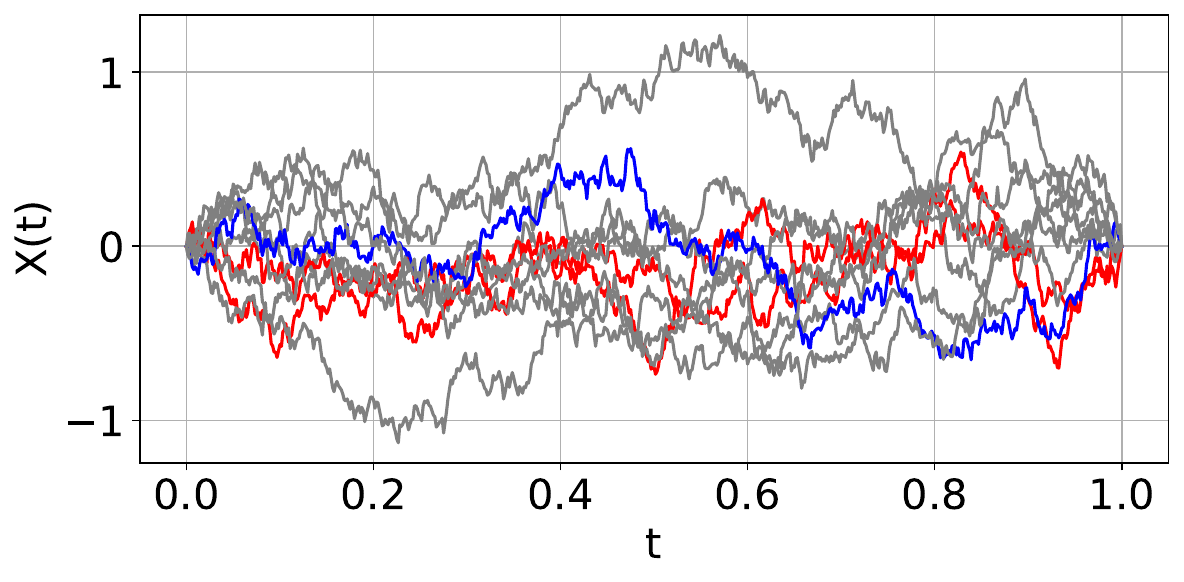}
    }
    \vspace{-2.em}
    \caption*{High freq  labeling: Predicted labels}
\end{minipage}
\hfill
\begin{minipage}[t]{0.6\textwidth}
    \centering    
    \parbox[c][\cellheight][c]{\textwidth}{
    \centering
    \begin{tabular}{lcccc}
    \hline
    Metric & Euc. & $s=0.01$ & $s = 2$ & $s=0.01$ weight.\\
    \hline
    Mean & 0.496 & 0.497 & 0.510 & 0.316\\
    Min & 0.417 & 0.411 & 0.444 & 0.1\\
    Max & 0.606 & 0.617 & 0.567 & 0.511\\
    10\% & 0.439 & 0.444 & 0.472 & 0.213\\
    90\% & 0.55 & 0.544 & 0.545 & 0.423\\
    \hline
    \end{tabular}
    }
    \caption*{High frequency  labeling: Error statistics}
\end{minipage}

\begin{minipage}[t]{0.35\textwidth}
    \centering

    \parbox[c][\cellheight][c]{\textwidth}{
        \includegraphics[width=\textwidth]{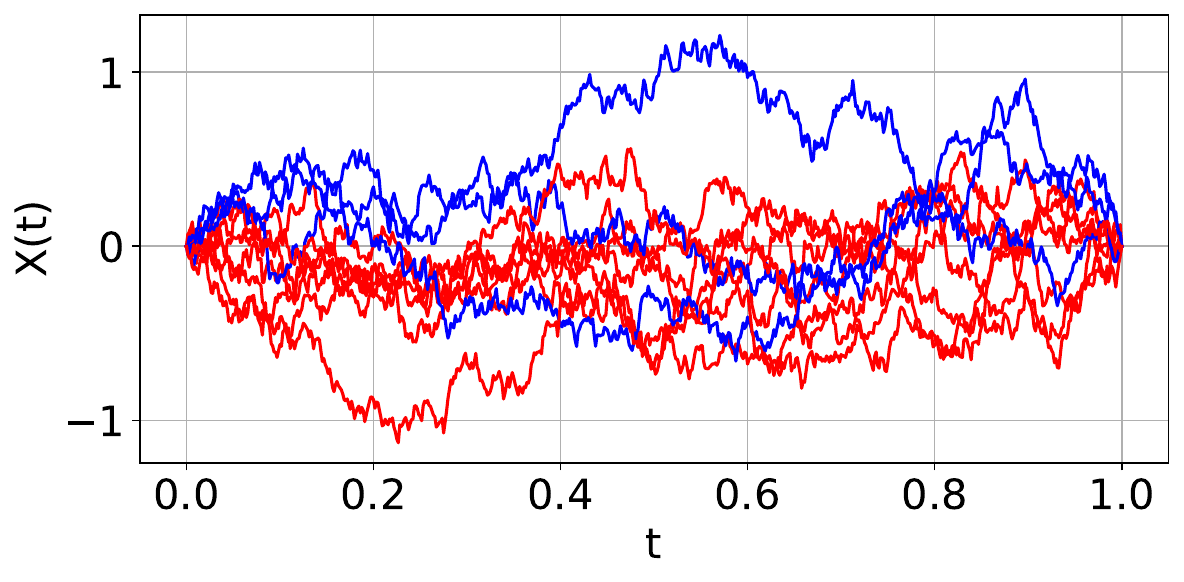}
    }
    \vspace{-2.em}
    \caption*{Max value labeling: True labels}

    \parbox[c][\cellheight][c]{\textwidth}{
        \includegraphics[width=\textwidth]{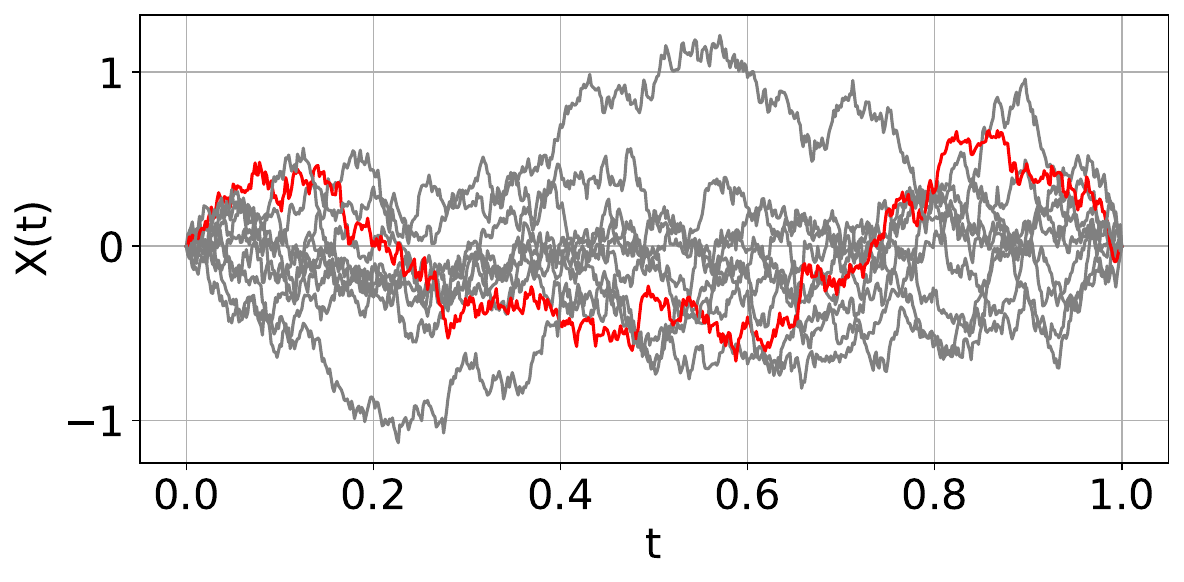}
    }
    \vspace{-2.em}
    \caption*{Max value labeling: Predicted labels}
\end{minipage}
\hfill
\begin{minipage}[t]{0.6\textwidth}
    \centering    
    \parbox[c][\cellheight][c]{\textwidth}{
    \centering
    \begin{tabular}{lcccc}
    \hline
    Metric & Euc. & $s=0.01$ & $s = 2$ & $s=0.01$ weight.\\
    \hline
    Mean & 0.213 & 0.210 & 0.155 & 0.310\\
    Min & 0.089 & 0.089 & 0.094 & 0.122\\
    Max & 0.467 & 0.467 & 0.261 & 0.467\\
    10\% & 0.128 & 0.122 & 0.111 & 0.192\\
    90\% & 0.339 & 0.334 & 0.184 & 0.445\\
    \hline
    \end{tabular}
    }
    \caption*{Maximum value labeling: Error statistics}
\end{minipage}

\vspace{0.6cm}

\vspace{-2.em}
\caption{Three experiments described in Section \ref{sec:Brownian_bridge}: true labels, error statistics computed using $50$ realizations, and visualization of the labeling error. The red and blue trajectories correspond to labels $-1$ and $+1$, respectively.}
\label{fig:3x3_results}
\end{figure}

\section{Conclusions} \label{sec:Conclusions}

In this paper we have established a pointwise limit for Laplace learning in infinite dimensions.
Our formulation is general in the sense that we give freedom in the choice of norm in the weights.
However, the pointwise limit does not imply convergence of the minimizers.
Before getting to that question a natural next step is to analyze the regularity of the continuum problem.
Regularity of elliptic PDEs in finite dimensions is well studied.
For example, in $d$ dimensions (for example $X\subset \bbR^d$ is open and bounded) we know that we need $p>d$ in order for the problem:
\[ \text{minimize } \int_X |\nabla u(x)|^p \, \dd \mu(x) \text{ s.t. } u(x_i) = y_i \text{ for all } i=1,\dots, N \]
to be well defined.
When $d=\infty$ it is not clear whether there is a well-posed problem for finite labels except when a Lipschitz assumption is imposed on $u$ and the above problem is reformulated accordingly (this is related to the so-called absolutely minimizing Lipschitz extensions which have been studied extensively in general metric spaces, e.g.~\cite{juutinen2002absolutely}).
It is also not clear how the choice of norm on $X$ will affect the regularity.
In finite dimensions all norms are equivalent, but this is not true in infinite dimensions which will lead to a norm-dependent regularity theory.

With a better handle of regularity one should be in a position to look for variational convergence.
This typically means a $\Gamma$-convergence result that would establish convergence of constrained minimizers to constrained minimizers of the continuum limit.
There are two steps in $\Gamma$-convergence: a liminf inequality and the existence of a recovery sequence.
If smooth functions over our Hilbert space $X$ are dense then the existence of a recovery sequence needs only be established over smooth functions, and therefore the result of this paper is likely to be sufficient.
The liminf inequality will require significant work to generalize from the finite-dimensional case.

Finally, the Gaussian assumption allowed us to perform many of the computations explicitly but it is of interest to consider the setting of a general data-generating measure with a density with respect to some Gaussian reference measure.

It's the goal of future work to address these open challenges.

\section*{Acknowledgments}

YK is grateful to the EPSRC (fellowship EP/V003615/2) which supported part of this research. 
MT is grateful for the support of the EPSRC Mathematical and Foundations of Artificial Intelligence Probabilistic AI Hub (grant agreement EP/Y007174/1) and the NHSBT award 177PATH25 ``Harnessing Computational Genomics to Optimise Blood Transfusion Safety and Efficacy''.
MT and ZZ are both supported by the Leverhulme Trust through the Project Award ``Robust Learning: Uncertainty Quantification, Sensitivity and Stability'' (grant agreement RPG-2024-051).
YK and MT would also like to thank the Isaac Newton Institute for Mathematical Sciences, Cambridge, for support and hospitality during the program Mathematics of deep learning (supported by EPSRC grant EP/R014604/1) where part of this work was discussed.

\bibliography{ref}{}
\bibliographystyle{plain}

\end{document}